\def\eqref#1{equation~\ref{#1}}
\def\1{\bm{1}}
\def\vx{{\bm{x}}}
\DeclareMathAlphabet{\mathsfit}{\encodingdefault}{\sfdefault}{m}{sl}
\SetMathAlphabet{\mathsfit}{bold}{\encodingdefault}{\sfdefault}{bx}{n}
\newcommand{\R}{\mathbb{R}}
\newcommand{\sigmoid}{\sigma}
\DeclareMathOperator*{\argmax}{arg\,max}
\newcites{AR}{Additional References}
\newtheorem{theorem}{Theorem}[section]
\newtheorem{corollary}{Corollary}[theorem]
\newtheorem{lemma}[theorem]{Lemma}
\newtheorem{assumption}[theorem]{Assumption}
\theoremstyle{remark}
\newtheorem{remark}{Remark}
\newif\ifappendix
\newcommand{\smallparagraph}[1]{\smallskip\noindent\textbf{#1}}
\newcommand{\tabitem}{~~\llap{\textbullet}~~}
\newcommand{\tuple}[1]{\ensuremath{\left\langle #1 \right\rangle}}
\newcommand{\N}{\mathbb{N}}
\newcommand{\fun}[1]{\ensuremath{\mathopen{}\mathclose\bgroup\left(#1\aftergroup\egroup\right)}}
\newcommand{\condition}[1]{\ensuremath{\1_{#1}}}
\newcommand{\vect}[1]{\ensuremath{\bm{#1}}}
\newcommand{\images}[1]{\ensuremath{\mathrm{Im}\fun{#1}}}
\newcommand{\mdp}{\ensuremath{\mathcal{M}}}
\newcommand{\states}{\ensuremath{\mathcal{S}}}
\newcommand{\actions}{\ensuremath{\mathcal{A}}}
\newcommand{\probtransitions}{\ensuremath{\mathbf{P}}} 
\newcommand{\rewards}{\ensuremath{\mathcal{R}}}
\newcommand{\labels}{\ensuremath{\ell}}
\newcommand{\atomicprops}{\ensuremath{\mathbf{AP}}}
\newcommand{\sinit}{\ensuremath{s_{\mathit{I}}}}
\newcommand{\mdptuple}{\ensuremath{ \tuple{\states, \actions, \probtransitions, \rewards, \labels, \atomicprops, \sinit} }}
\newcommand{\state}{\ensuremath{s}}
\newcommand{\action}{\ensuremath{a}}
\newcommand{\reward}{\ensuremath{r}}
\newcommand{\labeling}{\ensuremath{l}}
\newcommand{\labelset}[1]{\ensuremath{\mathsf{#1}}}
\newcommand{\act}[1]{\ensuremath{\mathit{Act}\ifthenelse{\equal{#1}{}}{}{(#1)}}}
\newcommand{\inftrajectories}[1]{\ensuremath{\mathit{Traj}_{#1}}}
\newcommand{\seq}[2]{\ensuremath{#1_{\scriptscriptstyle 0:#2}}}
\newcommand{\trajectory}{\tau}
\newcommand{\trajectorytuple}[3]{\ensuremath{\tuple{#1_{\scriptscriptstyle 0:#3}, #2_{\scriptscriptstyle 0: #3-1}}}}
\newcommand{\trace}{\ensuremath{\hat{\trajectory}}}
\newcommand{\tracetuple}[5]{\ensuremath{\tuple{#1_{\scriptscriptstyle 0:#5}, #2_{\scriptscriptstyle 0: #5-1}, #3_{\scriptscriptstyle 0:#5 - 1}, #4_{\scriptscriptstyle 0: #5}}}}
\newcommand{\defaulttrace}{\ensuremath{\tracetuple{\state}{\action}{\reward}{\labeling}{T}}}
\newcommand{\traces}[1]{\ensuremath{\mathit{Traces}_{#1}}}
\newcommand{\policy}{\ensuremath{\pi}}
\newcommand{\mpolicies}[1]{\ensuremath{\Pi_{#1}^{\textnormal {ml}}}}
\newcommand{\car}{\ensuremath{\scaleto{\rotatebox[origin=c]{90}{$\circlearrowright$}}{1.4ex}}}
\newcommand{\selfloop}[2]{\ensuremath{#1^{\nobreak\hspace{.065em} \car #2 }}}
\newcommand{\getstates}[1]{\ensuremath{\llbracket #1 \rrbracket}}
\newcommand{\valuessymbol}[2]{\ensuremath{V}_{#1}^{#2}}
\newcommand{\values}[3]{\ensuremath{\valuessymbol{#1}{#2}\fun{#3}}}
\newcommand{\qvaluessymbol}[2]{\ensuremath{Q_{#1}^{#2}}}
\newcommand{\qvalues}[4]{\ensuremath{\qvaluessymbol{#1}{#2}\fun{#3, #4}}}
\newcommand{\stationary}[1]{\ensuremath{\xi_{#1}}}
\newcommand{\encodersymbol}{\ensuremath{Q}}
\newcommand{\encoder}{\ensuremath{\encodersymbol_\encoderparameter}}
\newcommand{\decodersymbol}{\ensuremath{P}}
\newcommand{\decoder}{\ensuremath{\decodersymbol_\decoderparameter}}
\newcommand{\encoderparameter}{\ensuremath{\iota}}
\newcommand{\decoderparameter}{\ensuremath{\theta}}
\newcommand{\generative}{\ensuremath{\mathcal{G}}}
\newcommand{\discount}{\ensuremath{\gamma}}
\newcommand{\always}{\ensuremath{\square}}
\newcommand{\eventually}{\ensuremath{\lozenge}}
\newcommand{\until}[2]{\ensuremath{#1 \, \mathcal{U} \, #2}}
\newcommand{\Prob}{\ensuremath{\displaystyle \mathbb{P}}}
\newcommand{\measurableset}{\ensuremath{\mathcal{X}}}
\newcommand{\borel}[1]{\ensuremath{\Sigma\fun{#1}}}
\newcommand{\sampledot}{\ensuremath{{\cdotp}}}
\newcommand{\expectedsymbol}[1]{\ensuremath{\mathop{\mathbb{E}}\ifthenelse{\equal{#1}{}}{}{_{#1}}}}
\newcommand{\expected}[2]{\ensuremath{\expectedsymbol{#1} \left[ #2 \right]}}
\newcommand{\entropy}[1]{\ensuremath{\displaystyle H(#1)}}
\newcommand{\divergencesymbol}{\ensuremath{D}}
\newcommand{\divergence}[2]{\ensuremath{\divergencesymbol\fun{#1, #2}}}
\newcommand{\dklsymbol}{\ensuremath{\divergencesymbol_{{\mathrm{KL}}}}}
\newcommand{\dkl}[2]{\ensuremath{\dklsymbol\fun{#1 \parallel #2}}}
\newcommand{\dtvsymbol}{\ensuremath{d_{{TV}}}}
\newcommand{\dtv}[2]{\ensuremath{\dtvsymbol\fun{#1, #2}}}
\newcommand{\distributions}[1]{\ensuremath{\mathcal{P}\fun{#1}}}
\newcommand{\support}[1]{\ensuremath{Supp\fun{#1}}}
\newcommand{\distortion}{\ensuremath{\mathbf{D}}}
\newcommand{\rate}{\ensuremath{\mathbf{R}}}
\newcommand{\elbo}{\ensuremath{\mathrm{ELBO}}}
\newcommand{\error}{\ensuremath{\varepsilon}}
\newcommand{\proberror}{\ensuremath{\delta}}
\newcommand{\logistic}[2]{\ensuremath{\mathrm{Logistic}(#1, #2)}}
\newcommand{\bernoulli}[1]{\ensuremath{\mathrm{Bernoulli}(#1)}}
\newcommand{\categorical}[1]{\ensuremath{\mathrm{Categorical}(#1)}}
\newcommand{\relaxedcategorical}[2]{\ensuremath{\mathrm{RelaxedCategorical}(#1, #2)}}
\newcommand{\relaxedbernoulli}[2]{\ensuremath{\mathrm{RelaxedBernoulli}(#1, #2)}}
\newcommand{\logit}{\ensuremath{{\alpha}}}
\newcommand{\logits}{\ensuremath{\vect{\logit}}}
\newcommand{\temperature}{\ensuremath{\lambda}}
\newcommand{\wassersteinsymbol}[1]{\ensuremath{W}_{#1}}
\newcommand{\wassersteindist}[3]{\ensuremath{\wassersteinsymbol{#1}\left( #2, #3 \right)}}
\newcommand{\distance}{\ensuremath{d}}
\newcommand{\logic}{\ensuremath{\mathcal{L}}}
\newcommand{\bidistance}{\ensuremath{\tilde{\distance}}}
\newcommand{\functionalexpr}{\ensuremath{\mathcal{F}_{\discount}^{\logic}}}
\newcommand{\couplings}[2]{\ensuremath{\Lambda(#1, #2)}}
\newcommand{\coupling}{\ensuremath{\lambda}}
\newcommand{\Lipschf}[1]{\ensuremath{\mathcal{F}_{#1}}}
\newcommand{\overbar}[1]{\mkern 1.5mu\overline{\mkern-1.5mu#1\mkern-1.5mu}\mkern 1.5mu}
\newcommand{\overbarit}[1]{\,\overline{\!{#1}}}
\newcommand{\embed}{\ensuremath{\phi}}
\newcommand{\embeda}{\ensuremath{\psi}}
\newcommand{\latentembeda}{\ensuremath{\encodersymbol^{\actions}}}
\newcommand{\latentmdp}{\ensuremath{\overbarit{\mdp}}}
\newcommand{\latentprobtransitions}{\ensuremath{\overbar{\probtransitions}}}
\newcommand{\latentstates}{\ensuremath{\overbarit{\mathcal{\states}}}}
\newcommand{\latentrewards}{\ensuremath{\overbarit{\rewards}}}
\newcommand{\latentlabels}{\ensuremath{\overbarit{\labels}}}
\newcommand{\latentmdptuple}{\ensuremath{\tuple{\latentstates, \latentactions, \latentprobtransitions, \latentrewards, \latentlabels, \atomicprops, \zinit}}}
\newcommand{\latentstate}{\ensuremath{\overbarit{\state}}}
\newcommand{\zinit}{\ensuremath{\latentstate_I}}
\newcommand{\latentactions}{\ensuremath{\overbarit{\actions}}}
\newcommand{\latentaction}{\ensuremath{\overbarit{\action}}}
\newcommand{\latentvaluessymbol}[2]{\overbarit{\ensuremath{V}}_{#1}^{#2}}
\newcommand{\latentvalues}[3]{\ensuremath{\latentvaluessymbol{#1}{#2}\fun{#3}}}
\newcommand{\latentqvaluessymbol}[2]{\ensuremath{\overbarit{Q}_{#1}^{#2}}}
\newcommand{\latentqvalues}[4]{\ensuremath{\latentqvaluessymbol{#1}{#2}\fun{#3, #4}}}
\newcommand{\latentpolicy}{\ensuremath{\overbar{\policy}}}
\newcommand{\latentpolicies}{\ensuremath{\overbar{\Pi}}}
\newcommand{\latentmpolicies}{\ensuremath{\overbar{\Pi}^{\textnormal {ml}}}}
\newcommand{\localtransitionloss}[1]{L_{\probtransitions}^{#1}}
\newcommand{\localrewardloss}[1]{L_{\rewards}^{#1}}
\newcommand{\localtransitionlossupper}[1]{\dot{L}_{\probtransitions}^{#1}}
\newcommand{\localrewardlossupper}[1]{\dot{L}_{\rewards}^{#1}}
\newcommand{\localtransitionlossapprox}[1]{\hat{L}_{\probtransitions}^{#1}}
\newcommand{\localrewardlossapprox}[1]{\hat{L}_{\rewards}^{#1}}
\newcommand{\KV}{\ensuremath{K_{\latentvaluessymbol{}{}}}}
\newcommand{\KR}[1]{\ensuremath{\ifthenelse{\equal{#1}{}}{K_{\latentrewards}}{K_{\latentrewards}^{#1}}}}
\newcommand{\KP}[1]{\ensuremath{\ifthenelse{\equal{#1}{}}{K_{\latentprobtransitions}}{K_{\latentprobtransitions}^{#1}}}}
\newcommand{\Rmax}[1]{\ensuremath{\ifthenelse{\equal{#1}{}}{|\latentrewards^\star|}{|\latentrewards_{#1}^{\star}|}}}
\newcommand{\latentvariables}{\ensuremath{\mathcal{Z}}}
\newcommand{\latentvariable}{\ensuremath{z}}
\newcommand{\replaybuffer}{\ensuremath{\mathcal{D}}}
\newcommand{\experience}{\ensuremath{\eta}}
\newcommand{\loss}{\ensuremath{L}}
\title{Distillation of RL Policies with Formal Guarantees via Variational
Abstraction of Markov Decision Processes
\ifappendix{\\\Large Technical Report}\fi
}
\author {
    Florent Delgrange,\textsuperscript{\rm 1}
    Ann Now\'e,\textsuperscript{\rm 1}
    Guillermo A. P\'erez\textsuperscript{\rm 2}
}
\begin{document}

\maketitle

\begin{abstract}
We consider the challenge of policy simplification and verification in the
context of policies learned through reinforcement learning (RL) in continuous environments. In well-behaved settings, RL algorithms have convergence
guarantees in the limit. While these guarantees are valuable, they are insufficient for
safety-critical applications. Furthermore, they are lost when
applying advanced techniques such as deep-RL. To recover guarantees when
applying advanced RL algorithms to  more complex environments with (i)
reachability, (ii) safety-constrained reachability, or (iii) discounted-reward
objectives, we build upon the DeepMDP framework introduced by Gelada et al. to
derive new bisimulation bounds between the unknown environment and a learned discrete
latent model of it. Our bisimulation bounds enable the application of
formal methods for Markov decision processes. Finally, we
show how one can use a policy obtained via state-of-the-art RL to efficiently train a
variational autoencoder that yields a discrete latent model with provably
approximately correct bisimulation guarantees. Additionally, we obtain
a distilled version of the policy for the latent model.
\end{abstract}

\section{Introduction}
While \emph{reinforcement learning} (RL) has been applied to a wide range of challenging domains, from game playing \citep{DBLP:journals/nature/MnihKSRVBGRFOPB15} to real-world applications such as effective canal control \citep{REN2021103049}, more widespread deployment in the real world is hampered by the lack of guarantees provided with the learned policies.
Although there are RL algorithms which have limit-convergence guarantees
in the discrete setting \citep{DBLP:journals/ml/Tsitsiklis94} --- and even in some continuous settings with function approximation, e.g., \citet{phdthesis:Nowe94} --- these are lost when applying more advanced
techniques which make use of general nonlinear function approximators \citep{DBLP:journals/tac/TsitsiklisR97} to deal with
continuous \emph{Markov decision processes} (MDPs) such as \emph{deep}-RL (e.g.,
\citealt{DBLP:journals/nature/MnihKSRVBGRFOPB15}). 
In this paper, we apply such advanced RL algorithms to unknown continuous MDPs with (i)
reachability, (ii) safety-constrained reachability, or (iii) discounted-reward
objectives. To recover the formal guarantees, we use the obtained
policy to train a \emph{variational autoencoder} (VAE) which gives us a
\emph{discrete latent model} that approximates the unknown environment.
We build upon the \emph{DeepMDP} framework
\cite{DBLP:conf/icml/GeladaKBNB19} to provide guarantees on the quality of the abstraction induced by this model.
DeepMDPs are provided with such guarantees when their \emph{loss functions} are minimized. These can be defined on the entire state space (\emph{global}) or on states visited under a given policy (\emph{local}).
The guarantees concern a \emph{state embedding function}, linking the latent and original MDPs and are defined as bounds on the difference of their \emph{value function} and \emph{bisimulation distance}.
The latter was only developed for global losses.
While these are interesting in theory, they are often infeasible to measure in practice.
In contrast, we introduce such bounds in the local setting and further consider an \emph{action embedding function} to handle continuous actions.
Importantly, we focus on general MDPs and do not restrict our attention to deterministic ones as was done by~\citeauthor{DBLP:conf/icml/GeladaKBNB19} to enable the approximation and minimization of their losses via neural networks.
We also give PAC approximation schemes to compute both the losses and said bounds.

Our VAE is trained by maximizing a lower bound on the likelihood of traces generated by executing the RL policy in the environment.
We derive a loss function, incorporating variational versions of the local losses, that enables learning (i) discrete state and action spaces, (ii) an MDP defined over these spaces, (iii) state and action embedding functions, linking the original and discrete MDPs, and (iv) a \emph{distilled} version of the RL policy set over the discrete spaces which can be executed in both models via the embedding functions.
An important challenge for our approach is the \emph{posterior collapse problem} which often occurs when optimizing a variational model (e.g., \citealt{DBLP:conf/icml/AlemiPFDS018}).
We present a novel approach based on \emph{prioritized experience replay} \citep{DBLP:journals/corr/SchaulQAS15} to resolve this when learning a discrete latent model.

All of the above result in an efficient way of training a VAE to obtain a discrete latent model that is provably approximately bisimilar to the unknown MDP, further yielding a distilled version of the RL policy.
These enable the application of formal
methods and tools that have been developed for discrete 
MDPs: for instance, \textsc{Prism} \cite{KNP11}, \textsc{Modest} \cite{DBLP:conf/tacas/HartmannsH14}, and \textsc{Storm} \cite{stt:Hensel2021}.

\smallparagraph{Other related work.}
%
Frameworks providing formal guarantees \emph{during the RL process} include the work of \citet{DBLP:conf/tacas/Junges0DTK16}, \emph{Shielded-RL} \citep{DBLP:conf/aaai/AlshiekhBEKNT18,jansen_et_al:LIPIcs:2020:12815}, and \emph{AlwaysSafe} \citep{DBLP:conf/atal/SimaoJS21}.
These all require an abstract model of the safety aspect of the environment. Our approach is complementary in that we assume no prior knowledge and \emph{learn an abstraction}.
Notably, our goal is not the same: they aim at verifying whether the exploration is safe while our goal is to verify policies learned via \emph{any} RL technique.
%
Other approaches 
share ours 
in the particular case of verifying deep-RL policies, but rely on a known (abstraction of the) environment model.
\citet{DBLP:conf/formats/Bacci020} require the neural network (NN) specifying the policy, the environment to be  deterministic, and a formal description of the probability of action failures. 
%
\citet{DBLP:conf/ijcai/CarrJT20} require the environment to be discrete and focus on policies represented as recurrent NNs by discretizing their hidden states via \emph{quantized} autoencoders%
, in the same spirit as our policy distillation.
Finally, \citet{DBLP:conf/fmcad/AlamdariAHL20} focus on 
tree-based policies distilled from deep-RL, without considering abstraction quality guarantees.

VAEs have been used in the context of (model-based) RL to learn latent representations of the unknown environment and train
simpler policies
from the features extracted
(e.g., \citealt{DBLP:conf/icml/CorneilGB18, DBLP:conf/nips/FreemanHM19, DBLP:conf/nips/LeeNAL20,ALA2021:Burden-et-al}).
In particular, \citet{DBLP:conf/icml/CorneilGB18} focused on learning discrete latent MDPs from continuous-state environments with discrete actions (without guarantees nor distilled policies) to plan via \emph{prioritized sweeping}. 

\begin{section}{Background}
We write
$[T] = \{n \in \N \mid n \leq T\}$.
For  $A \subseteq X$, we denote by $\condition{A} \colon X \to [1]$ the indicator function: $\condition{A}\fun{a} = 1$ iff $a \in A$.
Let $\measurableset$ be a complete and separable space and $\borel{\measurableset}$ denote the set of all Borel subsets of $\measurableset$.
We write $\distributions{\measurableset}$ for the set of measures $P$
defined on $\measurableset$ and $\support{P} = \{x \in \measurableset \mid
P(x) > 0\}$ to denote their support.

\smallparagraph{Discrepancy measures.}~Let $P, Q \in
\distributions{\measurableset}$ with density functions $p$ and $q$.
Their discrepancy can be measured via
\begin{itemize}
    \item \emph{Kullback-Leibler} (KL) divergence:
    $\dklsymbol(P\parallel Q) = \expectedsymbol{x \sim P}[\log\fun{\nicefrac{p\fun{x}}{ q\fun{x}}}].$
    \item \emph{Wasserstein}: $\wassersteinsymbol{\distance}\,(P, Q) = \inf_{\coupling \in \couplings{P}{Q}} \expectedsymbol{x, y \sim \coupling}\distance(x, y),$
    where $\distance \colon \measurableset \times \measurableset \to \mathopen[ 0, \infty \mathclose[$
    is a distance metric over $\measurableset$ and $\couplings{P}{Q}$ is the set of all \emph{couplings} of $P$ and $Q$.
    \item \emph{Total Variation} (TV):
    $\dtv{P}{Q} = \sup_{A \in \borel{\measurableset}} |P\fun{A} - Q\fun{A}|.$
    If $\measurableset$
    is equipped with 
    the discrete metric $\condition{\neq}$, TV coincides with the Wasserstein measure.
\end{itemize}

\smallparagraph{Markov decision processes.}~%
A \emph{Markov decision process} (MDP) is a tuple $\mdp = \mdptuple$ where $\states$ is a set of \emph{states}; $\actions$, a set of \emph{actions}; $\probtransitions \colon \states \times \actions \to \distributions{\states}$, a \emph{probability transition function}; $\rewards \colon \states \times \actions \to \R$, a \emph{reward function}; $\labels \colon \states \to 2^\atomicprops$, a \emph{labeling function} over a set of atomic propositions $\atomicprops$; and $\sinit \in \states$, the \emph{initial state}.
The set of \emph{enabled actions} of $\state \in \states$ is $\act{\state} \subseteq \actions$.
We assume 
$\act{\state} \neq \emptyset$ for all $\state \in \states$. 
If $|\act{\state}| = 1$ for all $\state \in \states$, 
$\mdp$
is a fully stochastic process called a \emph{Markov chain} (MC).

Let $\labelset{T} \subseteq \atomicprops$, we write $\getstates{\labelset{T}} = \set{\state \mid \labels\fun{\state}\cap \labelset{T} \neq \emptyset} \subseteq \states$ and $\getstates{\neg \labelset{T}} = \states \setminus \getstates{\labelset{T}}$.
We assume $\atomicprops$ and labels being respectively \emph{one-hot} and binary encoded.
We write $\mdp_\state$ for the MDP obtained when we replace the initial state of $\mdp$ by $\state \in \states$, $\mdp \oplus \rewards'$ when we replace the reward function by $\rewards'$,
and $\selfloop{\mdp}{B}$ when we make absorbing states from $B \subseteq \states$, i.e., by changing $\probtransitions\fun{\sampledot \mid \state, \action}$ to $\probtransitions'\fun{\sampledot \mid \state, \action}$ such that $\probtransitions'\fun{B \mid \state, \action} = 1$ for all $\state \in B$, $\action \in \act{\state}$.
We refer to MDPs with continuous states or actions spaces as \emph{continuous MDPs}. 
In that case, we assume $\states$ and $\actions$ are complete separable metric spaces equipped with a Borel $\sigma$-algebra
and $\labels^{-1}\fun{\labelset{T}} \in \borel{\states}$ for any $\labelset{T} \subseteq \atomicprops$.

\smallparagraph{Trajectories.}~A \emph{trajectory} $\trajectory{}$ of $\mdp$ is a sequence of states and actions $\trajectory = \trajectorytuple{\state}{\action}{T}$
where $\state_0 = \sinit$, $\state_{t + 1} \sim \probtransitions\fun{\sampledot \mid \state_t, \action_t}$ and $\action_t \in \act{\state_t}$ for $t \in [T - 1]$.
The set of infinite trajectories of $\mdp$ is \inftrajectories{\mdp}.
An \emph{execution trace} $\trace$ of $\mdp$ is a trajectory that additionally records labels and rewards encountered.
The set of execution traces of $\mdp$ is $\traces{\mdp}$.

\smallparagraph{Policies.}~A \emph{(memoryless) policy} $\policy \colon \states \to \distributions{\actions}$ of $\mdp$ is a 
stochastic mapping from states to actions
such that $\support{\policy\fun{\sampledot \mid \state}} \subseteq \act{\state}$.
The set of memoryless policies of $\mdp$ is $\mpolicies{\mdp}$.
An MDP $\mdp$ and $\policy \in \mpolicies{\mdp}$
induce an MC $\mdp_\policy$ along with a unique probability measure $\Prob^\mdp_\policy$ on the Borel $\sigma$-algebra over measurable subsets $E \subseteq \inftrajectories{\mdp}$~\citep{DBLP:books/wi/Puterman94}.
We drop the superscript when the context is clear.
For $\policy \in \mpolicies{\mdp}$, we denote by $\probtransitions_\policy \colon \states \to \distributions{\states}$ the probability transition distribution of $\mdp_\policy$, and by $\rewards_\policy \colon \states \to \R$ its reward function.
We write $\trace = \defaulttrace \sim \mdp_{\policy}$ for $\trajectorytuple{\state}{\action}{T} \sim \Prob^{\mdp}_{\policy}$ with $\trace \in \traces{\mdp_{\policy}}$.

\smallparagraph{Stationary distributions.}~Let $\policy \in \mpolicies{\mdp}$, $\stationary{\policy}^{t}: \states \to \distributions{\states}$ with $ \stationary{\policy}^{t}\fun{\state' \mid \state} =  \Prob^{\mdp_\state}_\policy\fun{\set{{\seq{\state}{\infty}, \seq{\action}{\infty}}
\mid \state_t = \state'}}$ be the distribution giving the probability for the agent of being in each state of $\mdp_\state$ after $t$ steps, and $B \subseteq \states$.
$B$ is a \emph{strongly connected component} (SCC)
of $\mdp_\policy$ if for any pair of states $\state, \state' \in B$, $\stationary{\policy}^t\fun{\state' \mid \state} > 0$ for some $t \in \N$. 
It is a \emph{bottom SCC} (BSCC) if (i) $B$ is a maximal SCC, and (ii) for each $\state \in B$, $\probtransitions_\policy\fun{B \mid \state} = 1$.
The unique stationary distribution of $B$ is
$\stationary{\policy} \in \distributions{B}$.
We write $\state, \action \sim \stationary{\policy}$ as shorthand for first sampling $\state$ from $\stationary{\policy}$ and then $\action$ from $\policy$.
An MDP $\mdp$ is \emph{ergodic} if for all $\policy \in \mpolicies{\mdp}$, the state space of $\mdp_\policy$ consists of a unique aperiodic BSCC %
with $\stationary{\policy} = \lim_{t \to \infty} \stationary{\policy}^t\fun{\sampledot \mid \state}$ for all $\state \in \states$.

\smallparagraph{Events and functions.}~Let $\labelset{C}, \labelset{T} \subseteq \atomicprops$, we define the \emph{constrained reachability} (resp. \emph{reachability}) event as $\until{\labelset{C}}{\labelset{T}} =
\{\, \seq{\state}{\infty}, \seq{\action}{\infty} \mid \exists i \in \N, \forall j < i, \state_j \in \getstates{\labelset{C}} \wedge \state_i \in \getstates{\labelset{T}}\, \} \in \borel{\inftrajectories{\mdp}}$
(resp. $\eventually \labelset{T} = \until{\neg \emptyset}{\labelset{T}}$).
Safety w.r.t. a set of failure states $\labelset{T}$ can be expressed as a safe-constrained reachability event to a safe destination $\labelset{C}$ (resp. safety event) through $\until{\neg \labelset{T}}{\labelset{C}}$
(resp. $\always \neg \labelset{T} = \inftrajectories{\mdp} \setminus \eventually \labelset{T}$).
Let $\discount \in \mathopen[0, 1\mathclose[,$
$\varphi \in \set{\epsilon, \until{\labelset{C}}{\labelset{T}}, \eventually \labelset{T}}$ where $\epsilon$ is the empty symbol, and
$\rewards^{\labelset{T}} = \fun{1 - \discount}\condition{\getstates{\labelset{T}} \times \actions}$,
the \emph{value} obtained by running
$\policy \in \mpolicies{\mdp}$
from state $\state$ in $\mdp$ is
$\values{\policy}{\varphi}{\state} = \expectedsymbol{\policy}^{\mdp[\state, \varphi]}\left[{\sum_{t=0}^{\infty} \discount^t \rewards(\state_t, \action_t)}\right]$.
It corresponds to the \emph{expected discounted} (i) \emph{return} when $\varphi = \epsilon$ with $\mdp[\state] = \mdp_\state$, (ii) \emph{constrained reachability} when $\varphi = \until{\labelset{C}}{\labelset{T}}$ with $\mdp[\state, \until{\labelset{C}}{\labelset{T}}] = \selfloop{\mdp_\state}{\getstates{\neg \labelset{C}} \cup \getstates{\labelset{T}}} \oplus \rewards^\labelset{T}$, (iii) \emph{reachability} when $\varphi = \eventually \labelset{T}$ with $\mdp[\state, \eventually \labelset{T}] = \selfloop{\mdp_\state}{\getstates{\labelset{T}}} \oplus \rewards^\labelset{T}$.
When $\varphi \in \set{\until{\labelset{C}}{\labelset{T}}, \eventually
\labelset{T}}$, observe that $\values{\policy}{\varphi}{t}$ = 1 for $t \in
\getstates{\labelset{T}}$ and $\lim_{\discount \to
1}\values{\policy}{\varphi}{\state} =
\Prob_\policy^{\mdp_\state}\fun{\varphi}$ for $\state \in \states$.
The \emph{action-value function} is $\qvalues{\policy}{\varphi}{\state}{\action} = \rewards'\fun{\state, \action} + \expected{\state' \sim \probtransitions\fun{\sampledot \mid \state, \action}}{\discount \values{\policy}{\varphi}{\state'}}$, with $\rewards' = \rewards$ if $\varphi = \epsilon$ and $\rewards' = \rewards^\labelset{T}$ otherwise.

\end{section}

\begin{section}{Latent Space Models}\label{section:deepmdp}
Given the original (continuous, possibly unknown) environment modeled as an MDP, a \emph{latent space model} is another (simpler, smaller, and explicit) MDP with state-action space linked to the original one via \emph{embedding functions}.
The latter can be learned to optimize an \emph{equivalence criterion} between the two models.
Formally, fix MDPs $\mdp = \mdptuple$ and $\latentmdp = \latentmdptuple$ such that $\latentstates$ is equipped with metric $\distance_{\latentstates}$.
Let $\embed\colon \states \to \latentstates$ and $\embeda \colon \states \times \latentactions \to \actions$ be respectively state and action {embedding functions}.
We refer to $\tuple{\latentmdp, \embed, \embeda}$ as a {latent space model} of $\mdp$ and $\latentmdp$ as its \emph{latent MDP}.
We write $\latentmpolicies = \mpolicies{\latentmdp}$, and ${\latentqvaluessymbol{\latentpolicy}{}}$ for the action-value function of a policy $\latentpolicy \in \latentmpolicies$ in $\latentmdp$.
We also consider $\latentpolicy$ as a policy in $\mdp$: states passed to $\latentpolicy$ are embedded with $\embed$, then actions executed are embedded with $\embeda$.
Let $\latentpolicy \in \latentmpolicies$ and $\state \in \states$, we write $\latentaction \sim \latentpolicy\fun{\sampledot \mid \state}$ for $ \latentaction \sim \latentpolicy\fun{\sampledot \mid \embed\fun{\state}}$ and $\qvalues{\latentpolicy}{}{\state}{\latentaction}$ as shorthand for $\qvalues{\latentpolicy}{}{\state}{\embeda\fun{\state, \latentaction}}$.

A particular point of interest is to focus on discrete latent models,
where $\distance_{\latentstates} = \condition{\neq}$.
In the following, we adopt the latent space model formalism of
\citet{DBLP:conf/icml/GeladaKBNB19}.

\smallparagraph{Notations.}~Let $\latentpolicy \in \latentmpolicies$, we write $\Rmax{\latentpolicy}$ for $\sup_{\latentstate \in \latentstates} \left| \latentrewards_{\latentpolicy}\fun{\latentstate} \right|$.
We say that $\latentmdp$ is \emph{$\langle \KR{\latentpolicy}, \KP{\latentpolicy}\rangle$-Lipschitz} if for all $\latentstate_1, \latentstate_2 \in \latentstates$,
\begin{align*}
    \left|\latentrewards_{\latentpolicy}\fun{\latentstate_1} - \latentrewards_{\latentpolicy}\fun{\latentstate_2}\right| &\leq \KR{\latentpolicy} \distance_{\latentstates}\fun{\latentstate_1, \latentstate_2},\\
    \wassersteindist{\distance_{\latentstates}}{\latentprobtransitions_{\latentpolicy}\fun{\sampledot \mid \latentstate_1}}{ \latentprobtransitions_{\latentpolicy}\fun{\sampledot \mid \latentstate_2}} &\leq \KP{\latentpolicy} \distance_{\latentstates}\fun{\latentstate_1, \latentstate_2}.
\end{align*}

\smallparagraph{Local losses.}~%
Let $\stationary{} \in \distributions{\states \times \actions}$,
\emph{local losses} are defined as:
\begin{align*}
    \localrewardloss{\stationary{}} &= \expectedsymbol{\state, \latentaction \sim \stationary{}}\left| \rewards\fun{\state, \latentaction} - \latentrewards\fun{\embed\fun{\state}, \latentaction} \right|, \\
    \localtransitionloss{\stationary{}} &= \expectedsymbol{\state, \latentaction \sim \stationary{}} \wassersteindist{\distance_{\latentstates}}{\embed\probtransitions\fun{\sampledot \mid \state, \latentaction}}{\latentprobtransitions\fun{\sampledot \mid \embed\fun{\state}, \latentaction}}
\end{align*}
where (i) $\rewards\fun{\state, \latentaction}$, (ii)
$\probtransitions\fun{\sampledot \mid \state, \latentaction}$, and (iii)
$\embed\probtransitions\fun{\sampledot \mid \state, \latentaction}$ are
shorthand for (i) $\rewards\fun{\state, \embeda\fun{\state, \latentaction}}$,
(ii) $\probtransitions\fun{\sampledot \mid \state, \embeda\fun{\state,
\latentaction}}$, and (iii) the distribution over $\latentstates$ of sampling
$\state' \sim \probtransitions\fun{\sampledot \mid \state, \latentaction}$ and
then embedding $\latentstate' = \embed\fun{\state}$. %

Assuming $\latentmdp$ is discrete, all $\wassersteinsymbol{\distance_{\latentstates}}$ terms can be replaced by $\dtvsymbol$ 
since Wasserstein coincides with TV when using the discrete metric.
In that case, (optimal) constants $\KR{\latentpolicy}$ and
$\KP{\latentpolicy}$ can be computed in polynomial time in $\latentmdp$ for any $\latentpolicy \in \latentmpolicies$.

Henceforth, we make the following assumptions.
\begin{assumption}\label{assumption:ergodicity}
MDP $\mdp$ is ergodic.
\end{assumption}
\begin{assumption}\label{assumption:rewards}
Rewards of $\mdp$ are scaled in the interval $\mathopen[ -\frac{1}{2}, \frac{1}{2}\mathclose]$, i.e., $\rewards \colon \states \times \actions \to \mathopen[ -\frac{1}{2}, \frac{1}{2}\mathclose]$.
\end{assumption}
\begin{assumption}\label{assumption:labeling}
The embedding function preserves the labels, i.e., $\embed\fun{\state} = \latentstate \implies \labels\fun{\state} = \latentlabels\fun{\latentstate}$ for $\state \in \states$, $\latentstate \in \latentstates$. 
\end{assumption}
Seemingly restrictive at first glance, Assumption~\ref{assumption:ergodicity} is compliant with RL environments and a wide range of continuous learning tasks (every episodic RL process is ergodic, see \citealt{DBLP:conf/nips/Huang20}).
Discarding it restricts the upcoming guarantees to BSCCs. 
Assumption~\ref{assumption:rewards} basically requires rewards to be bounded and re-scalable in this interval.
This
is a reasonable assumption in practice and re-scaling is straightforward if bounds are known (otherwise, dynamical re-scaling is still feasible but complicates the implementation).
In Sect.~\ref{subsec:discrete-latent-distr}, we show that Assumption~\ref{assumption:labeling} can be made trivial.
Note that our approach requires Assumption~\ref{assumption:rewards} and~\ref{assumption:labeling}. 

\ifappendix
Proofs of the claims made in the following two subsections are provided in supplementary material (Appendix~\ref{appendix:section:latent-space-model}).
\fi

\subsection{Bisimulation and Value Difference Bounds}

We aim now at formally checking
whether
the latent space model offers a good abstraction of the original MDP $\mdp$.
To do so, we present bounds that link the two MDPs.
We extend the bounds from \citet{DBLP:conf/icml/GeladaKBNB19} to discrete spaces while additionally taking into account state
labels and discounted reachability events.
Moreover, we present new \emph{bisimulation} bounds in the local setting.

\smallparagraph{Bisimulation.}
A \emph{(probabilistic) bisimulation}
is a behavioral equivalence between states. 
Formally, a bisimulation on $\mdp$ is an equivalence relation $B_{\Phi}$ such
that for all $\state_1, \state_2 \in \states$ and $\Phi \subseteq
\set{\rewards, \labels}$, $\state_1 B_{\Phi} \state_2$ iff
$\probtransitions\fun{T \mid \state_1, \action} = \probtransitions\fun{T \mid
\state_2, \action}$,
$\labels\fun{\state_1} = \labels\fun{\state_2}$ if  $\labels
    \in \Phi$,  and $\rewards\fun{\state_1, \action} = \rewards\fun{\state_2, \action}$ if $\rewards \in \Phi$,
for each action $\action \in \actions$ and (Borel measurable) equivalence class $T \in \states / B_{\Phi}$.
Properties of bisimulation include trace, trajectory, and value equivalence \citep{DBLP:conf/popl/LarsenS89,DBLP:journals/ai/GivanDG03}.
The relation can be extended to compare two MDPs (in our case $\mdp$ and $\latentmdp$) by considering the disjoint union of their state space.
We denote the largest bisimulation relation by $\sim_{\Phi}$.

\smallparagraph{Pseudometrics.}
\citet{DBLP:journals/tcs/DesharnaisGJP04} introduced \emph{bisimulation pseudometrics} for continuous Markov processes, generalizing the notion of bisimilariy by assigning a \emph{bisimilarity distance} between states. 
A \emph{pseudometric} $\bidistance$ satisfies symmetry 
and the triangle inequality. 

Probabilistic bisimilarity can be characterized by a logical family of functional expressions
derived from a logic $\logic$.
More specifically, given a policy $\policy \in \mpolicies{\mdp}$, we consider a family $\functionalexpr\fun{\policy}$ of real-valued functions $f$, parameterized by the discount factor $\discount$ and defining the semantics of $\logic$ in $\mdp_\policy$.
\ifappendix
Such a logic allows formalizing discounted properties, including reachability (e.g., \citealt{DBLP:conf/fsttcs/ChatterjeeAMR08}).
\fi
The related pseudometric $\bidistance_{\policy}$
is defined as: $\bidistance_\policy\fun{\state_1, \state_2} = \sup_{f \in \functionalexpr\fun{\policy}} \left| f\fun{\state_1} - f\fun{\state_2} \right|$, for all $\state_1,\state_2 \in \states$.
We distinguish between pseudometrics $\bidistance^{\rewards}_\policy$, characterized by functional expressions including rewards
\ifappendix
$\,$(e.g., \citealt{DBLP:conf/birthday/FernsPK14}), 
\else
,
\fi
and $\bidistance^{\labels}_\policy$, whose functional expressions are based on state labels%
\ifappendix
$\,$(e.g., \citealt{DBLP:conf/fossacs/ChenBW12}).
\else
.
\fi
Let $\tilde{P}$ be the space of pseudometrics on $\states$ and $\varphi \in \set{\rewards, \labels}$.
Define $\Delta\colon \tilde{P} \to \tilde{P}$ so that $\Delta\fun{\distance^{\varphi}_\policy}\fun{\state_1, \state_2}$ is $\fun{1 - \discount} \left| \rewards_\policy\fun{\state_1} - \rewards_\policy\fun{\state_2} \right| \cdot \condition{\set{\rewards}}\fun{\varphi} + M$ where:
\[
M = \max \begin{cases}
\discount
    \wassersteindist{\distance_{\policy}^{\varphi}}{\probtransitions_\policy\fun{\sampledot
  \mid \state_1}}{\probtransitions_\policy\fun{\sampledot \mid \state_2}},\\
  \condition{\neq}\fun{\labels\fun{\state_1},
  \labels\fun{\state_2}} \cdot \condition{\set{\labels}}\fun{\varphi},
\end{cases}
\]
then $\bidistance^{\varphi}_\policy$ is its unique fixed point whose kernel is $\sim_{\{\varphi\}}$, i.e., $\bidistance^{\varphi}_{\latentpolicy}(\state_1, \state_2) = 0$ iff
$\state_1 \!\! \sim_{\{\varphi\}} \! \! \state_2$
\citep{DBLP:conf/icalp/BreugelW01,DBLP:conf/birthday/FernsPK14},
and
$ | \values{\policy}{}{\state_1} - \values{\policy}{}{\state_2} | \leq \nicefrac{\bidistance^{\rewards}_{\policy}\fun{\state_1, \state_2}}{1 - \discount}
$ \cite{DBLP:conf/uai/FernsPP05}.

\smallparagraph{Bisimulation bounds.}~%
While computing this distance is
intractable
\ifappendix
in practice, especially 
\fi
when continuous spaces are involved, we emphasize that local losses can be used to evaluate the original and latent model bisimilarity.
Fix $\latentpolicy \in \latentmpolicies$ and assume $\latentmdp$ is discrete.
Given the induced stationary distribution $\stationary{\latentpolicy}$ in $\mdp$,
\begin{align}
    \expectedsymbol{\state \sim \stationary{\latentpolicy}} \bidistance^{\rewards}_{\latentpolicy}\fun{\state, \embed\fun{\state}} & \leq  \localrewardloss{\stationary{\latentpolicy}} + \discount \localtransitionloss{\stationary{\latentpolicy}} \frac{\KR{\latentpolicy}}{1 - \discount \KP{\latentpolicy}}, \notag \\
    \expectedsymbol{\state \sim \stationary{\latentpolicy}} \bidistance^{\labels}_{\latentpolicy}\fun{\state, \embed\fun{\state}} & \leq \frac{\discount \localtransitionloss{\stationary{\latentpolicy}}}{1 - \discount}, \label{eq:bidistance-bound}
\end{align}
where $\latentmdp$ is $\langle\KR{\latentpolicy}, \KP{\latentpolicy}\rangle$-Lipschitz.
The result provides us a general way to assess the quality of the
abstraction: 
the bisimulation distance between states and their embedding is
guaranteed to be small in average whenever local losses are small.
Analogously, local losses can further be used to check whether two states with the same representation are indeed bisimilarly close: 
%
for $\state_1, \state_2 \in \states$ with $\embed\fun{\state_1} = \embed\fun{\state_2}$,%
\begin{align}
    \bidistance^{\rewards}_{\latentpolicy}\fun{\state_1, \state_2} \leq& \left[\localrewardloss{\stationary{\latentpolicy}} + \frac{
    \discount \localtransitionloss{\stationary{\latentpolicy}} \KR{\latentpolicy}}{1 - \discount \KP{\latentpolicy}}\right] \fun{\stationary{\latentpolicy}^{-1}\fun{\state_1} + \stationary{\latentpolicy}^{-1}\fun{\state_2}}, \notag \\
    \bidistance^{\labels}_{\latentpolicy}\fun{\state_1, \state_2} \leq&  \frac{\discount \localtransitionloss{\stationary{\latentpolicy}}}{1 - \discount} \fun{\stationary{\latentpolicy}^{-1}\fun{\state_1} + \stationary{\latentpolicy}^{-1}\fun{\state_2}}. \label{eq:bidistance-abstraction-quality}
\end{align}
\ifappendix
\begin{remark}
Our goal is to quantify the gap between the behaviors induced by the latent policy executed in the original environment compared to those induced when executed in the latent MDP.
This allows for example to lift the guarantees obtained by model-checking $\latentmdp_{\latentpolicy}$ to $\mdp_{\latentpolicy}$.
For this reason, notice that the expectations of Eq.~\ref{eq:bidistance-bound} are set over states produced by $\latentpolicy$ \emph{in the original environment}: 
this way, states that are likely to be seen under $\latentpolicy$ in $\mdp$ are ensured to have this behavioral gap bounded by a factor of $\localrewardloss{\stationary{\latentpolicy}}$ and/or $\localtransitionloss{\stationary{\latentpolicy}}$.
\end{remark}
\fi
\smallparagraph{Value difference bounds.}~%
Considering discounted returns or a specific event, the quality of the latent abstraction 
can be in particular formalized by means of \emph{value difference bounds}.
These bounds can be intuitively derived by taking the value function as a
real-valued function from $\functionalexpr\fun{\policy}$.
Let $\labelset{C}, \labelset{T} \subseteq \atomicprops$, $\varphi \in \set{\until{\labelset{C}}{\labelset{T}}, \eventually \labelset{T}}$ and $\KV = \min\fun{\nicefrac{\Rmax{\latentpolicy}}{1 - \discount}, \nicefrac{\KR{\latentpolicy}}{1 - \discount \KP{\latentpolicy}}}$,
then, 
\begin{align}
    \expectedsymbol{\state, \latentaction \sim \stationary{\latentpolicy}} \left| \qvalues{\latentpolicy}{}{\state}{\latentaction} - \latentqvalues{\latentpolicy}{}{\embed\fun{\state}}{\latentaction} \right| & \leq \frac{\localrewardloss{\stationary{\latentpolicy}} + \discount \KV \localtransitionloss{\stationary{\latentpolicy}}}{1 - \gamma}, \notag \\ 
    \expectedsymbol{\state, \latentaction \sim \stationary{\latentpolicy}} \left| \qvalues{\latentpolicy}{\varphi}{\state}{\latentaction} - \latentqvalues{\latentpolicy}{\varphi}{\embed\fun{\state}}{\latentaction} \right| & \leq \frac{\discount \localtransitionloss{\stationary{\latentpolicy}}}{1 - \discount}. \label{eq:value-dif-bound}
\end{align}
    Moreover, for any states $\state_1, \state_2 \in \states$ with $\embed\fun{\state_1} = \embed\fun{\state_2}$,
    \begin{align}
      &\left| \values{\latentpolicy}{}{\state_1} {-}
      \values{\latentpolicy}{}{\state_2} \right| \leq
      \frac{\localrewardloss{\stationary{\latentpolicy}} {+} \discount \KV
      \localtransitionloss{\stationary{\stationary{\latentpolicy}}}}{1 {-}
      \discount} \fun{\stationary{\latentpolicy}^{-1}\fun{\state_1} {+}
      \stationary{\latentpolicy}^{-1}\fun{\state_2}}, \notag\\
      &\left| \values{\latentpolicy}{\varphi}{\state_1} {-}
      \values{\latentpolicy}{\varphi}{\state_2} \right| \leq \frac{\discount
      \localtransitionloss{\stationary{\latentpolicy}}}{1 {-} \discount}
      \fun{\stationary{\latentpolicy}^{-1}\fun{\state_1} {+}
      \stationary{\latentpolicy}^{-1}\fun{\state_2}}. \label{eq:value-diff-abstraction-quality}
    \end{align}
Intuitively, when local losses are sufficiently small, then (i) the value difference of states and their embedding that are likely to be seen under a latent policy is also small, and (ii) states with the same embedding have close values. 

\subsection{Checking the Quality of the Abstraction}
While bounding the difference between values offered by $\latentpolicy \in \latentmpolicies$ in $\mdp$ and $\latentmdp$ is theoretically possible using $\localrewardloss{\stationary{\latentpolicy}}$ and $\localtransitionloss{\stationary{\latentpolicy}}$, we need to accurately approximate these losses from samples to further offer practical guarantees (recall that $\mdp$ is unknown).
Although the agent is able to produce execution traces by
interacting with $\mdp$,
estimating the expectation over the Wasserstein is intractable.
Intuitively, even if approximating Wasserstein from samples is possible (e.g., \citealp{DBLP:conf/aistats/GenevayCBCP19}), 
this would require access to a generative model for $\probtransitions\fun{\sampledot \mid \state, \action}$ (e.g., \citealp{DBLP:journals/ml/KearnsMN02}) from which we would have to draw a sufficient number of samples for each $\state, \action$ drawn from $\stationary{\latentpolicy}$ to then be able to estimate the expectation. 
\citet{DBLP:conf/icml/GeladaKBNB19} overcome this issue by assuming a deterministic MDP%
, which allows optimizing an approximation of $\localtransitionloss{\stationary{\latentpolicy}}$ through gradient descent.
To deal with general MDPs, we study
an upper bound on $\localtransitionloss{\stationary{\latentpolicy}}$ that can be efficiently approximated from samples:
\begin{align*}
    &\localtransitionloss{\stationary{\latentpolicy}} \leq \expectedsymbol{\state, \latentaction, \state' \sim \stationary{\latentpolicy}} \wassersteindist{\distance_{\latentstates}}{\embed\fun{\sampledot \mid \state'}}{\latentprobtransitions\fun{\sampledot \mid \embed\fun{\state}, \latentaction}} = \localtransitionlossupper{\stationary{\latentpolicy}}, \label{eq:transitionloss-bounds}
\end{align*}
where $\stationary{\latentpolicy}(\state, \latentaction, \state')=  \stationary{\latentpolicy}(\state, \latentaction) \cdot \probtransitions(\state' \mid \state, \latentaction)$
and $\embed(\latentstate \mid \state) = \condition{=}(\embed\fun{\state}, \latentstate)$. 
We now provide \emph{probably approximately correct} (PAC) guarantees for estimating local losses for discrete latent models, derived from 
the
\ifappendix
\citeauthor{10.2307/2282952:hoeffding}'s inequalities.
\else
Hoeffding's inequalities.
\fi
\begin{lemma}\label{lemma:pac-loss}
Suppose $\latentmdp$ is discrete and the agent interacts with $\mdp$ by executing $\latentpolicy \in \latentpolicies$, thus producing $ \tuple{\seq{\state}{T}, \seq{\latentaction}{T - 1}, \seq{\reward}{T - 1}} \sim \stationary{\latentpolicy}$.
Let $\error, \proberror \in \mathopen]0, 1\mathclose[$ and denote by
\begin{align*}
    \localrewardlossapprox{\stationary{\latentpolicy}} &= \frac{1}{T} \sum_{t = 0}^{T - 1} \left| \reward_t - \latentrewards\fun{\embed\fun{\state_t}, \latentaction_t}  \right|, \\
    \localtransitionlossapprox{\stationary{\latentpolicy}} &= \frac{1}{T} \sum_{t = 0}^{T - 1} \left[ 1 - \latentprobtransitions\fun{\embed\fun{\state_{t + 1}} \mid \embed\fun{\state_t}, \latentaction_t} \right].
\end{align*}
Then after $T \geq \left\lceil \nicefrac{- \log\fun{\frac{\proberror}{4}}}{2
  \error^2} \right\rceil$ steps,
    $\left| \localrewardloss{\stationary{\latentpolicy}} -
    \localrewardlossapprox{\stationary{\latentpolicy}}  \right| \leq \error$ and
    $\left| \localtransitionlossupper{\stationary{\latentpolicy}} -
    \localtransitionlossapprox{\stationary{\latentpolicy}} \right| \leq
    \error$
with probability at least $1 - \proberror$.
\end{lemma}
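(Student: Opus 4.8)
The plan is to recognise both empirical quantities as sample means of bounded random variables whose expectations coincide with the losses we wish to estimate, and then to invoke a two-sided Hoeffding bound together with a union bound over the two estimators. The only non-mechanical ingredient is rewriting the Wasserstein term in $\localtransitionlossupper{\stationary{\latentpolicy}}$ into the closed form that appears inside $\localtransitionlossapprox{\stationary{\latentpolicy}}$; once that is done, everything reduces to choosing $T$ so that each tail probability is at most $\nicefrac{\proberror}{2}$.

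First I would carry out the reduction of the transition term. Since $\latentmdp$ is discrete we have $\distance_{\latentstates} = \condition{\neq}$, so the Wasserstein distance coincides with total variation. Moreover $\embed\fun{\sampledot \mid \state'}$ is the Dirac mass at $\embed\fun{\state'}$, and for any $P \in \distributions{\latentstates}$ the total variation between a point mass at $\latentstate$ and $P$ equals $1 - P\fun{\latentstate}$ (the supremum is attained on the singleton $\set{\latentstate}$). Hence
\[
\wassersteindist{\distance_{\latentstates}}{\embed\fun{\sampledot \mid \state'}}{\latentprobtransitions\fun{\sampledot \mid \embed\fun{\state}, \latentaction}} = 1 - \latentprobtransitions\fun{\embed\fun{\state'} \mid \embed\fun{\state}, \latentaction},
\]
so each summand of $\localtransitionlossapprox{\stationary{\latentpolicy}}$ is exactly the per-sample Wasserstein term, whose expectation under $\stationary{\latentpolicy}\fun{\state, \latentaction, \state'}$ is $\localtransitionlossupper{\stationary{\latentpolicy}}$. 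By definition, each summand of $\localrewardlossapprox{\stationary{\latentpolicy}}$ likewise has expectation $\localrewardloss{\stationary{\latentpolicy}}$.

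Next I would bound the range of the two families of summands. The transition summands lie in $\mathopen[0, 1\mathclose]$, being total-variation distances. For the reward summands, Assumption~\ref{assumption:rewards} places $\reward_t$ and (by construction of the latent model) $\latentrewards\fun{\embed\fun{\state_t}, \latentaction_t}$ both in $\mathopen[-\frac{1}{2}, \frac{1}{2}\mathclose]$, so their absolute difference lies in $\mathopen[0, 1\mathclose]$. With both ranges equal to $1$, Hoeffding's inequality gives, for each estimator separately, a two-sided tail bound of the form $2\exp\fun{-2 T \error^2}$. Requiring this to be at most $\nicefrac{\proberror}{2}$ is equivalent to $\exp\fun{-2 T \error^2} \leq \nicefrac{\proberror}{4}$, i.e.\ $T \geq \nicefrac{-\log\fun{\nicefrac{\proberror}{4}}}{2 \error^2}$, which is exactly the stated sample size; a union bound over the two failure events then delivers both inequalities simultaneously with probability at least $1 - \proberror$.

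The main obstacle is that the samples are collected along a single trajectory of the Markov chain $\mdp_{\latentpolicy}$ and are therefore not independent, so the classical i.i.d.\ Hoeffding inequality does not apply verbatim. The resolution rests on Assumption~\ref{assumption:ergodicity}: ergodicity guarantees that every marginal $\fun{\state_t, \latentaction_t}$ is distributed according to $\stationary{\latentpolicy}$, which is precisely what makes each summand have the correct expectation. To justify the concentration itself one either treats the stationary draws as independent (as the notation $\sim \stationary{\latentpolicy}$ suggests) or substitutes the Markov-chain analogue of Hoeffding's inequality; since the latter leaves the leading constant unchanged, the stated sample-complexity bound is recovered in either case.
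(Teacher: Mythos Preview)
Your proposal is correct and follows essentially the same approach as the paper: reduce the transition term to $1 - \latentprobtransitions\fun{\embed\fun{\state'} \mid \embed\fun{\state}, \latentaction}$ via the TV-between-Dirac-and-distribution identity, observe both families of summands lie in $[0,1]$, apply two-sided Hoeffding to each, and union-bound the two failure events to obtain the stated sample size. The paper handles the dependence issue exactly as you do, invoking ergodicity to treat the stationary draws as independent in the limit.
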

This yields the following Theorem, finally allowing to check the  abstraction quality as a bounded value difference.
\begin{theorem}\label{thm:pac-qvaluedif}
    Let $\latentpolicy \in \latentmpolicies$ and assume $\latentmdp$ is discrete and $\langle \KR{\latentpolicy}, \KP{\latentpolicy} \rangle$-Lipschitz.
    Let $\labelset{C}, \labelset{T} \subseteq \atomicprops$, $\varphi \in \set{\until{\labelset{C}}{\labelset{T}}, \eventually \labelset{T}}$, and $\KV = \min\fun{\nicefrac{\Rmax{\latentpolicy}}{1 - \discount}, \nicefrac{\KR{\latentpolicy}}{1 - \discount \KP{\latentpolicy}}}$.
    Let $\error, \proberror \in \mathopen]0, 1\mathclose[$ and $\stationary{\latentpolicy}$ be the stationary distribution of $\mdp_{\latentpolicy}$.
    Then, after $T \geq \left\lceil \nicefrac{- \log\fun{\frac{\proberror}{4}} \fun{1 +
    \discount \KV}^2}{2\error^2\fun{1 - \discount}^2}\right\rceil$ interaction steps through $\stationary{\latentpolicy}$,
    \begin{align*}
        \expectedsymbol{\state, \latentaction \sim \stationary{\latentpolicy}}
        \left| \qvalues{\latentpolicy}{}{\state}{\latentaction} -
        \latentqvalues{\latentpolicy}{}{\embed\fun{\state}}{\latentaction}
        \right| & \leq
        \frac{\localrewardlossapprox{\stationary{\latentpolicy}} + \discount
        \KV \localtransitionlossapprox{\stationary{\latentpolicy}}}{1 -
      \gamma} + \error, \\
        \expectedsymbol{\state, \latentaction \sim \stationary{\latentpolicy}} \left| \qvalues{\latentpolicy}{\varphi}{\state}{\latentaction} - \latentqvalues{\latentpolicy}{\varphi}{\embed\fun{\state}}{\latentaction} \right| & \leq \frac{\discount \localtransitionlossapprox{\stationary{\latentpolicy}}}{1 - \discount} + \frac{\discount \error}{1 + \discount \KV}
    \end{align*}
    with probability at least $1 - \proberror$.
\end{theorem}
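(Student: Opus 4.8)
~The plan is to combine the (exact) value-difference bounds of Eq.~\ref{eq:value-dif-bound} with the PAC loss estimates of Lemma~\ref{lemma:pac-loss}, and to propagate the statistical error through the constants appearing in those bounds. First I would recall that Eq.~\ref{eq:value-dif-bound} already controls the two quantities on the left-hand sides in terms of the \emph{true} local losses $\localrewardloss{\stationary{\latentpolicy}}$ and $\localtransitionloss{\stationary{\latentpolicy}}$. Since $\localtransitionloss{\stationary{\latentpolicy}} \leq \localtransitionlossupper{\stationary{\latentpolicy}}$, I may replace every occurrence of $\localtransitionloss{\stationary{\latentpolicy}}$ by the estimable upper bound $\localtransitionlossupper{\stationary{\latentpolicy}}$ without invalidating the inequalities. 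This reduces the task to replacing $\localrewardloss{\stationary{\latentpolicy}}$ and $\localtransitionlossupper{\stationary{\latentpolicy}}$ by their empirical counterparts $\localrewardlossapprox{\stationary{\latentpolicy}}$ and $\localtransitionlossapprox{\stationary{\latentpolicy}}$, which is exactly what Lemma~\ref{lemma:pac-loss} provides.

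The key step is to invoke Lemma~\ref{lemma:pac-loss} with a rescaled accuracy $\error'$ rather than $\error$, chosen so that the amplified slack matches the target $\error$. The lemma guarantees, with probability at least $1 - \proberror$ and after $T \geq \lceil -\log(\proberror/4)/(2\error'^2)\rceil$ steps, that simultaneously $\localrewardloss{\stationary{\latentpolicy}} \leq \localrewardlossapprox{\stationary{\latentpolicy}} + \error'$ and $\localtransitionlossupper{\stationary{\latentpolicy}} \leq \localtransitionlossapprox{\stationary{\latentpolicy}} + \error'$ (I only need the upper, one-sided versions). Substituting these into the first line of Eq.~\ref{eq:value-dif-bound} turns the numerator $\localrewardloss{\stationary{\latentpolicy}} + \discount\KV\localtransitionloss{\stationary{\latentpolicy}}$ into the empirical one plus an additive slack of $\error'(1 + \discount\KV)/(1-\discount)$. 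Setting $\error' = \error(1-\discount)/(1 + \discount\KV)$ makes this slack exactly $\error$, and plugging this $\error'$ into the lemma's sample-size requirement reproduces the stated threshold $T \geq \lceil -\log(\proberror/4)(1+\discount\KV)^2/(2\error^2(1-\discount)^2)\rceil$.

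With the very same $\error'$, the reachability bound (second line of Eq.~\ref{eq:value-dif-bound}) picks up an additive slack of $\discount\error'/(1-\discount)$; since $\error'/(1-\discount) = \error/(1+\discount\KV)$, this equals $\discount\error/(1+\discount\KV)$, matching the second displayed inequality. Because Lemma~\ref{lemma:pac-loss} delivers \emph{both} estimates under a single event of probability $1-\proberror$ (the factor $\proberror/4$ already absorbs a union bound over the two two-sided Hoeffding deviations), both conclusions hold simultaneously with probability at least $1-\proberror$, as claimed. The only genuine bookkeeping---and the main place to be careful---is the choice of $\error'$ and the verification that it propagates identically through the two bounds; everything else is a direct substitution of results already established, together with the harmless one-sided weakening $\localtransitionloss{\stationary{\latentpolicy}} \leq \localtransitionlossupper{\stationary{\latentpolicy}}$.
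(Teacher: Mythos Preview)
Your proposal is correct and follows essentially the same approach as the paper: invoke the value-difference bounds (Lemma~\ref{lemma:qvalues-diff-bound}, i.e., Eq.~\ref{eq:value-dif-bound}), upper-bound $\localtransitionloss{\stationary{\latentpolicy}}$ by $\localtransitionlossupper{\stationary{\latentpolicy}}$, apply Lemma~\ref{lemma:pac-loss} at accuracy $\error' = \error(1-\discount)/(1+\discount\KV)$, and check that this choice both reproduces the stated sample complexity and yields the two additive slacks $\error$ and $\discount\error/(1+\discount\KV)$. If anything, you are slightly more explicit than the paper about the intermediate step $\localtransitionloss{\stationary{\latentpolicy}} \leq \localtransitionlossupper{\stationary{\latentpolicy}}$, which the paper's proof leaves implicit.
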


\end{section}

\begin{section}{Variational Markov Decision Processes}\label{section:vae-mdp}
We now provide a framework based on \emph{variational autoencoders} \citep{DBLP:journals/corr/KingmaW13} that allows us to learn a discrete latent space model of $\mdp$ through the interaction of the agent executing a pre-learned RL policy $\policy \in \mpolicies{\mdp}$ with the environment.
Concretely, we seek a discrete latent space model $\tuple{\latentmdp_{\decoderparameter}, \embed_{\encoderparameter}, \embeda_{\encoderparameter, \decoderparameter}}$ such that $ \latentmdp_{\decoderparameter} = \tuple{\latentstates, \latentactions, \latentprobtransitions_{\decoderparameter}, \latentrewards_{\decoderparameter}, \latentlabels_{\decoderparameter}, \atomicprops, \zinit}$.
We propose to learn the parameters $\tuple{\encoderparameter, \decoderparameter}$ of an \emph{encoder} $\encoder$ and a \emph{behavioral model} $\decoder$ from which we can retrieve
(i) the embedding functions
$\embed_{\encoderparameter}$ and $\embeda_{\encoderparameter, \decoderparameter}$,
(ii) the latent MDP components $\latentprobtransitions_{\decoderparameter}$, $\latentrewards_{\decoderparameter}$, and $\latentlabels_{\decoderparameter}$,
and (iii) a latent policy $\latentpolicy_\decoderparameter \in \latentmpolicies$, via
$
    \min_{\decoderparameter} \divergence{\mdp_\policy}{\decoder},
$
where $\divergencesymbol$ is a discrepancy measure.
Intuitively, the end goal is to learn
(i) a discrete representation of $\states$ and $\actions$
(ii) to mimic the behaviors of the original MDP over the induced latent spaces, thus yielding a latent MDP with a bisimulation distance close to $\mdp$, and
(iii) to \emph{distill} $\policy$ into $\latentpolicy_{\decoderparameter}$. 
In the following, we distinguish the case where we only learn $\latentstates$, with $\latentactions = \actions$ (in that case, $\actions$ is assumed to be discrete), and the one where we additionally need to discretize the set of actions and learn $\latentactions$.

\subsection{Evidence Lower Bound}
In this work, we focus on the case where $\dklsymbol$ is used as discrepancy measure: the goal is optimizing $\min_{\decoderparameter}
\dklsymbol(\mdp_{\policy} \parallel \decoder)$ or equivalently maximizing the marginal
\emph{log-likelihood of traces} of $\mdp$, i.e.,
$\expected{\trace \sim \mdp_\policy}{\log \decoder\fun{\trace}}$, where
\begin{equation}
    \decoder\fun{\trace} =
            \int_{\inftrajectories{\latentmdp_{{\decoderparameter}}}} \decoder\fun{\trace \mid \seq{\latentvariable}{T}} \, d\latentprobtransitions_{\latentpolicy_{ \decoderparameter}}\fun{\seq{\latentvariable}{T}}, \label{eq:maximum-likelihood}
\end{equation}
$\trace = \langle \seq{\state}{T}, \seq{\action}{T - 1}, \seq{\reward}{T - 1}, \seq{\labeling}{T} \rangle$,
$\latentvariable \in \latentvariables$ with $\latentvariables = \latentstates$ if $\latentactions = \actions$ and $\latentvariables = \latentstates \times \latentactions$ otherwise,
$\latentprobtransitions_{\latentpolicy_{\decoderparameter}}(\seq{\latentstate}{T}) = \prod_{t = 0}^{T - 1} \latentprobtransitions_{\latentpolicy_{\decoderparameter}}(\latentstate_{t + 1} \mid \latentstate_t)$, and $\latentprobtransitions_{\latentpolicy_{ \decoderparameter}}(\seq{\latentstate}{T}, \seq{\latentaction}{T-1}) = \prod_{t = 0}^{T - 1} {\latentpolicy_{\decoderparameter}}(\latentaction_t \mid \latentstate_t) \cdot \latentprobtransitions_{\decoderparameter}(\latentstate_{t + 1} \mid \latentstate_t, \latentaction_t)$.
The dependency of $\trace$ on $\latentvariables$ in Eq.~\ref{eq:maximum-likelihood}
is made explicit by the law of total probability.

Optimizing $\expected{\trace \sim \mdp_\policy}{\log \decoder\fun{\trace}}$ through Eq.~\ref{eq:maximum-likelihood} is typically intractable \citep{DBLP:journals/corr/KingmaW13}.
To overcome this, we use an encoder $\encoder\fun{\seq{\latentvariable}{T} \mid \trace}$ to set up
a \emph{lower bound} on the log-likelihood of produced traces,
often referred to as \emph{evidence lower bound} (ELBO, \citealp{DBLP:journals/jmlr/HoffmanBWP13}):
\begin{align*}
    &\log \decoder\fun{\trace} - \dkl{\encoder\fun{\sampledot \mid \trace}}{{\decoder\fun{\sampledot \mid \trace}}} \notag \\
    = & \expected{\seq{\latentvariable}{T} \sim \encoder\fun{\sampledot \mid \trace}}{\log \decoder\fun{\trace \mid \seq{\latentvariable}{T}}}  - \dkl{\encoder\fun{\sampledot \mid \trace}}{\latentprobtransitions_{\latentpolicy_{ \decoderparameter}}}.
\end{align*}
The purpose of optimizing the ELBO is twofold.
First, this allows us learning $\embed_{\encoderparameter}$ via (i) $\encoder(\seq{\latentstate}{T} \mid \trace) = \prod_{t = 0}^{T} \embed_{\encoderparameter}(\latentstate_{t} \mid \state_t)$ or (ii) $\encoder(\seq{\latentstate}{T}, \seq{\latentaction}{T - 1} \mid \trace) = \encoder(\seq{\latentaction}{T-1} \mid \seq{\latentstate}{T}, \trace) \cdot \encoder(\seq{\latentstate}{T} \mid \trace)$, where $\encoder\fun{\seq{\latentaction}{T-1} \mid \seq{\latentstate}{T}, \trace} = \prod_{t = 0}^{T - 1} \latentembeda_{\encoderparameter}(\latentaction_t \mid \latentstate_t, \action_t)$, $\latentembeda_\encoderparameter$
being an
action encoder.
We assume here that
encoding states and actions to latent spaces is independent of rewards.
We additionally make them independent of $\seq{\labeling}{T}$ by assuming that
$\labels$ is known.
This allows $\embed_\encoderparameter$ to encode states and their labels directly into the latent space (cf. Sect.~\ref{subsec:discrete-latent-distr}).

Second, we assume the existence of latent reward and label models, i.e., $\decoder^{\rewards}$ and $\decoder^{\labels}$, allowing to recover respectively $\latentrewards_{\decoderparameter}$ and $\latentlabels_{\decoderparameter}$, as well as a \emph{generative model} $\decoder^{\generative}$, enabling the reconstruction of states and actions.
This allows decomposing the behavioral model $\decoder$ into:
\begin{multline*}
\quad \quad \begin{aligned}
    &\decoder\fun{\seq{\state}{T}, \seq{\action}{T - 1}, \seq{\reward}{T - 1}, \seq{\labeling}{T} \mid \seq{\latentvariable}{T}} \\
    =& \decoder^{\generative}(\seq{\state}{T} \mid \seq{\latentstate}{T}) 
    \cdot \decoder^{\generative}(\seq{\action}{T - 1} \mid \seq{\latentvariable}{T}) \\
    & \cdot \decoder^{\rewards}(\seq{\reward}{T - 1} \mid \seq{\latentstate}{T}, \seq{\latentaction}{T - 1}) \cdot \decoder^{\labels}(\seq{\labeling}{T} \mid \seq{\latentstate}{T}),
\end{aligned}\\
\begin{aligned}
    \text{where }\decoder^{\generative}(\seq{\state}{T} \mid \seq{\latentstate}{T}) &= \textstyle{\prod_{t = 0}^{T}} \, \decoder^{\generative}(\state_t \mid \latentstate_t), \\
    \decoder^{\generative}(\seq{\action}{T - 1} \mid \seq{\latentvariable}{T - 1}) &=
    \begin{cases}
        \textstyle{\prod_{t = 0}^{T - 1}}\, \latentpolicy_{\decoderparameter}(\action_t \mid \latentstate_t) \text{ if } \latentactions = \actions \\
        \prod_{t = 0}^{T - 1}\, \embeda_{\decoderparameter}(\action_t \mid \latentstate_t, \latentaction_t) \text{ else,}
    \end{cases} \\
    \decoder^{\rewards}(\seq{\reward}{T - 1} \mid \seq{\latentstate}{T}, \seq{\latentaction}{T - 1}) &= \textstyle{\prod_{t = 0}^{T - 1}} \, \decoder^{\rewards}(\reward_{t} \mid \latentstate_t, \latentaction_t), \text{and} \\
    \decoder^{\labels}(\seq{\labeling}{T} \mid \seq{\latentstate}{T}) &= \textstyle{\prod_{t = 0}^{T}} \, \decoder^{\labels}(\labeling_t \mid {\latentstate}_{t}).
\end{aligned}
\end{multline*}
Model $\embeda_{\decoderparameter}$
allows learning the action embedding function via $\embeda_{\encoderparameter, \decoderparameter}\fun{\action \mid \state, \latentaction} = \expectedsymbol{\latentstate \sim \embed_{\encoderparameter}\fun{\sampledot \mid \state}}\embeda_{\decoderparameter}\fun{\action \mid \latentstate, \latentaction}$ for all $\state \in \states, \action \in \actions, \latentaction \in \latentactions$.
We also argue that a \emph{perfect reconstruction} of labels is possible, i.e., $\decoder^{\labels}\fun{\seq{\labeling}{T} \mid \seq{\latentstate}{T}} = 1$, due to the labels being encoded into the latent space.
From now on, we thus omit the label term.

Deterministic embedding functions $\embed_{\encoderparameter}$, $\embeda_{\encoderparameter, \decoderparameter}$
and $\latentrewards_{\decoderparameter}$
can finally be obtained by
taking the mode of their distribution.%

\smallparagraph{Back to the local setting.}~Taking Assumption~\ref{assumption:ergodicity} into account, drawing multiple finite traces $\trace \sim \mdp_{\policy}$ can be seen as a continuous interaction with $\mdp$ along an infinite trace \citep{DBLP:conf/nips/Huang20}.
\ifappendix
This observation allows us to formulate the ELBO in the local setting\footnote{The expert reader might note this is not the standard definition of ELBO, we prove in Appendix (Corollary~\ref{cor:elbo-local}) that this reformulation is valid because of developments in Sect.~\ref{section:deepmdp}.}
\else
This observation allows us to formulate the ELBO in the local setting
\fi
and connect to local losses:
\[
    \max_{\encoderparameter, \decoderparameter}\; \elbo\fun{\latentmdp_{\decoderparameter}, \embed_{\encoderparameter}, \embeda_{\encoderparameter, \decoderparameter}} = - \min_{\encoderparameter, \decoderparameter} \left\{\distortion_{\encoderparameter, \decoderparameter} + \rate_{\encoderparameter, \decoderparameter} \right\},
\]
where $\distortion$ and $\rate$ denote respectively the \emph{distortion} and \emph{rate} of the variational model  \citep{DBLP:conf/icml/AlemiPFDS018}, given by
\begin{equation*}
    \distortion_{\encoderparameter, \decoderparameter} = -
    \left\lbrace\begin{array}{@{}r@{\quad}l@{}}
        \displaystyle \expectedsymbol{}_{\substack{\state, \action, \reward, \state' \sim \stationary{\policy} \\ \latentstate, \latentstate' \sim \embed_{\encoderparameter}\fun{\sampledot \mid \state, \state'}}} \big[
        \begin{aligned}[t]
        & \log \decoder^{\generative}(\state' \mid \latentstate') + \log \latentpolicy_{\decoderparameter}(\action \mid \latentstate) +
        \\
        & \log \decoder^{\rewards}\fun{\reward \mid \latentstate, \action}\big]
        \text{ if } \latentactions = \actions,
        \end{aligned} \hfill \\
        \displaystyle \expectedsymbol{}_{\substack{\state, \action, \reward, \state' \sim \stationary{\policy} \\ \latentstate, \latentstate' \sim \embed_{\encoderparameter}\fun{\sampledot \mid \state, \state'} \\ \latentaction \sim \latentembeda_{\encoderparameter}\fun{\sampledot \mid \latentstate, \action}}}
        \big[
        \begin{aligned}[t]
        & \log \decoder^{\generative}\fun{\state' \mid \latentstate'} + \\
        & \log \embeda_{\decoderparameter}\fun{\action \mid \latentstate, \latentaction} + \\
        & \log \decoder^{\rewards}\fun{\reward \mid \latentstate, \latentaction}\big] \text{ else, and}
        \end{aligned} \hfill
        \end{array}
    \right.
\end{equation*}
\begin{equation*}
    \rate_{\encoderparameter, \decoderparameter} =
    \left\lbrace\begin{array}{@{}r@{\quad}l@{}}
        \displaystyle \expectedsymbol{}_{\substack{\state, \action, \state' \sim \stationary{\policy} \\ \latentstate \sim \embed_{\encoderparameter}(\sampledot \mid \state)}}  \dkl{\embed_{\encoderparameter}\fun{\sampledot \mid \state'}}{\latentprobtransitions_{\latentpolicy_{\decoderparameter}}(\sampledot \mid \latentstate)} \text{ if } \latentactions = \actions, \hfill \\
        \displaystyle \expectedsymbol{}_{\substack{\state, \action, \state' \sim \stationary{\policy} \\ \latentstate \sim \embed_{\encoderparameter}\fun{\sampledot \mid \state} \\ \latentaction \sim \latentembeda_{\encoderparameter}\fun{\sampledot \mid \latentstate, \action} }}
        \big[
        \begin{aligned}[t]
        & \dkl{\embed_{\encoderparameter}\fun{\sampledot \mid \state'}}{\latentprobtransitions_{\decoderparameter}\fun{\sampledot \mid \latentstate, \latentaction}} + \\
        & \dkl{\latentembeda_{\encoderparameter}\fun{\sampledot\mid\latentstate, \action}}{\latentpolicy_{\decoderparameter}\fun{\sampledot \mid \latentstate}} \big] \text{ else.}
        \end{aligned} \hfill
    \end{array}
    \right.
\end{equation*}
We omit the subscripts when the context is clear.
The optimization of $\elbo\fun{\latentmdp_{\decoderparameter}, \embed_{\encoderparameter}, \embeda_{\encoderparameter, \decoderparameter}}$ allows for an indirect optimization of the local losses through their variational versions:
(i) $\localrewardloss{\stationary{\latentpolicy}}$ via the log-likelihood of rewards produced, and 
(ii) $\localtransitionlossupper{\stationary{\latentpolicy}}$ where we change the Wasserstein term to the KL divergence.
Note that this last change means we do not necessarily obtain the theoretical guarantees on the quality of the abstraction via its optimization.
Nevertheless, our experiments indicate KL divergence is a good proxy of the Wasserstein term in practice.
In particular, in the discrete setting, Wasserstein matches TV and one can relate the proxy with the original metric using the Pinkster’s inequality.

\subsection{VAE Distributions}\label{subsec:discrete-latent-distr}
\smallparagraph{Discrete distributions.}~We aim at learning \emph{discrete} latent spaces $\latentstates$ and $\latentactions$, the distributions $\embed_{\encoderparameter}$, $\encoder^{\actions}$, $\latentprobtransitions_{\decoderparameter}$, and $\latentpolicy_{\decoderparameter}$ are thus supposed to be discrete.
Two main challenges arise:
(i) gradient descent is not applicable to learn $\encoderparameter$ and $\decoderparameter$ due to the discontinuity of $\latentstates$ and $\latentactions$, and (ii) sampling from these distributions must be a \emph{derivable operation}.
We overcome these by using \emph{continuous relaxation of Bernoulli distributions} to learn a binary representation of the latent states, and the \emph{Gumbel softmax trick} for the latent action space \citep{DBLP:conf/iclr/JangGP17,DBLP:conf/iclr/MaddisonMT17}.

\smallparagraph{Labels.}~To enable $\log \decoder^\labels\fun{\seq{\labeling}{T} \mid \seq{\latentstate}{T}} = 0$, we linearly encode $\labels\fun{\state_t} = \labeling_t$ into each $\latentstate_t$ via $\embed_{\encoderparameter}$.
Recall that labels are binary encoded, so we allocate them $|\atomicprops|$ bits in $\latentstates$.
Then, $\embed_{\encoderparameter}\fun{\latentstate \mid \state} > 0$ implies $\labels\fun{\state} = \latentlabels_\decoderparameter\fun{\latentstate}$, for all $\state \in \states, \latentstate \in \latentstates$,
satisfying Assumption~\ref{assumption:labeling} if $\embed_{\encoderparameter}$ is deterministic.

\smallparagraph{Decoders.}~For $\decoder^{\generative}$, $\embeda_{\decoderparameter}$, and $\decoder^{\rewards}$, we learn the parameters of multivariate normal distributions.
This further allows linking all $\latentstate \in \latentstates$ to the parameters of $\decoder^{\generative}(\sampledot \mid \latentstate)$ for explainability.

\subsection{Posterior Collapse}\label{sec:posterior-collapse}
A common issue encountered while optimizing variational models via the ELBO is 
\emph{posterior collapse}.
Intuitively, this results in a degenerate local optimum where the model learns to ignore the latent space.
With a discrete encoder, this translates into a deterministic mapping to a single latent state, regardless of the input.
From an information-theoretic point of view, optimizing the ELBO gives way to a trade-off between the minimization of
$\rate$ and $\distortion$, where the feasible region is a convex set \citep{DBLP:conf/icml/AlemiPFDS018}.
Posterior collapse occurs when $\rate \approx 0$
(\emph{auto-decoding limit}).
On the other hand, one can achieve $\distortion \approx 0$
(\emph{auto-encoding limit})
at the price of a higher rate.

\smallparagraph{Regularization terms.}~%
Various solutions have been proposed in the literature to prevent posterior collapse.
They include \emph{entropy regularization} (via $\alpha \in \mathopen[ 0, \infty \mathclose[$, e.g.,
\ifappendix
\citealt{DBLP:conf/corl/BurkeHR19,DBLP:conf/icml/DongS0B20}) and
\else
\citealt{DBLP:conf/icml/DongS0B20}) and
\fi
\emph{KL-scaling} (via $\beta \in \mathopen[0, 1\mathclose]$, e.g., \citealt{DBLP:conf/icml/AlemiPFDS018}),
consisting in changing $\elbo$ to
$\tuple{\alpha, \beta}\text{-}\elbo = - (\distortion + \beta \cdot \rate) + \alpha \cdot \entropy{\encoder}$, where 
$\entropy{\encoder}$ denotes the entropy of an encoding distribution.
We choose to measure the entropy of the \emph{marginal} encoder, given by $\encoder\fun{\latentstate} = \expectedsymbol{\state \sim \stationary{\policy}} \embed_{\encoderparameter}\fun{\latentstate \mid \state}$. 
Intuitively, this encourages the encoder to learn to make plenty use of the latent space. 
The parameter $\beta$ allows to interpolate between auto-encoding and auto-decoding behavior, which is not possible with the standard ELBO objective.

A drawback of these methods is that we no longer optimize a lower bound on the log-likelihood of the input while optimizing
$\tuple{\alpha, \beta}\text{-}\elbo$.
In practice, setting up annealing schemes for $\alpha$ and $\beta$ allows to eventually recover $\elbo$ and avoid posterior collapse ($\alpha=0$ and $\beta=1$ matches $\elbo$).

\begin{figure}[t]
    \centering
    \includegraphics[width=\linewidth]{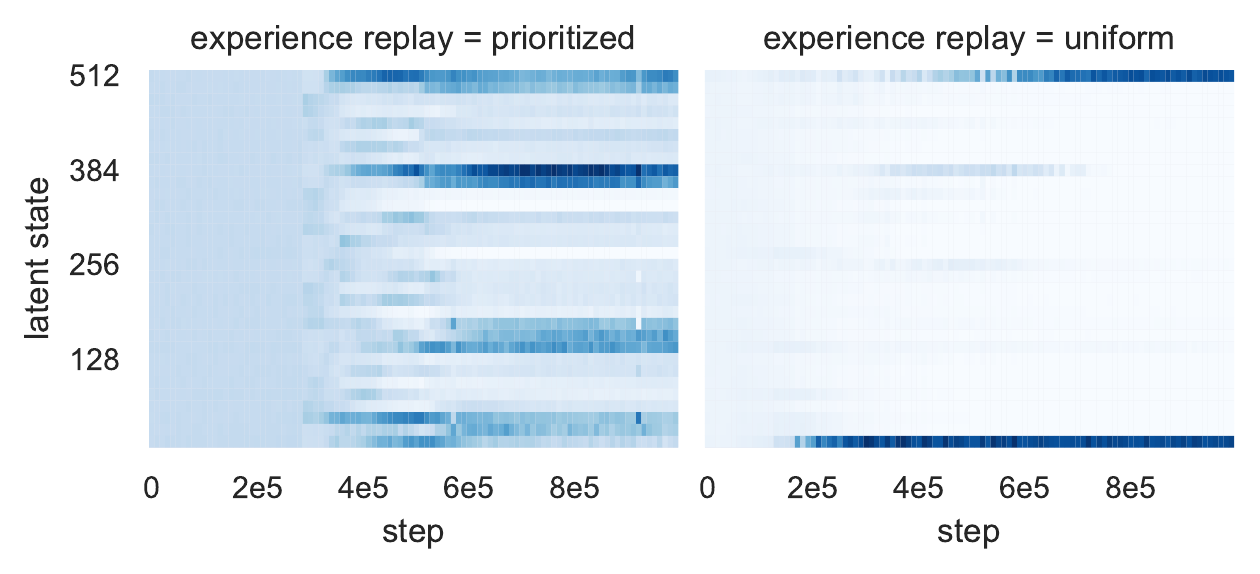}
    \caption{Latent space distribution along training steps for the CartPole environment. The intensity of the blue hue corresponds to the frequency of latent states produced by $\embed_{\encoderparameter}$ during training. We compare a bucket-based prioritized against a simple uniform experience replay.
    The latent space learned via the uniform buffer collapses to two latent states.}
    \label{fig:prioritized-replay}
\end{figure}
\smallparagraph{Prioritized replay buffers.}~%
To enable meaningful use of latent space to represent input state-actions pairs,
$\encoder$ should learn to (i) exploit the entire latent space, and (ii) encode wisely states and actions of transitions yielding poor ELBO, being generally sensitive to bad representation embedding.
This motivates us to use a \emph{prioritized replay buffer} \citep{DBLP:journals/corr/SchaulQAS15} to store transitions and sample them when optimizing $\elbo$.
Draw $\tuple{\state, \action, \reward, \state'} \sim \stationary{\policy}$ and let $p_{\tuple{\state, \action, \reward, \state'}}$ be its priority, we introduce the following priority functions.

\begin{itemize}
    \item \emph{Bucket-based priority:}~we partition the buffer in $|\latentstates|$ \emph{buckets}.
    Let $N \in \N$ and $b \colon \latentstates \to \N$ be respectively step and latent state counters.
    At each step, let $\latentstate \sim \embed_{\encoderparameter}\fun{\sampledot \mid \state}$, we assign $p_{\tuple{\state, \action, \reward, \state'}}$ to $\nicefrac{N}{b\fun{\latentstate}}$, then increment $N$ and $b\fun{\latentstate}$ of one.
    This allows $\embed_\encoderparameter$ to process states being infrequently visited under $\policy$ and learn to fairly distribute $\latentstates$.
    \item \emph{Loss-based priority:}~we set $p_{\tuple{\state, \action, \reward, \state'}}$ to its individual transition loss,
    which enables to learn improving the representation of states and actions that yield poor ELBO.
\end{itemize}
\end{section}
\begin{figure*}
    \begin{subfigure}{0.745\textwidth}
        \centering
        \includegraphics[width=\textwidth]{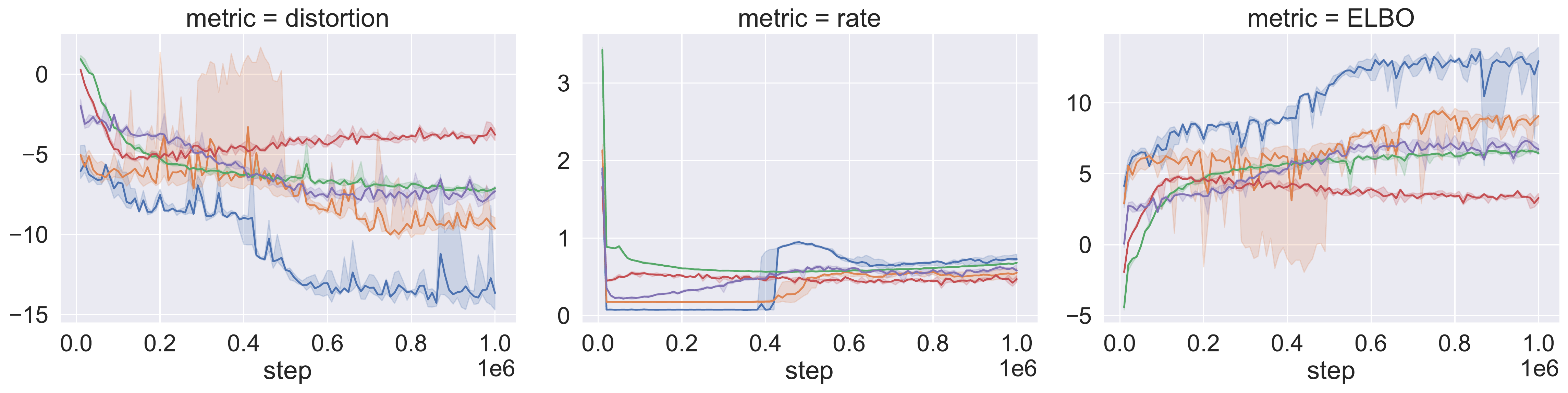}
        \caption{$\distortion$, $\rate$, and $\elbo$}
        \label{subfig:dist-rate-elbo}
    \end{subfigure}~\begin{subfigure}{0.25\textwidth}
        \centering
       \includegraphics[width=\textwidth]{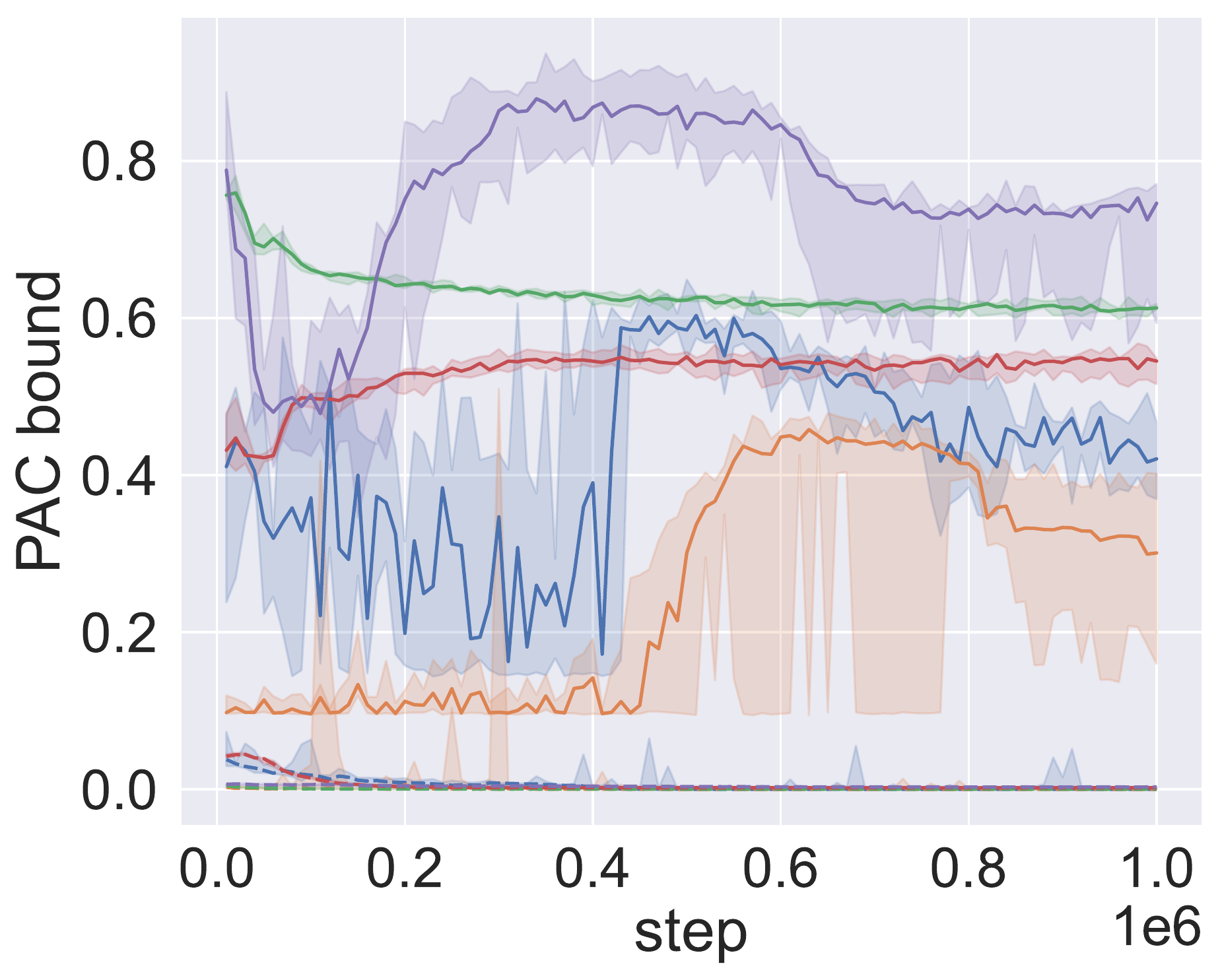}
       \caption{Local losses (PAC bound)}
       \label{subfig:local-losses}
    \end{subfigure}\\
    \begin{subfigure}{\textwidth}
        \centering 
        \includegraphics[width=\textwidth]{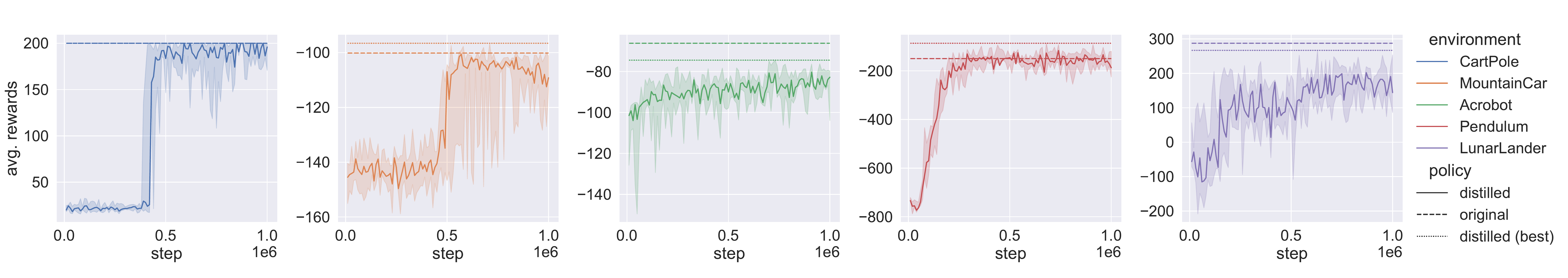}
        \caption{Distilled policy evaluation}
        \label{subfig:policy-evaluation}
    \end{subfigure}
    \caption{Plots reporting~(\ref{subfig:dist-rate-elbo})~$\distortion$, $\rate$, and $\elbo$ (approximated over a large batch sampled from the replay buffer, using discrete latent distributions and averaged using importance sampling weights),
    (\ref{subfig:local-losses})~PAC local losses approximation for $\error=10^{-2}$, $\proberror = 5\cdot 10^{-3}$:
    solid lines stand for the transition loss and dashed lines for the reward loss,
    and (\ref{subfig:policy-evaluation})~the expected episode return
    (approximated by averaging over $30$ episodes).
    For each environment, we train five different instances of our VAE with different random seeds, where the solid line corresponds to the median and the shaded interval to the interquartile range.}
    \label{fig:plots}
\end{figure*}
\begin{section}{Experiments}\label{sec:experiments}
The goal of our experiments{\ifappendix\footnote{The code for conducting the experiments is available at \url{https://github.com/florentdelgrange/vae_mdp}}\fi} is to evaluate the quality of the latent space model learned and the policy distilled via our VAE-MDP framework. This evaluation consists of:
    an analysis of the training of the latent space model 
    and the benefits of our method to avoid posterior collapse,
    assessing the quality of the abstraction learned via PAC local losses bounds, and
    testing the performance of the distilled policy. 
    This allows to assess if the latent model learned yields a sound compression of the state-action space that retains the necessary information to optimize the return.
We {evaluate} our method on classic OpenAI environments \citep{DBLP:journals/corr/BrockmanCPSSTZ16} with (i) continuous states and discrete actions (CartPole, MountainCar, and Acrobot), and (ii) where both, states and actions, are continuous (Pendulum and LunarLander).
We distill RL policies $\policy$ learned via DQN \citep{DBLP:journals/nature/MnihKSRVBGRFOPB15} for case (i), and SAC \citep{DBLP:conf/icml/HaarnojaZAL18} for case (ii).
\ifappendix
The exact setting and labeling functions used to reproduce our results 
are provided in supplementary material (Appendix~\ref{appendix:experiments}).
\fi

\smallparagraph{Latent spaces.}~Since latent spaces are trained to enable formal verification, we choose $\log_2|\latentstates|$ and $|\latentactions|$ ranging from $9$ bits and $2$ actions (coarser) to $16$ bits and $5$ actions (finer) to make them tractable for model checkers --- see \citet{DBLP:conf/isola/BuddeHKKPQTZ20} for a performance comparison of modern tools for a range of instances with large model size. %
Depending on the property to verify, the latent space may have to be finer since
(i) we need to reserve $|\atomicprops|$ bits in the representation of $\latentstates$ for labels, and
(ii) this allows the agent to take more precise decisions over a finer partition of the latent space. %
We then select the model with the best trade-off between abstraction quality (measured via $\localrewardlossapprox{\stationary{\latentpolicy_{\decoderparameter}}}$ and $\localtransitionlossapprox{\stationary{\latentpolicy_{\decoderparameter}}}$) and performance (i.e., the return approximated by running $\latentpolicy$ in $\mdp$).

\smallparagraph{ELBO optimization.}~In order to allow $\elbo$ to be trained efficiently, posterior collapse has to be tackled from the very first stages of training. In fact, we found that the KL-scaling and entropy regularization annealing schemes (cf. Sect.~\ref{sec:posterior-collapse}) were necessary to avoid the latent space to collapse into a single state-action pair after only a few training steps.
We found the most efficient to start with an autoencoder behavior ($\beta_0=0$) and a large entropy regularizer (e.g., $\alpha_0=10$) during the $10^4$ first steps, and then anneal them via $\alpha_t = \alpha_0 \cdot (1 - \tau)^t$, $\beta_t = 1 - (1 - \tau)^t$ (e.g., $\tau = 10^{-5}$) to fully recover the original $\elbo$ in a second training phase.
We also found that prioritized experience replays further prevent posterior collapse during training (cf. Fig.~\ref{fig:prioritized-replay}).
Fig.~\ref{subfig:dist-rate-elbo} shows that training $\elbo$ this way results in a stable learning procedure that successfully minimizes $\distortion$ while preventing an auto-decoding behavior by keeping $\rate$ away from $0$.%

\smallparagraph{Local losses.}~We compute the PAC local losses bounds presented in Sect.~\ref{section:deepmdp} along training steps (Fig.~\ref{subfig:local-losses}).
When the policy is eventually distilled (Fig.~\ref{subfig:policy-evaluation}) and $\latentpolicy_{\decoderparameter}$ achieves performance similar to those of $\policy$, the learning curves stabilize, while maximizing $\elbo$ successfully allows minimizing the local losses.
In every environment where we tested our method, a low reward loss is maintained while the transition loss reaches values in the interval $\mathopen[\nicefrac{1}{5}, \nicefrac{3}{5}\mathclose]$ for most of the instances.
These values are thus guaranteed to upper bound the expected bisimulation distance between $\mdp$ and $\latentmdp$ as well as their value difference. 
We do not expect our approach to reach zero reward and transition loss though, since we pass from continuous to discrete spaces: the abstraction induced by our approach is always coarser than the original spaces, which translates in general to precision loss.
This precision loss is often encoded through the discrete probability transitions. 
For instance, a state $\state$ which deterministically transitions to a close state $\state'$ in $\mdp$ such that $\embed_{\encoderparameter}\fun{\state} = \latentstate = \embed_{\encoderparameter}\fun{\state'}$ induces $\latentprobtransitions_{\latentpolicy_{ \decoderparameter}}\fun{\latentstate \mid \latentstate} > 0$.
Observe that the PAC computation of $\localtransitionlossapprox{\stationary{\latentpolicy_{ \decoderparameter}}}$ is sensitive  to the entropy of $\latentprobtransitions_{\decoderparameter}$, which thus may induce a residual transition loss.
$\localtransitionloss{\stationary{\latentpolicy_{ \decoderparameter}}}$ is on the contrary not sensitive to this.
In practice, its value will thus often be lower than its approximated upper bound $\localtransitionlossapprox{\stationary{\latentpolicy_{ \decoderparameter}}}$.

\smallparagraph{Policy distillation.}~The guarantees derived in Sect.~\ref{section:deepmdp} are only valid for $\latentpolicy_{\decoderparameter}$, the policy under which formal properties can be verified.
We further need evaluate if it achieves sound performance in the RL environment: 
checking the expected value difference via Thm.~\ref{thm:pac-qvaluedif} is most significant when $\values{\latentpolicy_{\decoderparameter}}{}{\state}$ is worth in any state $\state$ likely to be reached in $\mdp_{\latentpolicy_{\decoderparameter}}$.
We compare the
episode return achieved by $\latentpolicy_{\decoderparameter}$ against $\policy$ in the environment $\mdp$ 
(Fig.~\ref{subfig:policy-evaluation}).
Our framework allows learning to improve  $\latentpolicy_{ \decoderparameter}$ along training steps and eventually achieving the return of the original policy $\policy$.
This illustrates that training our latent model enables distilling $\policy$ into a latent policy $\latentpolicy_{\decoderparameter}$ that achieves similar performance in $\mdp$. 

\end{section}

\begin{section}{Conclusion}
In this work, we presented VAE-MDPs, a framework for learning discrete latent models of unknown, continuous-spaces environments with bisimulation guarantees.
We detailed how such latent models can be learned by executing an RL policy in the environment and showed that the procedure yields a distilled version of the latter as a side effect.
To provide the guarantees, we introduced new local losses bounds aimed at discrete latent spaces with their PAC-efficient approximation algorithm derived from the execution of the distilled policy.
All this enables the verification of RL policies for unknown continuous MDPs.

Experimental results demonstrate the feasibility of our approach through the PAC bounds and the performance of the distilled policy achieved for various environments.
Our tool can also be used to highlight the lack of robustness of input policies when the distillation fails.

Complementary to safe-RL approaches addressed via formal methods, we emphasize the ability of our tool to be coupled with such algorithms when no model of the environment is known a priori.
The applicability of our method enables its use in future work for real-world case studies and
more complex settings, such as multi-agent systems.
\end{section}

\section*{Acknowledgments}
This research received funding from the Flemish Government (AI Research Program) and was supported by the DESCARTES iBOF project. G.A. Perez is also supported by the Belgian FWO “SAILor” project (G030020N).

\bibliography{references}

\ifappendix
\appendix
\onecolumn

\section{Details on Discrepancy Measures}
Let $P, Q \in
\distributions{\measurableset}$ be probability distributions over $\measurableset$ with density functions $p$ and $q$.

\smallparagraph{Wasserstein.}~The Kantorovich-Rubinstein duality allows formulating the Wasserstein distance between $P$ and $Q$ as
    \[
        \wassersteindist{\distance}{P}{Q} = \sup_{f \in \Lipschf{\distance}} \left|{\expectedsymbol{x \sim P} f\fun{x} - \expectedsymbol{y \sim Q} f\fun{y}}\right|,
    \]
    where $\Lipschf{\distance}$ is the set of 1-Lipschitz functions, i.e.,
    $\Lipschf{\distance} = \set{f : \left|f\fun{x} - f\fun{y}\right| \leq
    \distance\fun{x, y}}$.

\smallparagraph{Total Variation.}~One can reformulate the total variation distance between $P$ and $Q$ as
\begin{align*}
\dtv{P}{Q} ={} & \sup_{A \in \borel{\measurableset}} |P\fun{A} - Q\fun{A}|\\
{}={} & \frac{1}{2} \int_{\measurableset} |p\fun{x} - q\fun{x}|\, dx.
\end{align*}
When $\measurableset$ is continuous, TV might thus give an overly strong measure of the numerical differences across probability distributions to all measurable sets since the distance between two point masses $dx, dy$ is $1$ unless $x = y$ \citep{DBLP:conf/uai/FernsPP05}.

\section{Latent Space Models}\label{appendix:section:latent-space-model}
From now on, fix MDP $\mdp = \mdptuple$ with latent space model $\tuple{\latentmdp, \embed, \embeda}, \, \latentmdp = \latentmdptuple$. We write ${\latentvaluessymbol{\latentpolicy}{}}$ for the value function of a policy $\latentpolicy \in \latentmpolicies$ in $\latentmdp$.

\subsection{DeepMDPs}
In the following, we recall DeepMDPs notions (from \citealt{DBLP:conf/icml/GeladaKBNB19}) that are useful to prove bisimumation distance (Lem.~\ref{lemma:bidistance-embedding}, Lem.~\ref{lemma:bidistance-abstraction-quality}) and value difference bounds (Lem.~\ref{lemma:qvalues-diff-bound}, Lem.~\ref{lem:quality-latent-space}). 

\smallparagraph{Smooth-valuations.}
A policy $\latentpolicy \in \latentmpolicies$ is said to be \emph{$\KV$-($\dtvsymbol$-)smooth-valued} if
$\sup_{\latentstate \in \latentstates} \left| \latentvalues{\latentpolicy}{}{\latentstate}\right| \leq \KV$ and if for all $\latentaction \in \latentactions$, $\sup_{\latentstate \in \latentstates} \left| \latentqvalues{\latentpolicy}{}{\latentstate}{\latentaction} \right| \leq \KV.$
Observe that all policies $\latentpolicy \in \latentmpolicies$ are
$(\nicefrac{\Rmax{\latentpolicy}}{1 - \discount})$-smooth-valued, where $\Rmax{\latentpolicy} = \sup_{\latentstate \in \latentstates} \left| \latentrewards_{\latentpolicy}\fun{\latentstate} \right|$.

\smallparagraph{Lipschitzness.}
A policy $\latentpolicy \in \latentmpolicies$ is $\KV$-Lipschitz-valued if for
all $\latentstate_1, \latentstate_2 \in \states$,
\(
    \left|\latentvalues{\latentpolicy}{}{\latentstate_1} - \latentvalues{\latentpolicy}{}{\latentstate_2} \right| \leq \KV\distance_{\latentstates}\fun{\latentstate_1, \latentstate_2}
\)
and for all $\latentaction \in \latentactions$,
\(
    \left|\latentqvalues{\latentpolicy}{}{\latentstate_1}{\latentaction} - \latentqvalues{\latentpolicy}{}{\latentstate_2}{\latentaction} \right| \leq \KV\distance_{\latentstates}\fun{\latentstate_1, \latentstate_2}.
\)
The MDP $\latentmdp$ is \emph{$\langle \KR{\latentpolicy}, \KP{\latentpolicy}\rangle$-Lipschitz} if for all $\latentstate_1, \latentstate_2 \in \latentstates$,
\begin{align*}
    \left|\latentrewards_{\latentpolicy}\fun{\latentstate_1} - \latentrewards_{\latentpolicy}\fun{\latentstate_2}\right| &\leq \KR{\latentpolicy} \distance_{\latentstates}\fun{\latentstate_1, \latentstate_2},\\
    \wassersteindist{\distance_{\latentstates}}{\latentprobtransitions_{\latentpolicy}\fun{\sampledot \mid \latentstate_1}}{ \latentprobtransitions_{\latentpolicy}\fun{\sampledot \mid \latentstate_2}} &\leq \KP{\latentpolicy} \distance_{\latentstates}\fun{\latentstate_1, \latentstate_2}.
\end{align*}
Moreover, all latent policies $\latentpolicy \in \latentmpolicies$ with $\KP{\latentpolicy} \leq \frac{1}{\discount}$ are $(\nicefrac{\KR{\latentpolicy}}{1 - \discount
\KP{\latentpolicy}})$-Lipschitz-valued \citep[Lem.~1]{DBLP:conf/icml/GeladaKBNB19}.
In the discrete setting, we replace the Wasserstein term by TV and we omit the $\distance_{\latentstates}$ terms.

\subsection{Bounded Bisimulation Distance}
\begin{lemma}\label{lemma:bidistance-embedding}
Let $\latentpolicy \in \latentmpolicies$, assume $\latentmdp$ is discrete and $\langle\KR{\latentpolicy}, \KP{\latentpolicy}\rangle$-Lipschitz.
Then, given the induced stationary distribution $\stationary{\latentpolicy}$ in $\mdp$,
\begin{align*}
    \expectedsymbol{\state \sim \stationary{\latentpolicy}} \bidistance^{\rewards}_{\latentpolicy}\fun{\state, \embed\fun{\state}} & \leq  \localrewardloss{\stationary{\latentpolicy}} + \discount \localtransitionloss{\stationary{\latentpolicy}} \frac{\KR{\latentpolicy}}{1 - \discount \KP{\latentpolicy}}, \\
    \expectedsymbol{\state \sim \stationary{\latentpolicy}} \bidistance^{\labels}_{\latentpolicy}\fun{\state, \embed\fun{\state}} & \leq \frac{\discount \localtransitionloss{\stationary{\latentpolicy}}}{1 - \discount}.
\end{align*}
\end{lemma}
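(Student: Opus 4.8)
The plan is to exploit the fixed-point characterization of $\bidistance^{\varphi}_{\latentpolicy}$ as the unique solution of $\distance = \Delta\fun{\distance}$, evaluated on pairs of the form $\fun{\state, \embed\fun{\state}}$ in the disjoint union of $\states$ and $\latentstates$. Write $g\fun{\state} = \bidistance^{\rewards}_{\latentpolicy}\fun{\state, \embed\fun{\state}}$ and $h\fun{\state} = \bidistance^{\labels}_{\latentpolicy}\fun{\state, \embed\fun{\state}}$. Since labels are preserved (Assumption~\ref{assumption:labeling}), the label-indicator branch of $M$ vanishes at $\fun{\state, \embed\fun{\state}}$, so the fixed-point equations simplify to
\begin{align*}
    g\fun{\state} &= \fun{1 - \discount} \left|\rewards_{\latentpolicy}\fun{\state} - \latentrewards_{\latentpolicy}\fun{\embed\fun{\state}}\right| + \discount \wassersteindist{\bidistance^{\rewards}_{\latentpolicy}}{\probtransitions_{\latentpolicy}\fun{\sampledot \mid \state}}{\latentprobtransitions_{\latentpolicy}\fun{\sampledot \mid \embed\fun{\state}}}, \\
    h\fun{\state} &= \discount \wassersteindist{\bidistance^{\labels}_{\latentpolicy}}{\probtransitions_{\latentpolicy}\fun{\sampledot \mid \state}}{\latentprobtransitions_{\latentpolicy}\fun{\sampledot \mid \embed\fun{\state}}}.
\end{align*}

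First I would split each Wasserstein term through the pushforward $\embed\probtransitions_{\latentpolicy}\fun{\sampledot \mid \state}$ (the law of $\embed\fun{\state'}$ for $\state' \sim \probtransitions_{\latentpolicy}\fun{\sampledot \mid \state}$) using the triangle inequality for $\wassersteinsymbol{}$. The first resulting term, $\wassersteindist{\bidistance^{\varphi}_{\latentpolicy}}{\probtransitions_{\latentpolicy}\fun{\sampledot \mid \state}}{\embed\probtransitions_{\latentpolicy}\fun{\sampledot \mid \state}}$, is bounded by the natural coupling $\state' \mapsto \embed\fun{\state'}$, which gives $\expected{\state' \sim \probtransitions_{\latentpolicy}\fun{\sampledot \mid \state}}{\bidistance^\varphi_{\latentpolicy}\fun{\state', \embed\fun{\state'}}}$, i.e.\ a recursion in $g$ (resp.\ $h$). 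The second term compares two distributions supported entirely on $\latentstates$, so it can be controlled by the Lipschitz constant of $\bidistance^{\varphi}_{\latentpolicy}$ restricted to the latent space.

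The crux is establishing this latent Lipschitz constant. By a fixed-point induction on the iterates of $\Delta$ restricted to $\latentstates \times \latentstates$ --- using $\langle\KR{\latentpolicy}, \KP{\latentpolicy}\rangle$-Lipschitzness of $\latentmdp$ together with monotonicity of $\wassersteinsymbol{}$ in the underlying metric --- I would show $\bidistance^{\rewards}_{\latentpolicy}\fun{\latentstate_1, \latentstate_2} \leq \frac{\fun{1 - \discount}\KR{\latentpolicy}}{1 - \discount\KP{\latentpolicy}}\distance_{\latentstates}\fun{\latentstate_1, \latentstate_2}$ and $\bidistance^{\labels}_{\latentpolicy}\fun{\latentstate_1, \latentstate_2} \leq \distance_{\latentstates}\fun{\latentstate_1, \latentstate_2}$. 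The extra $\fun{1 - \discount}$ factor in the reward case --- reflecting that $\bidistance^{\rewards}$ is scaled by $\fun{1 - \discount}$ relative to values --- is exactly what makes the final bound emerge without a spurious $\nicefrac{1}{\fun{1 - \discount}}$ on the transition term, so getting this constant right is the main obstacle. With it, the second Wasserstein term is at most the latent-Lipschitz constant times $\wassersteindist{\distance_{\latentstates}}{\embed\probtransitions_{\latentpolicy}\fun{\sampledot \mid \state}}{\latentprobtransitions_{\latentpolicy}\fun{\sampledot \mid \embed\fun{\state}}}$.

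Finally I would take the expectation over $\state \sim \stationary{\latentpolicy}$. Stationarity of $\stationary{\latentpolicy}$ under $\probtransitions_{\latentpolicy}$ collapses the recursion, $\expectedsymbol{\state \sim \stationary{\latentpolicy}}\expected{\state' \sim \probtransitions_{\latentpolicy}\fun{\sampledot \mid \state}}{g\fun{\state'}} = \expectedsymbol{\state \sim \stationary{\latentpolicy}} g\fun{\state}$, giving a self-referential inequality. Jensen's inequality over $\latentaction \sim \latentpolicy$ bounds the averaged reward term by $\fun{1 - \discount}\localrewardloss{\stationary{\latentpolicy}}$, and joint convexity of $\wassersteinsymbol{}$ over $\latentaction \sim \latentpolicy$ bounds the averaged bare residual $\expectedsymbol{\state \sim \stationary{\latentpolicy}}\wassersteindist{\distance_{\latentstates}}{\embed\probtransitions_{\latentpolicy}\fun{\sampledot \mid \state}}{\latentprobtransitions_{\latentpolicy}\fun{\sampledot \mid \embed\fun{\state}}}$ by $\localtransitionloss{\stationary{\latentpolicy}}$. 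For the reward metric this yields $\expectedsymbol{\state \sim \stationary{\latentpolicy}} g \leq \fun{1 - \discount}\localrewardloss{\stationary{\latentpolicy}} + \discount\,\expectedsymbol{\state \sim \stationary{\latentpolicy}} g + \discount\frac{\fun{1 - \discount}\KR{\latentpolicy}}{1 - \discount\KP{\latentpolicy}}\localtransitionloss{\stationary{\latentpolicy}}$; moving $\discount\,\expectedsymbol{} g$ to the left and dividing by $\fun{1 - \discount}$ gives the first inequality. The label bound follows identically with latent-Lipschitz constant $1$ and no reward term, producing $\nicefrac{\discount\localtransitionloss{\stationary{\latentpolicy}}}{\fun{1 - \discount}}$.
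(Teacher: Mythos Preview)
Your proposal is correct and follows essentially the same route as the paper: invoke the fixed-point equation at $\fun{\state,\embed\fun{\state}}$ (with the label branch vanishing by Assumption~\ref{assumption:labeling}), split the Wasserstein term through $\embed\probtransitions_{\latentpolicy}\fun{\sampledot\mid\state}$ via the triangle inequality, bound one piece by the recursion and the other via the latent-space Lipschitz constant of $\bidistance^{\varphi}_{\latentpolicy}$ (namely $\frac{\fun{1-\discount}\KR{\latentpolicy}}{1-\discount\KP{\latentpolicy}}$ for $\varphi=\rewards$ and $1$ for $\varphi=\labels$), then average over $\stationary{\latentpolicy}$, use stationarity to close the recursion, and rearrange. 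The only cosmetic differences are that the paper cites the reward-metric Lipschitz constant from \citet[Lem.~A.4]{DBLP:conf/icml/GeladaKBNB19} rather than re-deriving it, and bounds the recursion term via Kantorovich--Rubinstein duality instead of the primal coupling you use; both arguments yield the same inequality.
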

\begin{proof}
\citet[Lemma~A.4]{DBLP:conf/icml/GeladaKBNB19} showed that $\bidistance_{\latentpolicy}^{\rewards}$ is $\frac{(1 - \discount) \KR{\latentpolicy}}{1 - \discount \KP{\latentpolicy}}$-Lipschitz in $\latentstates$, whereas $\bidistance_{\latentpolicy}^{\labels}$ is $1$-Lipschitz since it assigns a maximal distance of one whenever states have different labels.
Let $K_{\rewards} = \frac{(1 - \discount) \KR{\latentpolicy}}{1 - \discount \KP{\latentpolicy}}$, $K_{\labels} = 1$ and $\varphi \in \set{\rewards, \labels}$, this allows us deriving the following inequality:
\begin{align*}
    &\expectedsymbol{\state \sim \stationary{\latentpolicy}}{\wassersteindist{\bidistance^{\varphi}_{\latentpolicy}}{\embed\probtransitions_{\latentpolicy}\fun{\sampledot \mid \state}}{\latentprobtransitions_{\latentpolicy}\fun{\sampledot \mid \embed\fun{\state}}}} \\
    =& \expected{\state \sim \stationary{\latentpolicy}}{\inf_{\coupling \in \couplings{\embed\probtransitions_{\latentpolicy}\fun{\sampledot \mid \state}}{\latentprobtransitions_{\latentpolicy}\fun{\sampledot \mid \embed\fun{\state}}}} \sum_{\latentstate_1, \latentstate_2 \in \states} \bidistance^{\varphi}_{\latentpolicy}\fun{\latentstate_1, \latentstate_2} \coupling\fun{\latentstate_1, \latentstate_2}} \\
    \leq& \expected{\state \sim \stationary{\latentpolicy}}{ \inf_{\coupling \in \couplings{\embed\probtransitions_{\latentpolicy}\fun{\sampledot \mid \state}}{\latentprobtransitions_{\latentpolicy}\fun{\sampledot \mid \embed\fun{\state}}}} \sum_{\latentstate_1, \latentstate_2 \in \states}  K_{\varphi} \condition{\neq}\fun{\latentstate_1, \latentstate_2} \coupling\fun{\latentstate_1, \latentstate_2}} \tag{$\bidistance^{\varphi}_{\latentpolicy}$ is $K_{\varphi}$-Lipschitz in $\latentstates$} \\
    =& K_{\varphi} \expected{\state \sim \stationary{\latentpolicy}}{ \inf_{\coupling \in \couplings{\embed\probtransitions_{\latentpolicy}\fun{\sampledot \mid \state}}{\latentprobtransitions_{\latentpolicy}\fun{\sampledot \mid \embed\fun{\state}}}} \sum_{\latentstate_1 \neq \latentstate_2} \coupling\fun{\latentstate_1, \latentstate_2}} \\
    =&  K_{\varphi} \expectedsymbol{\state \sim \stationary{\latentpolicy}}{\dtv{\embed\probtransitions_{\latentpolicy}\fun{\sampledot \mid \state}}{\latentprobtransitions_{\latentpolicy}\fun{\sampledot \mid \embed\fun{\state}}}} \tag{by definition of TV; TV coincides with $\wassersteinsymbol{\condition{\neq}}$} \\
    =&  K_{\varphi} \expected{\state \sim \stationary{\latentpolicy}}{\frac{1}{2} \sum_{\latentstate' \in \latentstates} \left|\embed\probtransitions_{\latentpolicy}\fun{\latentstate' \mid \state} - \latentprobtransitions_{\latentpolicy}\fun{\latentstate' \mid \embed\fun{\state}} \right|} \\
     =&  K_{\varphi} \expected{\state \sim \stationary{\latentpolicy}}{\frac{1}{2} \sum_{\latentstate' \in \latentstates} \left|\expectedsymbol{\latentaction \sim \latentpolicy\fun{\sampledot \mid \state}}\embed\probtransitions\fun{\latentstate' \mid \state, \latentaction} - \expectedsymbol{\latentaction \sim \latentpolicy\fun{\sampledot \mid \state}}\latentprobtransitions\fun{\latentstate' \mid \embed\fun{\state}, \latentaction} \right|} \\
     =&  K_{\varphi} \expected{\state \sim \stationary{\latentpolicy}}{\frac{1}{2} \sum_{\latentstate' \in \latentstates} \left|\expectedsymbol{\latentaction \sim \latentpolicy\fun{\sampledot \mid \state}} \fun{\embed\probtransitions\fun{\latentstate' \mid \state, \latentaction} - \latentprobtransitions\fun{\latentstate' \mid \embed\fun{\state}, \latentaction}} \right|} \\
     \leq& K_{\varphi} \expectedsymbol{\state \sim \stationary{\latentpolicy}}\expected{\latentaction \sim \latentpolicy\fun{\sampledot \mid \state}}{\frac{1}{2} \sum_{\latentstate' \in \latentstates}  \left| \embed\probtransitions\fun{\latentstate' \mid \state, \latentaction} - \latentprobtransitions\fun{\latentstate' \mid \embed\fun{\state}, \latentaction} \right|} \tag{Jensen's  inequality} \\
     \leq& K_\varphi \localtransitionloss{\stationary{\latentpolicy}}.
\end{align*}
Then,
\begin{align*}
    &\expectedsymbol{\state \sim \stationary{\latentpolicy}} \bidistance^{\varphi}_{\latentpolicy}\fun{\state, \embed\fun{\state}} \\
    =& \expected{\state \sim \stationary{\latentpolicy}}{\fun{1 - \discount} \left| \rewards_{\latentpolicy}\fun{\state} - \latentrewards_{\latentpolicy}\fun{\embed\fun{\state}} \right| \cdot \condition{\set{\rewards}}\fun{\varphi} + \discount \wassersteindist{\bidistance_{\latentpolicy}^{\varphi}}{\probtransitions_{\latentpolicy}\fun{\sampledot \mid \state}}{\latentprobtransitions_{\latentpolicy}\fun{\sampledot \mid \embed\fun{\state}}} } \tag{$\labels\fun{\state} = \labels\fun{\embed\fun{\state}}$ by Assumption~\ref{assumption:labeling}} \\
    \leq& \fun{1 - \discount} \localrewardloss{\stationary{\latentpolicy}} \cdot \condition{\set{\rewards}}\fun{\varphi} + \discount \expected{\state \sim \stationary{\latentpolicy}}{ \wassersteindist{\bidistance_{\latentpolicy}^{\varphi}}{\probtransitions_{\latentpolicy}\fun{\sampledot \mid \state}}{\latentprobtransitions_{\latentpolicy}\fun{\sampledot \mid \embed\fun{\state}}} } \\
    \leq& \fun{1 - \discount} \localrewardloss{\stationary{\latentpolicy}} \cdot \condition{\set{\rewards}}\fun{\varphi} \\
    &+ \discount \expected{\state \sim \stationary{\latentpolicy}}{ \wassersteindist{\bidistance_{\latentpolicy}^{\varphi}}{\probtransitions_{\latentpolicy}\fun{\sampledot \mid \state}}{\embed\probtransitions_{\latentpolicy}\fun{\sampledot \mid {\state}}} + \wassersteindist{\bidistance_{\latentpolicy}^{\varphi}}{\embed\probtransitions_{\latentpolicy}\fun{\sampledot \mid \state}}{\latentprobtransitions_{\latentpolicy}\fun{\sampledot \mid \embed\fun{\state}}} } \tag{triangular inequality} \\
    \leq& \fun{1 - \discount} \localrewardloss{\stationary{\latentpolicy}} \cdot \condition{\set{\rewards}}\fun{\varphi} + \discount K_\varphi \localtransitionloss{\stationary{\latentpolicy}} + \discount \expectedsymbol{\state \sim \stationary{\latentpolicy}}{ \wassersteindist{\bidistance_{\latentpolicy}^{\varphi}}{\probtransitions_{\latentpolicy}\fun{\sampledot \mid \state}}{\embed\probtransitions_{\latentpolicy}\fun{\sampledot \mid {\state}}} } \tag{by the inequality derived above} \\
    \leq& \fun{1 - \discount} \localrewardloss{\stationary{\latentpolicy}} \cdot \condition{\set{\rewards}}\fun{\varphi} + \discount K_\varphi \localtransitionloss{\stationary{\latentpolicy}} + \discount \expectedsymbol{\state \sim \stationary{\latentpolicy}}{ \bidistance_{\latentpolicy}^{\varphi}\fun{\state, \embed\fun{\state}} }.
\end{align*}
To see how we pass from the penultimate to the last line, recall that 
\[\expectedsymbol{\state \sim \stationary{\latentpolicy}}\wassersteindist{\bidistance_{\latentpolicy}^{\varphi}}{\probtransitions_{\latentpolicy}\fun{\sampledot \mid \state}}{\embed\probtransitions_{\latentpolicy}\fun{\sampledot \mid {\state}}} = \expectedsymbol{\state \sim \stationary{\latentpolicy}} \sup_{f \in \Lipschf{\bidistance^{\varphi}_{\latentpolicy}}} \left| \expectedsymbol{\state' \sim \probtransitions_{\latentpolicy}\fun{\sampledot \mid \state}}{f\fun{\state'}} - \expectedsymbol{\latentstate' \sim \embed\probtransitions_{\latentpolicy}\fun{\sampledot \mid \state}}{f\fun{\latentstate'}} \right|,\]
where \[\Lipschf{\bidistance^{\varphi}_{\latentpolicy}} = \set{f \colon \states \uplus \latentstates \to \R : \left| f\fun{\state_1^\star} - f\fun{\state_2^{\star}} \right| \leq  \bidistance^{\varphi}_{\latentpolicy}\fun{\state_1^{\star}, \state_2^{\star}}}.\]
Then, we have
\begin{align*}
    &\expectedsymbol{\state \sim \stationary{\latentpolicy}} \sup_{f \in \Lipschf{\bidistance^{\varphi}_{\latentpolicy}}} \left| \expectedsymbol{\state' \sim \probtransitions_{\latentpolicy}\fun{\sampledot \mid \state}}{f\fun{\state'}} - \expectedsymbol{\latentstate' \sim \embed\probtransitions_{\latentpolicy}\fun{\sampledot \mid \state}}{f\fun{\latentstate'}} \right| \\
    =& \expectedsymbol{\state \sim \stationary{\latentpolicy}} \sup_{f \in \Lipschf{\bidistance^{\varphi}_{\latentpolicy}}} \left| \expectedsymbol{\state' \sim \probtransitions_{\latentpolicy}\fun{\sampledot \mid \state}}{f\fun{\state'}} - \expectedsymbol{\state' \sim \probtransitions_{\latentpolicy}\fun{\sampledot \mid \state}}{f\fun{\embed\fun{\state'}}} \right| \\
    =& \expectedsymbol{\state \sim \stationary{\latentpolicy}} \sup_{f \in \Lipschf{\bidistance^{\varphi}_{\latentpolicy}}} \left| \expectedsymbol{\state' \sim \probtransitions_{\latentpolicy}\fun{\sampledot \mid \state}}\fun{f\fun{\state'} - f\fun{\embed\fun{\state'}}} \right| \\
    \leq& \expectedsymbol{\state \sim \stationary{\latentpolicy}} \expectedsymbol{\state' \sim \probtransitions_{\latentpolicy}\fun{\sampledot \mid \state}} \sup_{f \in \Lipschf{\bidistance^{\varphi}_{\latentpolicy}}} \left| {f\fun{\state'} - f\fun{\embed\fun{\state'}}} \right| \\
    =& \expectedsymbol{\state \sim \stationary{\latentpolicy}} \sup_{f \in \Lipschf{\bidistance^{\varphi}_{\latentpolicy}}} \left| {f\fun{\state} - f\fun{\embed\fun{\state}}} \right| \tag{by the stationary property} \\
    \leq& \expectedsymbol{\state \sim \stationary{\latentpolicy}} \bidistance^{\varphi}_{\latentpolicy}\fun{\state, \embed\fun{\state}} \tag{since $f \in \Lipschf{\bidistance^{\varphi}_{\latentpolicy}}$}.
\end{align*}
This finally yields 
\begin{align*}
     (1 - \discount) {\expectedsymbol{\state \sim \stationary{\latentpolicy}} \bidistance^{\varphi}_{\latentpolicy}\fun{\state, \embed\fun{\state}}} &\leq \fun{1 - \discount} \localrewardloss{\stationary{\latentpolicy}} \cdot \condition{\set{\rewards}}\fun{\varphi} + \discount \localtransitionloss{\stationary{\latentpolicy}} K_\varphi \\
    \equiv  \expectedsymbol{\state \sim \stationary{\latentpolicy}} \bidistance^{\varphi}_{\latentpolicy}\fun{\state, \embed\fun{\state}} &\leq \localrewardloss{\stationary{\latentpolicy}} \cdot \condition{\set{\rewards}}\fun{\varphi} + \discount \localtransitionloss{\stationary{\latentpolicy}} \frac{ K_\varphi }{1 - \discount}.
\end{align*}
\end{proof}

\subsection{Abstraction Quality as Bounded Bisimulation Distance}
\begin{lemma}\label{lemma:bidistance-abstraction-quality}
Let $\latentpolicy \in \latentmpolicies$, assume $\latentmdp$ is discrete and $\langle\KR{\latentpolicy}, \KP{\latentpolicy}\rangle$-Lipschitz.
Let $\stationary{\latentpolicy}$ be the stationary distribution of $\mdp$ induced by $\latentpolicy$.
Then, for any pair of states $\state_1, \state_2 \in \states$ such that $\embed\fun{\state_1} = \embed\fun{\state_2}$,
\begin{align*}
    \bidistance^{\rewards}_{\latentpolicy}\fun{\state_1, \state_2} \leq& \left[\localrewardloss{\stationary{\latentpolicy}} + \frac{
    \discount \localtransitionloss{\stationary{\latentpolicy}} \KR{\latentpolicy}}{1 - \discount \KP{\latentpolicy}}\right] \fun{\stationary{\latentpolicy}^{-1}\fun{\state_1} + \stationary{\latentpolicy}^{-1}\fun{\state_2}}, \\
    \bidistance^{\labels}_{\latentpolicy}\fun{\state_1, \state_2} \leq&  \frac{\discount \localtransitionloss{\stationary{\latentpolicy}}}{1 - \discount} \fun{\stationary{\latentpolicy}^{-1}\fun{\state_1} + \stationary{\latentpolicy}^{-1}\fun{\state_2}}.
\end{align*}
\end{lemma}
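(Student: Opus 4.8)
The plan is to reduce the pointwise statement to the expected bound already established in Lemma~\ref{lemma:bidistance-embedding}, exploiting only two structural facts about $\bidistance^{\varphi}_{\latentpolicy}$: that it is a genuine pseudometric (so it obeys symmetry and the triangle inequality on the disjoint union $\states \uplus \latentstates$) and that it is nonnegative. Fix $\varphi \in \set{\rewards, \labels}$ and let $\state_1, \state_2 \in \states$ with $\embed\fun{\state_1} = \embed\fun{\state_2}$. Since $\embed\fun{\state_1}$ and $\embed\fun{\state_2}$ denote the \emph{same} latent state, the pseudometric axiom gives $\bidistance^{\varphi}_{\latentpolicy}\fun{\embed\fun{\state_1}, \embed\fun{\state_2}} = 0$, so the triangle inequality (together with symmetry) yields
\begin{align*}
\bidistance^{\varphi}_{\latentpolicy}\fun{\state_1, \state_2} \leq \bidistance^{\varphi}_{\latentpolicy}\fun{\state_1, \embed\fun{\state_1}} + \bidistance^{\varphi}_{\latentpolicy}\fun{\state_2, \embed\fun{\state_2}}.
\end{align*}
This trades the distance between the two states for the two distances to their common embedding, exactly the quantities controlled \emph{in expectation} by the previous lemma.

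The key step is then to turn the expected bound into a pointwise one. Because every term of the expectation $\expectedsymbol{\state \sim \stationary{\latentpolicy}} \bidistance^{\varphi}_{\latentpolicy}\fun{\state, \embed\fun{\state}}$ is nonnegative, the contribution of the single state $\state_i$ cannot exceed the whole expectation, i.e.
\begin{align*}
\stationary{\latentpolicy}\fun{\state_i} \cdot \bidistance^{\varphi}_{\latentpolicy}\fun{\state_i, \embed\fun{\state_i}} \leq \expectedsymbol{\state \sim \stationary{\latentpolicy}} \bidistance^{\varphi}_{\latentpolicy}\fun{\state, \embed\fun{\state}},
\end{align*}
and dividing by $\stationary{\latentpolicy}\fun{\state_i}$ gives $\bidistance^{\varphi}_{\latentpolicy}\fun{\state_i, \embed\fun{\state_i}} \leq \stationary{\latentpolicy}^{-1}\fun{\state_i} \cdot \expectedsymbol{\state \sim \stationary{\latentpolicy}} \bidistance^{\varphi}_{\latentpolicy}\fun{\state, \embed\fun{\state}}$. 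Summing the two instances $i \in \set{1, 2}$, substituting the bound on the expectation from Lemma~\ref{lemma:bidistance-embedding}---namely $\localrewardloss{\stationary{\latentpolicy}} + \discount \localtransitionloss{\stationary{\latentpolicy}} \KR{\latentpolicy} / \fun{1 - \discount \KP{\latentpolicy}}$ when $\varphi = \rewards$ and $\discount \localtransitionloss{\stationary{\latentpolicy}} / \fun{1 - \discount}$ when $\varphi = \labels$---and factoring this common constant out of $\stationary{\latentpolicy}^{-1}\fun{\state_1} + \stationary{\latentpolicy}^{-1}\fun{\state_2}$ produces precisely the two claimed inequalities.

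I expect the only delicate point to be the second displayed inequality, the ``a single term does not exceed the whole expectation'' step. It holds verbatim whenever $\stationary{\latentpolicy}$ is a discrete distribution, since the expectation is then a sum of nonnegative summands one of which is exactly $\stationary{\latentpolicy}\fun{\state_i} \cdot \bidistance^{\varphi}_{\latentpolicy}\fun{\state_i, \embed\fun{\state_i}}$; I would make this reliance on positive stationary mass explicit. For a state of vanishing stationary probability the factor $\stationary{\latentpolicy}^{-1}\fun{\state_i}$ diverges and the bound degenerates, which is consistent with the reading that abstraction quality can only be certified on states the latent policy visits with non-negligible frequency. The remaining manipulations---one triangle inequality, linearity of expectation, and factoring the shared constant---are routine.
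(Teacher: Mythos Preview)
Your proof is correct and follows essentially the same route as the paper: triangle inequality through the common latent state $\embed\fun{\state_1}=\embed\fun{\state_2}$, then the pointwise-from-average step $\bidistance^{\varphi}_{\latentpolicy}\fun{\state_i,\embed\fun{\state_i}} \leq \stationary{\latentpolicy}^{-1}\fun{\state_i}\,\expectedsymbol{\state \sim \stationary{\latentpolicy}}\bidistance^{\varphi}_{\latentpolicy}\fun{\state,\embed\fun{\state}}$, and finally the expected bound from Lemma~\ref{lemma:bidistance-embedding}. Your explicit discussion of why the pointwise-from-average inequality holds (nonnegativity of every summand) and its degeneration at states of zero stationary mass is a welcome clarification that the paper leaves implicit.
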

\begin{proof}
Let $K_{\rewards} = \frac{(1 - \discount) \KR{\latentpolicy}}{1 - \discount \KP{\latentpolicy}}$, $K_{\labels} = 1$, and $\varphi \in \set{\rewards, \labels}$.
We use the fact that $\bidistance^{\varphi}_{\latentpolicy}\fun{\state, \embed\fun{\state}} \leq \stationary{\latentpolicy}^{-1}\fun{\state} \expectedsymbol{\state \sim \stationary{\latentpolicy}}{\bidistance^{\varphi}_{\latentpolicy}\fun{\state, \embed\fun{\state}}}$ for all $\state \in \states$.
Then, we have
\begin{align*}
    &\bidistance^{\varphi}_{\latentpolicy}\fun{\state_1, \state_2}\\
    \leq& \bidistance^{\varphi}_{\latentpolicy}\fun{\state_1, \embed\fun{\state_1}} + \bidistance^{\varphi}_{\latentpolicy}\fun{\embed\fun{\state_1}, \embed\fun{\state_2}} + \bidistance^{\varphi}_{\latentpolicy}\fun{\embed\fun{\state_2}, \state_2} \tag{triangular inequality} \\
    \leq& \stationary{\latentpolicy}^{-1}\fun{\state_1} \expectedsymbol{\state \sim \stationary{\latentpolicy}}{\bidistance^{\varphi}_{\latentpolicy}\fun{\state_1, \embed\fun{\state_1}}} + \stationary{\latentpolicy}^{-1}\fun{\state_2} \expectedsymbol{\state \sim \stationary{\latentpolicy}}{\bidistance^{\varphi}_{\latentpolicy}\fun{\state_2, \embed\fun{\state_2}}} + \bidistance^{\varphi}_{\latentpolicy}\fun{\embed\fun{\state_1}, \embed\fun{\state_2}} \\
    \leq& \fun{\stationary{\latentpolicy}^{-1}\fun{\state_1} + \stationary{\latentpolicy}^{-1}\fun{\state_2}} \fun{\localrewardloss{\stationary{\latentpolicy}} \cdot \condition{\set{\rewards}}\fun{\varphi} + \discount \localtransitionloss{\stationary{\latentpolicy}} \frac{ K_\varphi }{1 - \discount}} + \bidistance^{\varphi}_{\latentpolicy}\fun{\embed\fun{\state_1}, \embed\fun{\state_2}} \tag{Lemma~\ref{lemma:bidistance-embedding}} \\
    =& \fun{\stationary{\latentpolicy}^{-1}\fun{\state_1} + \stationary{\latentpolicy}^{-1}\fun{\state_2}} \fun{\localrewardloss{\stationary{\latentpolicy}} \cdot \condition{\set{\rewards}}\fun{\varphi} + \discount \localtransitionloss{\stationary{\latentpolicy}} \frac{ K_\varphi }{1 - \discount}}. \tag{$\embed\fun{\state_1} = \embed\fun{\state_2}$ by assumption}
\end{align*}
\end{proof}

\subsection{Value Difference Bound}
\begin{lemma}\label{lemma:qvalues-diff-bound}
    Let $\latentpolicy \in \latentmpolicies$ and assume $\latentmdp$ is discrete and $\langle \KR{\latentpolicy}, \KP{\latentpolicy} \rangle$-Lipschitz.
    Let $\labelset{C}, \labelset{T} \subseteq \atomicprops$, $\varphi \in \set{\until{\labelset{C}}{\labelset{T}}, \eventually \labelset{T}}$ and $\KV = \min\fun{\nicefrac{\Rmax{\latentpolicy}}{1 - \discount}, \nicefrac{\KR{\latentpolicy}}{1 - \discount \KP{\latentpolicy}}}$.
    Then, given the induced stationary distribution $\stationary{\latentpolicy}$ in $\mdp$,
    \begin{align*}
        \expectedsymbol{\state, \latentaction \sim \stationary{\latentpolicy}} \left| \qvalues{\latentpolicy}{}{\state}{\latentaction} - \latentqvalues{\latentpolicy}{}{\embed\fun{\state}}{\latentaction} \right| & \leq \frac{\localrewardloss{\stationary{\latentpolicy}} + \discount \KV \localtransitionloss{\stationary{\latentpolicy}}}{1 - \gamma}, \\ 
        \expectedsymbol{\state, \latentaction \sim \stationary{\latentpolicy}} \left| \qvalues{\latentpolicy}{\varphi}{\state}{\latentaction} - \latentqvalues{\latentpolicy}{\varphi}{\embed\fun{\state}}{\latentaction} \right| & \leq \frac{\discount \localtransitionloss{\stationary{\latentpolicy}}}{1 - \discount}.
    \end{align*}
\end{lemma}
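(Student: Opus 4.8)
The plan is to fix $\stationary{\latentpolicy}$ as the induced stationary distribution in $\mdp$ and to bound
\[
\Delta = \expectedsymbol{\state, \latentaction \sim \stationary{\latentpolicy}} \left| \qvalues{\latentpolicy}{\varphi}{\state}{\latentaction} - \latentqvalues{\latentpolicy}{\varphi}{\embed\fun{\state}}{\latentaction} \right|
\]
by a single self-referential (fixed-point style) inequality, treating $\varphi = \epsilon$ and $\varphi \in \set{\until{\labelset{C}}{\labelset{T}}, \eventually \labelset{T}}$ in parallel. I write $\rewards'$ for the reward attached to $\varphi$ ($\rewards$ if $\varphi = \epsilon$, else $\rewards^{\labelset{T}}$), $\latentrewards'$ for its latent analogue, and $K$ for the effective Lipschitz constant of the latent value function with respect to $\distance_{\latentstates}$. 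As a preliminary I record the \emph{one-step-back} inequality $\expectedsymbol{\state \sim \stationary{\latentpolicy}} \left| \values{\latentpolicy}{\varphi}{\state} - \latentvalues{\latentpolicy}{\varphi}{\embed\fun{\state}} \right| \leq \Delta$, which follows from $\values{\latentpolicy}{\varphi}{\state} = \expectedsymbol{\latentaction \sim \latentpolicy\fun{\sampledot \mid \state}} \qvalues{\latentpolicy}{\varphi}{\state}{\latentaction}$, the analogous identity in $\latentmdp$, and Jensen's inequality.

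Next I expand a single summand through the Bellman form of the action-value function and apply the triangle inequality, separating a reward term from a discounted value term:
\[
\left| \qvalues{\latentpolicy}{\varphi}{\state}{\latentaction} - \latentqvalues{\latentpolicy}{\varphi}{\embed\fun{\state}}{\latentaction} \right| \leq \left| \rewards'\fun{\state, \latentaction} - \latentrewards'\fun{\embed\fun{\state}, \latentaction} \right| + \discount \left| \expectedsymbol{\state' \sim \probtransitions\fun{\sampledot \mid \state, \latentaction}} \values{\latentpolicy}{\varphi}{\state'} - \expectedsymbol{\latentstate' \sim \latentprobtransitions\fun{\sampledot \mid \embed\fun{\state}, \latentaction}} \latentvalues{\latentpolicy}{\varphi}{\latentstate'} \right|.
\]
Into the value term I insert the intermediate $\expectedsymbol{\state' \sim \probtransitions} \latentvalues{\latentpolicy}{\varphi}{\embed\fun{\state'}} = \expectedsymbol{\latentstate' \sim \embed\probtransitions} \latentvalues{\latentpolicy}{\varphi}{\latentstate'}$ and split once more. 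The first resulting piece is $\expectedsymbol{\state' \sim \probtransitions\fun{\sampledot \mid \state, \latentaction}} \left| \values{\latentpolicy}{\varphi}{\state'} - \latentvalues{\latentpolicy}{\varphi}{\embed\fun{\state'}} \right|$; the second is bounded through the Kantorovich--Rubinstein duality,
\[
\left| \expectedsymbol{\latentstate' \sim \embed\probtransitions\fun{\sampledot \mid \state, \latentaction}} \latentvalues{\latentpolicy}{\varphi}{\latentstate'} - \expectedsymbol{\latentstate' \sim \latentprobtransitions\fun{\sampledot \mid \embed\fun{\state}, \latentaction}} \latentvalues{\latentpolicy}{\varphi}{\latentstate'} \right| \leq K \, \wassersteindist{\distance_{\latentstates}}{\embed\probtransitions\fun{\sampledot \mid \state, \latentaction}}{\latentprobtransitions\fun{\sampledot \mid \embed\fun{\state}, \latentaction}}.
\]
For $\varphi = \epsilon$ the constant is $K = \KV$ via the smooth-valued and Lipschitz-valued properties recalled for DeepMDPs; for reachability $\latentvalues{\latentpolicy}{\varphi}{\sampledot}$ ranges in $[0,1]$, so $K = 1$.

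Taking $\expectedsymbol{\state, \latentaction \sim \stationary{\latentpolicy}}$ of the three terms closes the argument. The reward term yields $\localrewardloss{\stationary{\latentpolicy}}$ when $\varphi = \epsilon$ and vanishes for reachability, since $\rewards^{\labelset{T}}$ is determined by the labels and $\embed$ preserves them (Assumption~\ref{assumption:labeling}). The Wasserstein piece yields exactly $\discount K \localtransitionloss{\stationary{\latentpolicy}}$ by definition of the local transition loss. The crucial piece is the first one: by the defining property of $\stationary{\latentpolicy}$, $\expectedsymbol{\state, \latentaction \sim \stationary{\latentpolicy}} \expectedsymbol{\state' \sim \probtransitions\fun{\sampledot \mid \state, \latentaction}} g\fun{\state'} = \expectedsymbol{\state' \sim \stationary{\latentpolicy}} g\fun{\state'}$, so this term equals $\discount\, \expectedsymbol{\state' \sim \stationary{\latentpolicy}} \left| \values{\latentpolicy}{\varphi}{\state'} - \latentvalues{\latentpolicy}{\varphi}{\embed\fun{\state'}} \right| \leq \discount \Delta$ by the one-step-back inequality. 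Combining gives $\Delta \leq \localrewardloss{\stationary{\latentpolicy}} \cdot \condition{\set{\epsilon}}\fun{\varphi} + \discount K \localtransitionloss{\stationary{\latentpolicy}} + \discount \Delta$, and solving for $\Delta$ (dividing by $1 - \discount$) produces the two stated bounds.

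I expect the main obstacle to be the reachability case, where $\values{\latentpolicy}{\varphi}{\sampledot}$ and $\latentvalues{\latentpolicy}{\varphi}{\sampledot}$ are defined through absorbing modifications of the form $\selfloop{\mdp_\state}{B}$ (with $B = \getstates{\neg \labelset{C}} \cup \getstates{\labelset{T}}$ or $B = \getstates{\labelset{T}}$) rather than through $\mdp$ itself, whereas $\localtransitionloss{\stationary{\latentpolicy}}$ and the stationarity identity are stated for the original transitions $\probtransitions$. The reconciliation is that at absorbing states the two value functions coincide---both equal $1$ on $\getstates{\labelset{T}}$ and $0$ on $\getstates{\neg \labelset{C}}$, since labels are preserved---so these states contribute $0$ to every value difference, and the recursion only propagates through non-absorbing states, where the modified transitions agree with $\probtransitions$; this also patches the one-step-back inequality, whose underlying identity need only hold off the absorbing set. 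A secondary point requiring care is justifying that the effective discrete-metric Lipschitz constant of $\latentvalues{\latentpolicy}{\varphi}{\sampledot}$ is $\KV$ (resp.\ $1$), i.e.\ controlling its range rather than merely its supremum norm.
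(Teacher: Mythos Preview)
Your proposal is correct and follows essentially the same route as the paper's proof: Bellman expansion, insertion of the intermediate term $\expectedsymbol{\state' \sim \probtransitions}\latentvalues{\latentpolicy}{\varphi}{\embed\fun{\state'}}$, Kantorovich--Rubinstein/TV bound on the second piece, stationarity on the first, and the resulting self-referential inequality. The paper handles the reachability case by writing $\Delta = \expectedsymbol{\state,\latentaction}\big[\mathbb{I}(\state)\,|\cdots|\big]$ with $\mathbb{I}$ the indicator of non-absorbing states (using Assumption~\ref{assumption:labeling} exactly as you do to see the difference vanishes on $\getstates{\neg\labelset{C}}\cup\getstates{\labelset{T}}$), then drops the indicator and proceeds with the original transitions $\probtransitions,\latentprobtransitions$; this is precisely your reconciliation of the absorbing modification with $\localtransitionloss{\stationary{\latentpolicy}}$ and the stationarity identity, so your anticipated obstacle is handled in the same way.
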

\begin{proof}
The first result for discounted return follows directly from \citep{DBLP:conf/icml/GeladaKBNB19}.
The choice of $\KV$ is derived from the fact that $\KP{\latentpolicy} \leq \frac{1}{\discount}$ since $\latentmdp$ is discrete:
$\dtvsymbol$ is bounded by one as well as the diameter of $\latentstates$, equipped with the discrete metric.
For the reachability cases, first observe that $\qvalues{\latentpolicy}{\varphi}{\state}{\sampledot} = 1$ when $\state \in \getstates{\labelset{T}} \subseteq \states$ and $\latentqvalues{\latentpolicy}{\varphi}{\latentstate}{\sampledot} = 1$ when $\latentstate \in \getstates{\labelset{T}} \subseteq \latentstates$.
Moreover, if $\varphi = \until{\labelset{C}}{\labelset{T}}$, $\qvalues{\latentpolicy}{\varphi}{\state}{\sampledot} = 0$ when $\state \in \getstates{\labelset{\neg C}} \subseteq \states$ and $\latentqvalues{\latentpolicy}{\varphi}{\latentstate}{\sampledot} = 0$ when $\latentstate \in \getstates{\labelset{\neg C}} \subseteq \latentstates$.
By Assumption~\ref{assumption:labeling}, $\state \in \getstates{\neg \labelset{C}} \cup \getstates{\labelset{T}}$ implies that the value difference is zero since the labels of each state and those of its latent abstraction are the same. 
Therefore, let $\mathbb{I}$ be $\condition{\getstates{\labelset{C}} \cap \getstates{\neg \labelset{T}}}$ if $\varphi = \until{\labelset{C}}{\labelset{T}}$ and $\condition{\getstates{\neg \labelset{T}}}$ if $\varphi = \eventually \labelset{T}$,
\begin{align*}
    & \expectedsymbol{\state, \latentaction \sim \stationary{\latentpolicy}} \left| \qvalues{\latentpolicy}{\varphi}{\state}{\latentaction} - \latentqvalues{\latentpolicy}{\varphi}{\embed\fun{\state}}{\latentaction} \right| \\
    =& \expected{\state, \latentaction \sim \stationary{\latentpolicy}}{ \mathbb{I}\fun{\state} \left| \expected{\state' \sim \probtransitions\fun{\sampledot \mid \state, \latentaction}}{\discount \values{\latentpolicy}{\varphi}{\state'}} - \expected{\latentstate' \sim \latentprobtransitions\fun{\sampledot \mid \embed\fun{\state}, \latentaction}}{\discount \latentvalues{\latentpolicy}{\varphi}{\latentstate'}} \right|} \\
    \leq& \discount \expectedsymbol{\state, \latentaction \sim \stationary{\latentpolicy}} \left| \expectedsymbol{\state' \sim \probtransitions\fun{\sampledot \mid \state, \latentaction}}{\values{\latentpolicy}{\varphi}{\state'}} - \expectedsymbol{\latentstate' \sim \latentprobtransitions\fun{\sampledot \mid \embed\fun{\state}, \latentaction}}{\latentvalues{\latentpolicy}{\varphi}{\latentstate'}}  \right| \\
    \leq& \discount \expectedsymbol{\state, \latentaction \sim \stationary{\latentpolicy}} \left| \expected{\state' \sim \probtransitions\fun{\sampledot \mid \state, \latentaction}}{\values{\latentpolicy}{\varphi}{\state'} - \latentvalues{\latentpolicy}{\varphi}{\embed\fun{\state'}}} + \expectedsymbol{}_{\substack{\latentstate' \sim \latentprobtransitions\fun{\sampledot \mid \embed\fun{\state}, \latentaction} \\ \state' \sim \probtransitions\fun{\sampledot \mid \state, \action}}}\left[{\latentvalues{\latentpolicy}{\varphi}{\embed\fun{\state'}} - \latentvalues{\latentpolicy}{\varphi}{\latentstate'}}\right] \right| \\
    \leq& \discount \expectedsymbol{\state, \latentaction \sim \stationary{\latentpolicy}} \left| \expected{\state' \sim \probtransitions\fun{\sampledot \mid \state, \latentaction}}{\values{\latentpolicy}{\varphi}{\state'} - \latentvalues{\latentpolicy}{\varphi}{\embed\fun{\state'}}} \right| + \discount  \expectedsymbol{\state, \latentaction \sim \stationary{\latentpolicy}} \left|\expectedsymbol{}_{\substack{\latentstate' \sim \latentprobtransitions\fun{\sampledot \mid \embed\fun{\state}, \latentaction} \\ \state' \sim \probtransitions\fun{\sampledot \mid \state, \action}}}\left[{\latentvalues{\latentpolicy}{\varphi}{\embed\fun{\state'}} - \latentvalues{\latentpolicy}{\varphi}{\latentstate'}}\right] \right| \tag{triangular inequality} \\
    \leq& \discount \expectedsymbol{\state, \latentaction \sim \stationary{\latentpolicy}} \left| \expected{\state' \sim \probtransitions\fun{\sampledot \mid \state, \latentaction}}{\values{\latentpolicy}{\varphi}{\state'} - \latentvalues{\latentpolicy}{\varphi}{\embed\fun{\state'}}} \right| \\
    & + \discount  \expectedsymbol{\state, \latentaction \sim \stationary{\latentpolicy}} \left|\expectedsymbol{\state' \sim \probtransitions\fun{\sampledot \mid \state, \latentaction}}{\latentvalues{\latentpolicy}{\varphi}{\embed\fun{\state'}} - \expectedsymbol{\latentstate' \sim \latentprobtransitions\fun{\sampledot \mid \embed\fun{\state}, \latentaction}}\latentvalues{\latentpolicy}{\varphi}{\latentstate'}} \right| \\
    \leq& \discount \expectedsymbol{\state, \latentaction \sim \stationary{\latentpolicy}} \left| \expected{\state' \sim \probtransitions\fun{\sampledot \mid \state, \latentaction}}{\values{\latentpolicy}{\varphi}{\state'} - \latentvalues{\latentpolicy}{\varphi}{\embed\fun{\state'}}} \right| + \discount  \expectedsymbol{\state, \latentaction \sim \stationary{\latentpolicy}} \wassersteindist{\condition{\neq}}{\embed\probtransitions\fun{\sampledot\mid \state, \latentaction}}{\latentprobtransitions\fun{\sampledot \mid \embed\fun{\state}, \latentaction}} \tag{by definition of the Wasserstein dual} \\
    \leq& \discount \expectedsymbol{\state, \latentaction \sim \stationary{\latentpolicy}} \left| \expected{\state' \sim \probtransitions\fun{\sampledot \mid \state, \latentaction}}{\values{\latentpolicy}{\varphi}{\state'} - \latentvalues{\latentpolicy}{\varphi}{\embed\fun{\state'}}} \right| + \discount \localtransitionloss{\stationary{\latentpolicy}} \\
    \leq& \discount \expectedsymbol{\state, \latentaction \sim \stationary{\latentpolicy}} \expectedsymbol{\state' \sim \probtransitions\fun{\sampledot \mid \state, \latentaction}} \left|\values{\latentpolicy}{\varphi}{\state'} - \latentvalues{\latentpolicy}{\varphi}{\embed\fun{\state'}} \right| + \discount \localtransitionloss{\stationary{\latentpolicy}} \tag{Jensen's inequality} \\
    \leq& \discount \expectedsymbol{\state \sim \stationary{\latentpolicy}} \left|\values{\latentpolicy}{\varphi}{\state} - \latentvalues{\latentpolicy}{\varphi}{\embed\fun{\state}} \right| + \discount \localtransitionloss{\stationary{\latentpolicy}} \tag{by the stationary property} \\
    =& \discount \expectedsymbol{\state \sim \stationary{\latentpolicy}} \left|\expected{\latentaction \sim \latentpolicy\fun{\sampledot \mid \embed\fun{\state}}}{\qvalues{\latentpolicy}{\varphi}{\state}{ \latentaction} - \latentqvalues{\latentpolicy}{\varphi}{\embed\fun{\state}}{\latentaction}} \right| + \discount \localtransitionloss{\stationary{\latentpolicy}}\\
    \leq& \discount \expectedsymbol{\state, \latentaction \sim \stationary{\latentpolicy}} \left|\qvalues{\latentpolicy}{\varphi}{\state}{\latentaction} - \latentqvalues{\latentpolicy}{\varphi}{\embed\fun{\state}}{\latentaction} \right| + \discount \localtransitionloss{\stationary{\latentpolicy}} \tag{Jensen's inequality} 
\end{align*}
Therefore,
\begin{align*}
    (1 - \discount) \expectedsymbol{\state, \latentaction \sim \stationary{\latentpolicy}} \left|\qvalues{\latentpolicy}{\varphi}{\state}{\latentaction} - \latentqvalues{\latentpolicy}{\varphi}{\embed\fun{\state}}{\latentaction} \right| &\leq \discount \localtransitionloss{\stationary{\latentpolicy}}, \text{ i.e.},\\
    \expectedsymbol{\state, \latentaction \sim \stationary{\latentpolicy}} \left|\qvalues{\latentpolicy}{\varphi}{\state}{\latentaction} - \latentqvalues{\latentpolicy}{\varphi}{\embed\fun{\state}}{\latentaction} \right| &\leq \frac{\discount \localtransitionloss{\stationary{\latentpolicy}}}{1 - \discount}
\end{align*}
\end{proof}

\subsection{Abstraction Quality as Value Difference Bound}
\begin{lemma}\label{lem:quality-latent-space}
    Let $\latentpolicy \in \latentmpolicies$ and assume $\latentmdp$ is discrete and $\langle \KR{\latentpolicy}, \KP{\latentpolicy} \rangle$-Lipschitz.
    Let $\labelset{C}, \labelset{T} \subseteq \atomicprops$, $\varphi \in \set{\until{\labelset{C}}{\labelset{T}}, \eventually \labelset{T}}$, $\KV = \min\fun{\nicefrac{\Rmax{\latentpolicy}}{1 - \discount}, \nicefrac{\KR{\latentpolicy}}{1 - \discount \KP{\latentpolicy}}}$, and $\stationary{\latentpolicy}$ be the induced stationary distribution in $\mdp$.
    Then, for any pair of states $\state_1, \state_2 \in \states$ such that $\embed\fun{\state_1} = \embed\fun{\state_2}$,
    \begin{align*}
      \left| \values{\latentpolicy}{}{\state_1} {-}
      \values{\latentpolicy}{}{\state_2} \right| &\leq
      \frac{\localrewardloss{\stationary{\latentpolicy}} {+} \discount \KV
      \localtransitionloss{\stationary{\stationary{\latentpolicy}}}}{1 {-}
      \discount} \fun{\stationary{\latentpolicy}^{-1}\fun{\state_1} {+}
      \stationary{\latentpolicy}^{-1}\fun{\state_2}},\\
      \left| \values{\latentpolicy}{\varphi}{\state_1} {-}
      \values{\latentpolicy}{\varphi}{\state_2} \right| &\leq \frac{\discount
      \localtransitionloss{\stationary{\latentpolicy}}}{1 {-} \discount}
      \fun{\stationary{\latentpolicy}^{-1}\fun{\state_1} {+}
      \stationary{\latentpolicy}^{-1}\fun{\state_2}}.
    \end{align*}
\end{lemma}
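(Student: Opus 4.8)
The plan is to reduce this statement to the expected action-value difference bound of Lemma~\ref{lemma:qvalues-diff-bound}, following exactly the pattern that turns the averaged bisimulation bound of Lemma~\ref{lemma:bidistance-embedding} into the pointwise abstraction-quality bound of Lemma~\ref{lemma:bidistance-abstraction-quality}. Fix $\varphi \in \set{\epsilon, \until{\labelset{C}}{\labelset{T}}, \eventually \labelset{T}}$ (with the convention of the Background that $\varphi = \epsilon$ denotes the plain discounted return) and define the pointwise state-value gap $g^{\varphi}\fun{\state} = \left| \values{\latentpolicy}{\varphi}{\state} - \latentvalues{\latentpolicy}{\varphi}{\embed\fun{\state}} \right|$, i.e., the discrepancy between running $\latentpolicy$ from $\state$ in $\mdp$ and from $\embed\fun{\state}$ in $\latentmdp$. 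First I would decompose via the triangle inequality
\[
\left| \values{\latentpolicy}{\varphi}{\state_1} - \values{\latentpolicy}{\varphi}{\state_2} \right| \leq g^{\varphi}\fun{\state_1} + \left| \latentvalues{\latentpolicy}{\varphi}{\embed\fun{\state_1}} - \latentvalues{\latentpolicy}{\varphi}{\embed\fun{\state_2}} \right| + g^{\varphi}\fun{\state_2},
\]
and observe that the middle term vanishes because $\embed\fun{\state_1} = \embed\fun{\state_2}$ by assumption. It thus remains to bound each $g^{\varphi}\fun{\state_i}$ pointwise.

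Next I would relate the state-value gap to the action-value gap already controlled by Lemma~\ref{lemma:qvalues-diff-bound}. Since $\latentpolicy$ selects the \emph{same} action distribution $\latentpolicy\fun{\sampledot \mid \state}$ in both models, we have $\values{\latentpolicy}{\varphi}{\state} - \latentvalues{\latentpolicy}{\varphi}{\embed\fun{\state}} = \expectedsymbol{\latentaction \sim \latentpolicy\fun{\sampledot \mid \state}}\left[ \qvalues{\latentpolicy}{\varphi}{\state}{\latentaction} - \latentqvalues{\latentpolicy}{\varphi}{\embed\fun{\state}}{\latentaction} \right]$, so by Jensen's inequality $g^{\varphi}\fun{\state} \leq \expectedsymbol{\latentaction \sim \latentpolicy\fun{\sampledot \mid \state}}\left| \qvalues{\latentpolicy}{\varphi}{\state}{\latentaction} - \latentqvalues{\latentpolicy}{\varphi}{\embed\fun{\state}}{\latentaction} \right|$. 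Taking the expectation over $\state \sim \stationary{\latentpolicy}$ and unfolding the shorthand $\state, \latentaction \sim \stationary{\latentpolicy}$ (sample $\state$ from $\stationary{\latentpolicy}$, then $\latentaction$ from the policy) gives $\expectedsymbol{\state \sim \stationary{\latentpolicy}} g^{\varphi}\fun{\state} \leq \expectedsymbol{\state, \latentaction \sim \stationary{\latentpolicy}}\left| \qvalues{\latentpolicy}{\varphi}{\state}{\latentaction} - \latentqvalues{\latentpolicy}{\varphi}{\embed\fun{\state}}{\latentaction} \right|$, which Lemma~\ref{lemma:qvalues-diff-bound} bounds by $\nicefrac{(\localrewardloss{\stationary{\latentpolicy}} + \discount \KV \localtransitionloss{\stationary{\latentpolicy}})}{(1 - \discount)}$ in the discounted case and by $\nicefrac{\discount \localtransitionloss{\stationary{\latentpolicy}}}{(1 - \discount)}$ in the reachability cases.

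Finally I would pass from this average back to the two individual states using the same pointwise-to-expectation inequality $g^{\varphi}\fun{\state} \leq \stationary{\latentpolicy}^{-1}\fun{\state} \cdot \expectedsymbol{\state \sim \stationary{\latentpolicy}} g^{\varphi}\fun{\state}$ already exploited in the proof of Lemma~\ref{lemma:bidistance-abstraction-quality}. Substituting the two bounds above for $g^{\varphi}\fun{\state_1}$ and $g^{\varphi}\fun{\state_2}$ and summing produces exactly the claimed factor $\fun{\stationary{\latentpolicy}^{-1}\fun{\state_1} + \stationary{\latentpolicy}^{-1}\fun{\state_2}}$, yielding both displayed inequalities (the first from $\varphi = \epsilon$, the second from the reachability cases). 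The only genuinely delicate step is this last reweighting by the inverse stationary measure: it is what upgrades an averaged guarantee into a pointwise one, and it is legitimate here precisely because it is the same device validated for Lemma~\ref{lemma:bidistance-abstraction-quality}. Everything else is the triangle inequality, Jensen's inequality, and a direct appeal to Lemma~\ref{lemma:qvalues-diff-bound}; note in particular that using the action-value route (rather than re-deriving through $\bidistance^{\rewards}_{\latentpolicy}$) is what preserves the tighter constant $\KV = \min\fun{\nicefrac{\Rmax{\latentpolicy}}{1 - \discount}, \nicefrac{\KR{\latentpolicy}}{1 - \discount \KP{\latentpolicy}}}$ in the first bound.
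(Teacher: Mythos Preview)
Your proposal is correct and follows essentially the same route as the paper: triangle inequality through $\latentvalues{\latentpolicy}{\varphi}{\embed\fun{\state_i}}$, the middle term drops since $\embed\fun{\state_1}=\embed\fun{\state_2}$, then the inverse-stationary reweighting $g^{\varphi}\fun{\state}\leq\stationary{\latentpolicy}^{-1}\fun{\state}\,\expectedsymbol{\state\sim\stationary{\latentpolicy}}g^{\varphi}\fun{\state}$ combined with Lemma~\ref{lemma:qvalues-diff-bound}. Your explicit Jensen step linking $\expectedsymbol{\state\sim\stationary{\latentpolicy}}g^{\varphi}\fun{\state}$ to the action-value expectation of Lemma~\ref{lemma:qvalues-diff-bound} is a detail the paper leaves implicit, so your write-up is in fact slightly more complete.
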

\begin{proof}
    The discounted return case follows directly from \citet{DBLP:conf/icml/GeladaKBNB19} and Lemma~\ref{lemma:qvalues-diff-bound}.
    For the reachability cases, we similarly use the fact that $\left| \values{\latentpolicy}{\varphi}{\state} -  \latentvalues{\latentpolicy}{\varphi}{\embed\fun{\state}} \right| \leq \stationary{\latentpolicy}^{-1}\fun{\state} \expectedsymbol{\state \sim \stationary{\latentpolicy}}\left| \values{\latentpolicy}{\varphi}{\state} -  \latentvalues{\latentpolicy}{\varphi}{\embed\fun{\state}} \right|$, which gives us
    \begin{align*}
        & \left| \values{\latentpolicy}{\varphi}{\state_1} -  \values{\latentpolicy}{\varphi}{\state_2} \right| \\
        =& \left| \values{\latentpolicy}{\varphi}{\state_1} -  \latentvalues{\latentpolicy}{\varphi}{\embed\fun{\state_1}}  +  \latentvalues{\latentpolicy}{\varphi}{\embed\fun{\state_1}} -  \latentvalues{\latentpolicy}{\varphi}{\embed\fun{\state_2}}  +  \latentvalues{\latentpolicy}{\varphi}{\embed\fun{\state_2}} - \values{\latentpolicy}{\varphi}{\state_2} \right|\\
        \leq& \left| \values{\latentpolicy}{\varphi}{\state_1} -  \latentvalues{\latentpolicy}{\varphi}{\embed\fun{\state_1}} \right| + \left| \latentvalues{\latentpolicy}{\varphi}{\embed\fun{\state_1}} -  \latentvalues{\latentpolicy}{\varphi}{\embed\fun{\state_2}} \right| + \left| \latentvalues{\latentpolicy}{\varphi}{\embed\fun{\state_2}} - \values{\latentpolicy}{\varphi}{\state_2} \right| \tag{triangular inequality} \\
        \leq& \stationary{\latentpolicy}^{-1}\fun{\state_1} \expectedsymbol{\state \sim \stationary{\latentpolicy}} \left| \values{\latentpolicy}{\varphi}{\state_1} -  \latentvalues{\latentpolicy}{\varphi}{\embed\fun{\state_1}} \right| + \left| \latentvalues{\latentpolicy}{\varphi}{\embed\fun{\state_1}} -  \latentvalues{\latentpolicy}{\varphi}{\embed\fun{\state_2}} \right| \\
        & + \stationary{\latentpolicy}^{-1}\fun{\state_2} \expectedsymbol{\state \sim \stationary{\latentpolicy}} \left| \latentvalues{\latentpolicy}{\varphi}{\embed\fun{\state_2}} - \values{\latentpolicy}{\varphi}{\state_2} \right| \\
        =& \stationary{\latentpolicy}^{-1}\fun{\state_1} \expectedsymbol{\state \sim \stationary{\latentpolicy}} \left| \values{\latentpolicy}{\varphi}{\state_1} -  \latentvalues{\latentpolicy}{\varphi}{\embed\fun{\state_1}} \right| + \stationary{\latentpolicy}^{-1}\fun{\state_2} \expectedsymbol{\state \sim \stationary{\latentpolicy}} \left| \values{\latentpolicy}{\varphi}{\state_2} -  \latentvalues{\latentpolicy}{\varphi}{\embed\fun{\state_2}} \right| \tag{$\embed\fun{\state_1} = \embed\fun{\state_2}$ by assumption} \\
        \leq& \frac{\discount \localtransitionloss{\stationary{\latentpolicy}}}{1 - \discount} \fun{\stationary{\latentpolicy}^{-1}\fun{\state_1} + \stationary{\latentpolicy}^{-1}\fun{\state_2}} \tag{by Lemma~\ref{lemma:qvalues-diff-bound}}
    \end{align*}
\end{proof}
\subsection{Transition-reward function}
In general RL environments, it is quite common for rewards to be generated from a reward function defined over transitions rather than over state-action pairs., i.e., $\rewards \colon \states \times \actions \times \states$.
In that case, we have $\rewards\fun{\state, \action} = \expectedsymbol{\state' \sim \probtransitions\fun{\sampledot \mid \state, \action}}{\rewards\fun{\state, \action, \state'}}$.
Taking transition-rewards into account, we set up an upper bound on $\localrewardloss{\stationary{\latentpolicy}}$ that can be efficiently estimated from samples (using Lemma~\ref{lemma:pac-loss}).
\begin{lemma}
Let $\stationary{\latentpolicy}$ be a stationary distribution of $\mdp$ under $\latentpolicy \in \latentmpolicies$, then
\begin{align*}
    &\localrewardloss{\stationary{\latentpolicy}} \leq \expectedsymbol{\state, \latentaction, \state' \sim \stationary{\latentpolicy}}\left| \rewards\fun{\state, \latentaction, \state'} - \latentrewards\fun{\embed\fun{\state}, \latentaction} \right| = \localrewardlossupper{\stationary{\latentpolicy}}.
\end{align*}
\end{lemma}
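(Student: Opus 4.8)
The plan is to obtain this bound by a single application of Jensen's inequality, exploiting the transition-reward decomposition $\rewards\fun{\state, \latentaction} = \expectedsymbol{\state' \sim \probtransitions\fun{\sampledot \mid \state, \latentaction}}{\rewards\fun{\state, \latentaction, \state'}}$ recalled just above the statement. The only structural fact needed is that the latent reward $\latentrewards\fun{\embed\fun{\state}, \latentaction}$ is a function of the latent state and action alone, hence constant with respect to the successor state $\state'$.

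First I would unfold the definition of $\localrewardloss{\stationary{\latentpolicy}}$ and substitute the transition-reward expression for the original reward, giving
\[
\localrewardloss{\stationary{\latentpolicy}} = \expectedsymbol{\state, \latentaction \sim \stationary{\latentpolicy}}\left| \expected{\state' \sim \probtransitions\fun{\sampledot \mid \state, \latentaction}}{\rewards\fun{\state, \latentaction, \state'}} - \latentrewards\fun{\embed\fun{\state}, \latentaction} \right|.
\]
Since $\latentrewards\fun{\embed\fun{\state}, \latentaction}$ does not depend on $\state'$, I would absorb it into the inner expectation over $\state'$ as a constant:
\[
\localrewardloss{\stationary{\latentpolicy}} = \expectedsymbol{\state, \latentaction \sim \stationary{\latentpolicy}}\left| \expected{\state' \sim \probtransitions\fun{\sampledot \mid \state, \latentaction}}{\rewards\fun{\state, \latentaction, \state'} - \latentrewards\fun{\embed\fun{\state}, \latentaction}} \right|.
\]

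Then I would apply Jensen's inequality to the convex absolute value, moving $\left|\sampledot\right|$ inside the inner expectation, and finally collapse the two nested expectations into a single expectation over the joint stationary distribution using its definition $\stationary{\latentpolicy}(\state, \latentaction, \state') = \stationary{\latentpolicy}(\state, \latentaction) \cdot \probtransitions(\state' \mid \state, \latentaction)$. This yields exactly $\expectedsymbol{\state, \latentaction, \state' \sim \stationary{\latentpolicy}}\left| \rewards\fun{\state, \latentaction, \state'} - \latentrewards\fun{\embed\fun{\state}, \latentaction} \right| = \localrewardlossupper{\stationary{\latentpolicy}}$. There is no genuine obstacle here: the argument is entirely routine, and the only step requiring a moment of care is the absorption of the latent reward term into the inner expectation, which hinges solely on its independence from $\state'$.
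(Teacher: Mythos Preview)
Your proposal is correct and follows essentially the same approach as the paper: substitute the transition-reward decomposition $\rewards\fun{\state, \latentaction} = \expectedsymbol{\state' \sim \probtransitions\fun{\sampledot \mid \state, \latentaction}}{\rewards\fun{\state, \latentaction, \state'}}$, apply Jensen's inequality to pull the absolute value inside the inner expectation, and collapse the nested expectations into a single one over $\stationary{\latentpolicy}(\state, \latentaction, \state')$. The only difference is that you make explicit the absorption of the latent-reward constant into the inner expectation, which the paper leaves implicit.
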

\begin{proof}
The upper bound $\localrewardlossupper{\stationary{\latentpolicy}}$ on the local reward loss is given as follows:
\begin{align*}
    & \expectedsymbol{\state, \latentaction \sim \stationary{\latentpolicy}}\left| \rewards\fun{\state, \latentaction} - \latentrewards\fun{\embed\fun{\state}, \latentaction} \right| \\
    =& \expectedsymbol{\state, \latentaction \sim \stationary{\latentpolicy}}\left| \expectedsymbol{\state' \sim \probtransitions\fun{\sampledot \mid \state, \latentaction}}\rewards\fun{\state, \latentaction, \state'} - \latentrewards\fun{\embed\fun{\state}, \latentaction} \right| \\
    \leq& \expectedsymbol{\state, \latentaction \sim \stationary{\latentpolicy}} \expectedsymbol{\state' \sim \probtransitions\fun{\sampledot \mid \state, \latentaction}} \left|\rewards\fun{\state, \latentaction, \state'} - \latentrewards\fun{\embed\fun{\state}, \latentaction} \right| \tag{Jensen's inequality}\\
    =& \expectedsymbol{\state, \latentaction, \state' \sim \stationary{\latentpolicy}} \left|\rewards\fun{\state, \latentaction, \state'} - \latentrewards\fun{\embed\fun{\state}, \latentaction} \right|
\end{align*}
\end{proof}
This ensures Lem.~\ref{lemma:pac-loss} and Thm.~\ref{thm:pac-qvaluedif} to remain valid by approximating $\localrewardlossupper{\stationary{\latentpolicy}}$ instead of $\localrewardloss{\stationary{\latentpolicy}}$.

\subsection{Local Transition Loss Upper Bound}
Recall that to deal with general MDPs, we study
an upper bound on $\localtransitionloss{\stationary{\latentpolicy}}$ (namely, $\localtransitionlossupper{\stationary{\latentpolicy}}$) that can be efficiently approximated from samples.
We derive it in the following Lemma.
\begin{lemma}\label{lemma:local-loss-upper-bounds}
Let $\stationary{\latentpolicy}$ be a stationary distribution of $\mdp$ under $\latentpolicy \in \latentmpolicies$, then
\begin{align*}
    &\localtransitionloss{\stationary{\latentpolicy}} \leq \expectedsymbol{\state, \latentaction, \state' \sim \stationary{\latentpolicy}} \wassersteindist{\distance_{\latentstates}}{\embed\fun{\sampledot \mid \state'}}{\latentprobtransitions\fun{\sampledot \mid \embed\fun{\state}, \latentaction}} = \localtransitionlossupper{\stationary{\latentpolicy}} 
\end{align*}
where we write $\stationary{\latentpolicy}\fun{\state, \latentaction, \state'}  =  \stationary{\latentpolicy}\fun{\state, \latentaction} \cdot \probtransitions\fun{\state' \mid \state, \latentaction}$
and $\embed\fun{\latentstate \mid \state} = \condition{=}\fun{\embed\fun{\state}, \latentstate}$. 
\end{lemma}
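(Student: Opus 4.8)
The plan is to exploit the convexity of the Wasserstein distance in its first argument, combined with the observation that the push-forward $\embed\probtransitions\fun{\sampledot \mid \state, \latentaction}$ is exactly the mixture of the (Dirac) embeddings $\embed\fun{\sampledot \mid \state'}$ taken over successors $\state' \sim \probtransitions\fun{\sampledot \mid \state, \latentaction}$. First I would record the identity
\[
\embed\probtransitions\fun{\sampledot \mid \state, \latentaction} = \int_{\states} \embed\fun{\sampledot \mid \state'}\, \probtransitions\fun{d\state' \mid \state, \latentaction},
\]
which holds directly from the definition of the push-forward together with $\embed\fun{\latentstate \mid \state} = \condition{=}\fun{\embed\fun{\state}, \latentstate}$: integrating the Dirac mass $\embed\fun{\sampledot \mid \state'}$ against $\probtransitions$ yields the law of $\embed\fun{\state'}$ when $\state' \sim \probtransitions\fun{\sampledot \mid \state, \latentaction}$.

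The key step is a Jensen-type inequality for a fixed target $Q = \latentprobtransitions\fun{\sampledot \mid \embed\fun{\state}, \latentaction}$, namely
\[
\wassersteindist{\distance_{\latentstates}}{\embed\probtransitions\fun{\sampledot \mid \state, \latentaction}}{Q} \leq \expectedsymbol{\state' \sim \probtransitions\fun{\sampledot \mid \state, \latentaction}} \wassersteindist{\distance_{\latentstates}}{\embed\fun{\sampledot \mid \state'}}{Q},
\]
which is the convexity of $\wassersteinsymbol{\distance_{\latentstates}}\fun{\sampledot, Q}$ applied to the mixture above. To justify it I would, for each $\state'$, fix an optimal coupling $\coupling_{\state'} \in \couplings{\embed\fun{\sampledot \mid \state'}}{Q}$ and form the mixture $\coupling = \int_\states \coupling_{\state'}\, \probtransitions\fun{d\state' \mid \state, \latentaction}$. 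Since $Q$ does not depend on $\state'$, the first marginal of $\coupling$ is $\int_\states \embed\fun{\sampledot \mid \state'}\, \probtransitions\fun{d\state' \mid \state, \latentaction} = \embed\probtransitions\fun{\sampledot \mid \state, \latentaction}$ and its second marginal is $Q$, so $\coupling \in \couplings{\embed\probtransitions\fun{\sampledot \mid \state, \latentaction}}{Q}$; bounding the infimum defining $\wassersteinsymbol{\distance_{\latentstates}}$ by the cost incurred under this particular $\coupling$ and applying Fubini gives the inequality.

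Finally I would take $\expectedsymbol{\state, \latentaction \sim \stationary{\latentpolicy}}$ of both sides and use the stated factorization $\stationary{\latentpolicy}\fun{\state, \latentaction, \state'} = \stationary{\latentpolicy}\fun{\state, \latentaction} \cdot \probtransitions\fun{\state' \mid \state, \latentaction}$ to absorb the inner expectation over $\state'$ into the joint distribution, turning the right-hand side into $\expectedsymbol{\state, \latentaction, \state' \sim \stationary{\latentpolicy}} \wassersteindist{\distance_{\latentstates}}{\embed\fun{\sampledot \mid \state'}}{\latentprobtransitions\fun{\sampledot \mid \embed\fun{\state}, \latentaction}} = \localtransitionlossupper{\stationary{\latentpolicy}}$, which is exactly the claim.

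The main obstacle I anticipate is making the mixture-coupling construction rigorous when $\probtransitions\fun{\sampledot \mid \state, \latentaction}$ is a continuous measure: one must check that $\state' \mapsto \coupling_{\state'}$ can be chosen measurably and that $\int_\states \coupling_{\state'}\, \probtransitions\fun{d\state' \mid \state, \latentaction}$ is a bona fide probability measure with the asserted marginals. Because each $\embed\fun{\sampledot \mid \state'}$ is a Dirac at $\embed\fun{\state'}$, this simplifies markedly --- the coupling of a point mass with $Q$ is essentially determined by $Q$ alone --- so measurability reduces to that of $\embed$, which we assume throughout.
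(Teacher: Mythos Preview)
Your argument is correct and proves the lemma, but it takes a different route from the paper. You work on the \emph{primal} side: you express $\embed\probtransitions\fun{\sampledot \mid \state, \latentaction}$ as a $\probtransitions$-mixture of Dirac masses and then invoke convexity of $\wassersteinsymbol{\distance_{\latentstates}}\fun{\sampledot, Q}$ via a mixture-of-couplings construction. The paper instead works on the \emph{dual} side, using the Kantorovich--Rubinstein representation $\wassersteindist{\distance_{\latentstates}}{P}{Q} = \sup_{f \in \Lipschf{\distance_{\latentstates}}} \left| \expectedsymbol{P} f - \expectedsymbol{Q} f \right|$: it rewrites $\expectedsymbol{\latentstate_1 \sim \embed\probtransitions} f(\latentstate_1)$ as $\expectedsymbol{\state' \sim \probtransitions} f(\embed(\state'))$, applies Jensen's inequality to pull the absolute value inside the expectation over $\state'$, and then swaps the $\sup_f$ with $\expectedsymbol{\state'}$. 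Both approaches are standard proofs of the same convexity fact. Your primal argument is arguably more conceptual (``Wasserstein is convex in each marginal''), and you correctly note that the measurable-selection issue evaporates because each $\embed\fun{\sampledot \mid \state'}$ is a Dirac; the paper's dual argument sidesteps that concern entirely since it never needs to choose couplings. Either is perfectly acceptable here.
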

\begin{proof}
The upper bound $\localtransitionlossupper{\stationary{\latentpolicy}}$ on the local transition loss is obtained as follows.
\begin{align*}
    & \expectedsymbol{\state, \latentaction \sim \stationary{\latentpolicy}}{\sup_{f \in \Lipschf{\distance_{\latentstates}}} \left| \expectedsymbol{\latentstate_1 \sim \embed\probtransitions\fun{\sampledot \mid \state, \latentaction}} f\fun{\latentstate_1} - \expectedsymbol{\latentstate_2 \sim \latentprobtransitions\fun{\sampledot \mid \embed\fun{\state}, \latentaction}} f\fun{\latentstate_2} \right|} \\
    =& \expectedsymbol{\state, \latentaction \sim \stationary{\latentpolicy}}{\sup_{f \in \Lipschf{\distance_{\latentstates}}} \left| \expectedsymbol{\state' \sim \probtransitions\fun{\sampledot \mid \state, \latentaction}} f\fun{\embed\fun{\state'}} - \expectedsymbol{\latentstate_2 \sim \latentprobtransitions\fun{\sampledot \mid \embed\fun{\state}, \latentaction}} f\fun{\latentstate_2} \right|} \\
    =& \expectedsymbol{\state, \latentaction \sim \stationary{\latentpolicy}}{\sup_{f \in \Lipschf{\distance_{\latentstates}}} \left| \expected{\state' \sim \probtransitions\fun{\sampledot \mid \state, \latentaction}}{f\fun{\embed\fun{\state'}} - \expectedsymbol{\latentstate_2 \sim \latentprobtransitions\fun{\sampledot \mid \embed\fun{\state}, \latentaction}} f\fun{\latentstate_2}} \right|} \\
    \leq& \expectedsymbol{\state, \latentaction \sim \stationary{\latentpolicy}}\sup_{f \in \Lipschf{\distance_{\latentstates}}} \expectedsymbol{\state' \sim \probtransitions\fun{\sampledot \mid \state, \latentaction}} \left|f\fun{\embed\fun{\state'}} - \expectedsymbol{\latentstate_2 \sim \latentprobtransitions\fun{\sampledot \mid \embed\fun{\state}, \latentaction}} f\fun{\latentstate_2} \right| \tag{Jensen's inequality}\\
    \leq& \expectedsymbol{\state, \latentaction \sim \stationary{\latentpolicy}} \expectedsymbol{\state' \sim \probtransitions\fun{\sampledot \mid \state, \latentaction}} \sup_{f \in \Lipschf{\distance_{\latentstates}}} \left|f\fun{\embed\fun{\state'}} - \expectedsymbol{\latentstate_2 \sim \latentprobtransitions\fun{\sampledot \mid \embed\fun{\state}, \latentaction}} f\fun{\latentstate_2} \right| \\
    =& \expectedsymbol{\state, \latentaction, \state' \sim \stationary{\latentpolicy}} \sup_{f \in \Lipschf{\distance_{\latentstates}}} \left|\expectedsymbol{\latentstate_1 \sim {\embed\fun{\sampledot \mid \state'}}} f\fun{{\latentstate_1}} - \expectedsymbol{\latentstate_2 \sim \latentprobtransitions\fun{\sampledot \mid \embed\fun{\state}, \latentaction}} f\fun{\latentstate_2} \right| \\
    =& \expectedsymbol{\state, \latentaction, \state' \sim \stationary{\latentpolicy}} \wassersteindist{\distance_{\latentstates}}{\embed\fun{\sampledot \mid \state'} }{\latentprobtransitions\fun{\sampledot \mid \embed\fun{\state}, \latentaction}}
\end{align*}
\end{proof}

\subsection{Proof of Lemma~\ref{lemma:pac-loss}}
\begin{proof}
Since $\mdp$ is ergodic, the agent acts in the BSCC induced by $\latentpolicy$ and each tuple $\tuple{\state_t, \action_t, \reward_t, \state_{t + 1}}$ can be considered to be drawn independently from the stationary distribution $\stationary{\latentpolicy}$ of the BSCC in the limit.
The result follows then from the Hoeffding's bound \citep{10.2307/2282952:hoeffding}:
\begin{itemize}
    \item consider the sequence of independent random variables $\seq{X^\rewards}{T - 1}$ such that \[X^{\rewards}_t = |\reward_t - \latentrewards\fun{\embed\fun{\state_t}, \latentaction_t}| \in \mathopen[0, 1\mathclose],\] then
    $\localrewardlossapprox{\stationary{\latentpolicy}} = \frac{1}{T} \sum_{t=0}^{T-1} X^\rewards_t$ and
    $\Pr\fun{ \left|\localrewardloss{\star} - \localrewardlossapprox{\stationary{\latentpolicy}} \right| \geq \error} \leq \proberror_\rewards$, with $\proberror_\rewards = 2 e^{-2T\error^2}$ and $\localrewardloss{\star} = \localrewardloss{\stationary{\latentpolicy}}$ if $\rewards$ is defined over state-action pairs and $\localrewardloss{\star} = \localrewardlossupper{\stationary{\latentpolicy}}$ if it is defined over transitions.
    \item consider the sequence of random variables $\seq{X^{\probtransitions}}{T - 1}$ such that
    \[
        X^{\probtransitions}_t =  1 - \latentprobtransitions\fun{\embed\fun{\state_{t + 1}} \mid \embed\fun{\state_t}, \latentaction_t} \in \mathopen[0, 1 \mathclose],
    \]
    The latent MDP $\latentmdp$ being discrete, we have
    \begin{align*}
        \localtransitionlossupper{\stationary{\latentpolicy}} & = \expectedsymbol{\state, \latentaction, \state' \sim \stationary{\latentpolicy}} \dtv{\embed\fun{\sampledot \mid \state'}}{\latentprobtransitions\fun{\sampledot \mid \embed\fun{\state}, \action}} \\
        & = \expected{\state, \latentaction, \state' \sim \stationary{\latentpolicy}}{ \frac{1}{2} \sum_{\latentstate' \in \states} \left| \embed\fun{\latentstate' \mid \state'} - \latentprobtransitions\fun{\latentstate' \mid \embed\fun{\state}, \latentaction} \right|} \\
        & = \expected{\state, \latentaction, \state' \sim \stationary{\latentpolicy}}{ \frac{1}{2} \cdot \fun{\fun{1 - \latentprobtransitions\fun{\embed\fun{\state'} \mid \embed\fun{\state}, \latentaction}} + \sum_{\latentstate' \in \states \setminus\set{\embed\fun{\state'}}} \left| 0 - \latentprobtransitions\fun{\latentstate' \mid \embed\fun{\state}, \latentaction} \right|}}
        \tag{because $\embed\fun{\latentstate' \mid \state'} = 1$ if $ \embed\fun{\state'} = \latentstate'$ and $0$ otherwise.} \\
        & = \expected{\state, \latentaction, \state' \sim \stationary{\latentpolicy}}{ \frac{1}{2} \cdot \fun{\fun{1 - \latentprobtransitions\fun{\embed\fun{\state'} \mid \embed\fun{\state}, \latentaction}} + \sum_{\latentstate' \in \states \setminus\set{\embed\fun{\state'}}} \latentprobtransitions\fun{\latentstate' \mid \embed\fun{\state}, \latentaction}}} \\
        & = \expected{\state, \latentaction, \state' \sim \stationary{\latentpolicy}}{\frac{1}{2} \cdot 2 \cdot \fun{1 - \latentprobtransitions\fun{\embed\fun{\state'} \mid \embed\fun{\state}, \latentaction}}} \\
        & = \expected{\state, \latentaction, \state' \sim \stationary{\latentpolicy}}{1 - \latentprobtransitions\fun{\embed\fun{\state'} \mid \embed\fun{\state}, \latentaction}}
    \end{align*}
    then $\localtransitionlossapprox{\stationary{\latentpolicy}} = \frac{1}{T}\sum_{t = 0}^{T - 1} X^{\probtransitions}_{t}$ and $\Pr\fun{ \left| \localtransitionlossupper{\stationary{\latentpolicy}} - \localtransitionlossapprox{\stationary{\latentpolicy}} \right| \geq \error} \leq \proberror_\probtransitions$, with $\proberror_\probtransitions = 2e^{-2T\error^2}$.
\end{itemize}
Therefore,
\begin{align*}
    & \Pr\fun{\left|\localrewardloss{\star} - \localrewardlossapprox{\stationary{\latentpolicy}} \right| \geq \error \vee \left| \localtransitionlossupper{\stationary{\latentpolicy}} - \localtransitionlossapprox{\stationary{\latentpolicy}} \right| \geq \error} \\
    & \leq \proberror_{\rewards} + \proberror_{\probtransitions} \\
    & = 2 e^{-2 T \error^2} + 2e^{-2 T \error^2} \\
    & = 4 e^{-2T\error^2}.
\end{align*}
We thus consider $\proberror \geq 4e^{-2T\error^2}$, i.e., $\log\fun{\frac{\proberror}{4}} \geq -2T\error^2$ and $T \geq \frac{- \log\fun{\frac{\proberror}{4}}}{2 \error^2}$, which yields
\[ \Pr\fun{\left|\localrewardloss{\star} - \localrewardlossapprox{\stationary{\latentpolicy}} \right| \leq \error \wedge \left| \localtransitionlossupper{\stationary{\latentpolicy}} - \localtransitionlossapprox{\stationary{\latentpolicy}} \right| \leq \error} \geq 1 - \proberror.\]
\end{proof}

\subsection{Proof of Theorem~\ref{thm:pac-qvaluedif}}

\begin{proof}
Let $\error' \in \mathopen]0, 1\mathclose[$ and $T \geq \Big\lceil \frac{- \log\fun{\frac{\proberror}{4}}}{2 \error'^2} \Big\rceil$, then by applying Lemma~\ref{lemma:qvalues-diff-bound} and Lemma~\ref{lemma:pac-loss}, we have
\begin{align*}
    \expectedsymbol{\state, \latentaction \sim \stationary{\latentpolicy}} \left| \qvalues{\latentpolicy}{}{\state}{\latentaction} - \latentqvalues{\latentpolicy}{}{\embed\fun{\state}}{\latentaction} \right| & \leq && \frac{\localrewardloss{\stationary{\latentpolicy}} + \discount \KV \localtransitionloss{\stationary{\latentpolicy}}}{1 - \gamma} &\leq& \frac{\fun{\localrewardlossapprox{\stationary{\latentpolicy}} + \error'} + \discount \KV \fun{\localtransitionlossapprox{\stationary{\latentpolicy}} + \error'}}{1 - \gamma}, \text{ and} \\ 
    \expectedsymbol{\state, \latentaction \sim \stationary{\latentpolicy}} \left| \qvalues{\latentpolicy}{\varphi}{\state}{\latentaction} - \latentqvalues{\latentpolicy}{\varphi}{\embed\fun{\state}}{\latentaction} \right| & \leq && \frac{\discount \localtransitionloss{\stationary{\latentpolicy}}}{1 - \discount} \quad \; \quad &\leq& \frac{\discount \fun{\localtransitionlossapprox{\stationary{\latentpolicy}} + \error'}}{1 - \discount}
\end{align*}
with probability at least $1 - \proberror$.
To offer an error of at most $\error$, we need
\begin{align*}
    && \frac{\error' + \discount \KV \error'}{1 - \discount} &\leq \error& \\
    &\equiv& \error' + \discount \KV \error' &\leq \error (1 - \discount)& \\
    &\equiv& \error'(1 + \discount \KV) &\leq \error (1 - \discount)& \\
    &\equiv& \error' &\leq \frac{\error (1 - \discount)}{1 + \discount \KV}&
\end{align*}
which yields the result.
\end{proof}

\section{Details on Variational MDPs}
\smallparagraph{ELBO.} In this work, we focus on the case where
$\dklsymbol$ is used as divergence measure.
The goal is thus to optimize $\min_{\decoderparameter}
\dkl{\mdp_{\policy}}{\decoder}$, i.e., maximizing the marginal
log-likelihood of traces of the original MDP $\expected{\trace \sim \mdp_\policy}{\log \decoder\fun{\trace}}$.
Recall that
\begin{equation}
    \decoder\fun{\trace} =
            \int_{\inftrajectories{\latentmdp_{{\decoderparameter}}}} \decoder\fun{\trace \mid \seq{\latentvariable}{T}} \, d\latentprobtransitions_{\latentpolicy_{ \decoderparameter}}\fun{\seq{\latentvariable}{T}} \label{eq:maximum-likelihood2}
\end{equation}
where $\trace = \defaulttrace$, $\latentvariable \in \latentvariables$ such that $\latentvariables = \latentstates$ if $\latentactions = \actions$ and $\latentvariables = \latentstates \times \latentactions$ otherwise,
$\latentprobtransitions_{\latentpolicy_{\decoderparameter}}\fun{\seq{\latentstate}{T}} = \prod_{t = 0}^{T - 1} \latentprobtransitions_{\latentpolicy_{\decoderparameter}}\fun{\latentstate_{t + 1} \mid \latentstate_t}$, and $\latentprobtransitions_{\latentpolicy_{ \decoderparameter}}\fun{\seq{\latentstate}{T}, \seq{\latentaction}{T-1}} = \prod_{t = 0}^{T - 1} {\latentpolicy_{\decoderparameter}}\fun{\latentaction_t \mid \latentstate_t} \cdot \latentprobtransitions_{\decoderparameter}\fun{\latentstate_{t + 1} \mid \latentstate_t, \latentaction_t}$.
The dependency of $\trace$ on $\latentvariables$ in Eq.~\ref{eq:maximum-likelihood2}
is made explicit by the law of total probability.

Optimizing $\expected{\trace \sim \mdp_\policy}{\log \decoder\fun{\trace}}$ through Eq.~\ref{eq:maximum-likelihood2} is typically intractable \citep{DBLP:journals/corr/KingmaW13}.
To overcome this, we use an encoder $\encoder\fun{\seq{\latentvariable}{T} \mid \trace}$ to approximate the intractable true posterior
    $\decoder\fun{\seq{\latentvariable}{T} \mid \trace} = \decoder\fun{\trace \mid \seq{\latentvariable}{T}} \cdot \nicefrac{{}\latentprobtransitions_{\latentpolicy_{ \decoderparameter}}\fun{\seq{\latentvariable}{T}}}{\decoder\fun{\trace}}$. 
Such an encoder can be learned through the following optimization:
\begin{equation}
    \min_{\encoderparameter, \decoderparameter} \; \expectedsymbol{\trace \sim \mdp_\policy}{\,\divergence{\encoder\fun{\sampledot \mid \trace}}{{\decoder\fun{\sampledot \mid \trace}}},}
\end{equation}
where $\divergencesymbol$ denotes a discrepancy measure.
If we let once again $\divergencesymbol$ be $\dklsymbol$, one can set a \emph{lower bound} on the log-likelihood of produced traces,
often referred to as \emph{evidence lower bound} (ELBO, \citealp{DBLP:journals/jmlr/HoffmanBWP13}) as follows:
\begin{align*}
    &&& {\dkl{\encoder\fun{\sampledot \mid \trace}}{{\decoder\fun{\sampledot \mid \trace}}}} \notag \\
    &&= \,& {\expected{\seq{\latentvariable}{T} \sim \encoder\fun{\sampledot \mid \trace}}{\log \encoder\fun{\seq{\latentvariable}{T} \mid \trace} - {\decoder\fun{\seq{\latentvariable}{T} \mid \trace}}}} \notag \\
    && \begin{aligned}
       = \\ \, \\
    \end{aligned} \;& \begin{aligned}
        \expectedsymbol{\seq{\latentvariable}{T} \sim \encoder\fun{\sampledot \mid \trace}}[&\log \encoder\fun{\seq{\latentvariable}{T} \mid \trace} - \log \decoder\fun{\trace \mid \seq{\latentvariable}{T}} - \log \latentprobtransitions_{\latentpolicy_{\decoderparameter}}\fun{\seq{\latentvariable}{T}}] + \log \decoder\fun{\trace}
    \end{aligned} %
    \\
    &\iff&& {\log \decoder\fun{\trace} - \dkl{\encoder\fun{\sampledot \mid \trace}}{{\decoder\fun{\sampledot \mid \trace}}}} \notag \\
    && \begin{aligned}
        = 
    \end{aligned} \:& \begin{aligned}
        \expectedsymbol{\seq{\latentvariable}{T} \sim \encoder\fun{\sampledot \mid \trace}}[&\log \decoder\fun{\trace \mid \seq{\latentvariable}{T}}  + \log \latentprobtransitions_{\latentpolicy_{ \decoderparameter}}\fun{\seq{\latentvariable}{T}}
        - \log \encoder\fun{\seq{\latentvariable}{T} \mid \trace}]
    \end{aligned} \label{eq:ground-action-elbo}
\end{align*}

\smallparagraph{Behavioral model.} We assume that the behavioral model $\decoder$ allows recovering the latent MDP parameters (i.e., $\latentprobtransitions_\decoderparameter$, $\latentrewards_{\decoderparameter}$, and $\latentlabels_{\decoderparameter}$), by decomposing it into reward and label models, the distilled policy, as well as a generative model $\decoder^{\generative}$ allowing the reconstruction of states and actions.
Therefore, states, actions, rewards, and labels are independent given the latent sequence:
\begin{align*}
    &\decoder\fun{\seq{\state}{T}, \seq{\action}{T - 1}, \seq{\reward}{T - 1}, \seq{\labeling}{T} \mid \seq{\latentstate}{T}}\\
    = & \decoder^{\generative}\fun{\seq{\state}{T} \mid \seq{\latentstate}{T}} \cdot \decoder\fun{\seq{\action}{T - 1}, \seq{\reward}{T - 1} \mid \seq{\latentstate}{T}} \cdot \decoder^{\labels}\fun{\seq{\labeling}{T} \mid \seq{\latentstate}{T}} \\
    = & \decoder^{\generative}\fun{\seq{\state}{T} \mid \seq{\latentstate}{T}} \cdot 
    \latentpolicy_{\decoderparameter}\fun{\seq{\action}{T - 1} \mid \seq{\latentstate}{T}} \cdot
    \decoder^{\rewards}\fun{\seq{\reward}{T - 1} \mid \seq{\latentstate}{T}, \seq{\action}{T - 1}} \cdot  \decoder^{\labels}\fun{\seq{\labeling}{T} \mid \seq{\latentstate}{T}}, \tag{by the chain rule, $\decoder\fun{\seq{\action}{T - 1}, \seq{\reward}{T - 1} \mid \seq{\latentstate}{T}} = \latentpolicy_{\decoderparameter}\fun{\seq{\action}{T - 1} \mid \seq{\latentstate}{T}} \cdot
    \decoder^{\rewards}\fun{\seq{\reward}{T - 1} \mid \seq{\latentstate}{T}, \seq{\action}{T - 1}}$}  \\
    & \text{and} \\
    &\decoder\fun{\seq{\state}{T}, \seq{\action}{T - 1}, \seq{\reward}{T - 1}, \seq{\labeling}{T} \mid \seq{\latentstate}{T}, \seq{\latentaction}{T - 1}}\\
    = & \decoder^{\generative}\fun{\seq{\state}{T} \mid \seq{\latentstate}{T}} \cdot \decoder^{\generative}\fun{\seq{\action}{T - 1} \mid \seq{\latentstate}{T},  \seq{\latentaction}{T - 1}} \cdot \decoder^{\rewards}\fun{\seq{\reward}{T - 1} \mid \seq{\latentstate}{T}, \seq{\latentaction}{T - 1}} \cdot \decoder^{\labels}\fun{\seq{\labeling}{T} \mid \seq{\latentstate}{T}}
\end{align*}

\smallparagraph{Local setting.} We formulate the ELBO in the local setting thanks to the following Lemma.
\begin{lemma}\label{lemma:trace-to-stationary}
Let  $\stationary{\policy}$ be the stationary distribution of $\mdp_\policy$ and $f \colon \states \times \actions \times \images{\rewards} \times \states \to \R$ be a continuous function.
Assume $\states, \actions$ are compact, then 
\begin{align*}
    \lim_{T \to \infty} \frac{1}{T} \expected{\trace \sim \mdp_{\policy}[T]}{\sum_{t = 0}^{T - 1} f\fun{\state_t, \action_t, \reward_t, \state_{t + 1}}} = \expectedsymbol{\state, \action, \reward, \state' \sim \stationary{\policy}} f\fun{\state, \action, \reward, \state'}
\end{align*}
where $\traces{\mdp_{\policy}}\fun{T}$ denotes the set of traces in $\mdp_{\policy}$ of size $T$, $\mdp_{\policy}[T]$ denotes the distribution over $\traces{\mdp_\policy}\fun{T}$, and $ \stationary{\policy}(\state, \action, \reward, \state') = \stationary{\policy}(\state, \action, \state') \cdot \condition{=}(\reward,  \rewards\fun{\state, \action, \state'})$.
\end{lemma}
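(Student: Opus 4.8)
The plan is to read this as a mean ergodic statement for the Markov chain $\mdp_\policy$ and to reduce it to the Cesàro convergence of the $t$-step state distributions $\stationary{\policy}^t\fun{\sampledot \mid \sinit}$ toward the unique stationary distribution $\stationary{\policy}$ guaranteed by Assumption~\ref{assumption:ergodicity}. First I would push the expectation over traces inside the sum: since $\trace \sim \mdp_{\policy}[T]$ starts in $\sinit$, at step $t$ the state is distributed as $\stationary{\policy}^t\fun{\sampledot \mid \sinit}$, the action as $\policy\fun{\sampledot \mid \state_t}$, and $\state_{t+1} \sim \probtransitions\fun{\sampledot \mid \state_t, \action_t}$ with $\reward_t = \rewards\fun{\state_t, \action_t, \state_{t+1}}$. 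Defining the state functional
\[
  g\fun{\state} = \expectedsymbol{\substack{\action \sim \policy\fun{\sampledot \mid \state} \\ \state' \sim \probtransitions\fun{\sampledot \mid \state, \action}}} f\fun{\state, \action, \rewards\fun{\state, \action, \state'}, \state'},
\]
the left-hand side becomes $\frac{1}{T}\sum_{t=0}^{T-1} \int_{\states} g\fun{\state}\, d\stationary{\policy}^t\fun{\state \mid \sinit}$, and the right-hand side is $\int_\states g\, d\stationary{\policy}$ since the coupling $\stationary{\policy}\fun{\state,\action,\reward,\state'}$ forces $\reward = \rewards\fun{\state, \action, \state'}$.

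Second, I would record the regularity of $g$ supplied by the hypotheses. Because $f$ is continuous on the compact domain $\states \times \actions \times \images{\rewards} \times \states$ (with $\images{\rewards} \subseteq \mathopen[-\tfrac12, \tfrac12\mathclose]$ bounded by Assumption~\ref{assumption:rewards}), it is bounded, so $g$ is bounded as an average of bounded quantities; continuity of $\probtransitions$ and $\policy$ (the Feller property) further makes $g$ continuous. This is precisely where compactness of $\states, \actions$ and continuity of $f$ enter.

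Third, I would invoke Assumption~\ref{assumption:ergodicity}, which gives $\stationary{\policy}^t\fun{\sampledot \mid \sinit} \to \stationary{\policy}$ as $t \to \infty$. Passing this limit through the integral yields $\int g\, d\stationary{\policy}^t\fun{\sampledot\mid\sinit} \to \int g\, d\stationary{\policy} = \expectedsymbol{\state,\action,\reward,\state'\sim\stationary{\policy}} f\fun{\state,\action,\reward,\state'}$: if the convergence holds in total variation, boundedness of $g$ suffices, since $\lvert \int g\, d\mu - \int g\, d\nu \rvert \leq 2\norm{g}_\infty \dtv{\mu}{\nu}$; if only weakly, continuity of $g$ is used instead. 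Finally, the Cesàro mean of a convergent sequence shares its limit, so $\frac{1}{T}\sum_{t=0}^{T-1}\int g\, d\stationary{\policy}^t \to \int g\, d\stationary{\policy}$, which is the claim.

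The main obstacle is this last interchange of limit and integral: one must pin down the mode in which $\stationary{\policy}^t\fun{\sampledot\mid\sinit}$ converges to $\stationary{\policy}$ and match it to the regularity of $g$. The cleanest route is total-variation convergence, standard for aperiodic positively recurrent chains, under which only boundedness of $g$ is required; the continuity of $f$ together with compactness then serves both to bound $g$ and, should one prefer to argue via weak convergence of the Feller chain, to guarantee $g$ is continuous so that the limit passes through $\int g$.
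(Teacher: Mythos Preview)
Your proposal is correct and follows essentially the same route as the paper: reduce the trace expectation to the Ces\`aro mean $\frac{1}{T}\sum_{t=0}^{T-1}\int_{\states} g\, d\stationary{\policy}^t\fun{\sampledot\mid\sinit}$ for the state functional $g$, and then pass to the limit using ergodicity. The only cosmetic difference is the order of the two limiting steps: the paper argues that the Ces\`aro averages $\frac{1}{T}\sum_{t}\stationary{\policy}^t$ converge weakly to $\stationary{\policy}$ and then invokes Portmanteau's theorem (using compact support of $f$), whereas you first pass the limit through $\int g\, d\stationary{\policy}^t$ and then take the Ces\`aro mean of the resulting real sequence; your discussion of the regularity of $g$ and the alternative TV route is in fact more explicit than the paper's treatment.
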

\begin{proof}
First, observe that
\begin{align*}
    & \expected{\trace \sim \mdp_{\policy}[T]}{\sum_{t = 0}^{T - 1} f\fun{\state_t, \action_t, \reward_t, \state_{t + 1}}} \\
    =&  \expectedsymbol{\action_0 \sim \policy\fun{\sampledot \mid \sinit}}\expectedsymbol{\state_1 \sim \probtransitions\fun{\sampledot \mid \sinit, \action_0}} \expectedsymbol{\action_1 \sim \policy\fun{\sampledot \mid \state_1}} \cdots \expectedsymbol{\state_T \sim \probtransitions\fun{\sampledot \mid \state_{T - 1}, \action_{T - 1}}} \left[ \sum_{t = 0}^{T - 1} f\fun{\state_t, \action_t, \rewards\fun{\state_t, \action_t, \state_{t + 1}}, \state_{t + 1}}\right]\\
    =& \sum_{t = 0}^{T - 1} \expectedsymbol{\action_0 \sim \policy\fun{\sampledot \mid \sinit}}\expectedsymbol{\state_1 \sim \probtransitions\fun{\sampledot \mid \sinit, \action_0}} \expectedsymbol{\action_1 \sim \policy\fun{\sampledot \mid \state_1}} \cdots \expectedsymbol{\state_T \sim \probtransitions\fun{\sampledot \mid \state_{T - 1}, \action_{T - 1}}} f\fun{\state_t, \action_t, \rewards\fun{\state_t, \action_t, \state_{t + 1}}, \state_{t + 1}}\\
    =& \sum_{t = 0}^{T - 1} \expectedsymbol{\trace \sim \mdp_{\policy}[t + 1]} f\fun{\state_t, \action_t, \reward_t, \state_{t + 1}}.
\end{align*}
Second, let $t \in \N, \state, \state' \in \states, \action \in \act{\state},$ and $\reward = \rewards\fun{\state, \action, \state'}$, also observe that
\begin{align*}
    \stationary{\policy}^t\fun{\state \mid \sinit} &= \int_{\traces{\mdp_{\policy}}\fun{t}} \condition{=}\fun{\state_t, \state} \, d\Prob_{\policy}\fun{Cyl\fun{\seq{\state}{t},\seq{\action}{t - 1}}}
    \tag{where $Cyl\fun{\seq{\state}{t}, \seq{\action}{t}} = \set{\tuple{\seq{\state^{\star}}{\infty}, \seq{\action^{\star}}{\infty}} \in \inftrajectories{\mdp_{\policy}} \mid \state^{\star}_{i + 1} = \state_{i + 1}, \action^{\star}_{i} = \action_i \; \forall i \leq t - 1}$} \\
    f\fun{\state, \action, \reward, \state'} \cdot \stationary{\policy}^t\fun{\state \mid \sinit} &=  f\fun{\state, \action, \reward, \state'} \cdot \int_{\traces{\mdp_{\policy}}\fun{t}} \condition{=}\fun{\state_t, \state} \, d\Prob_{\policy}\fun{Cyl\fun{\seq{\state}{t}, \seq{\action}{t - 1}}} \\
    \int_{\states} f\fun{\state, \action, \reward, \state'} \,d\stationary{\policy}^t\fun{\state \mid \sinit} & =  \int_{\states} \int_{\traces{\mdp_{\policy}}\fun{t}} f\fun{\state, \action, \reward, \state'} \cdot \condition{=}\fun{\state_t, \state} \, d\Prob_{\policy}\fun{Cyl\fun{\seq{\state}{t}, \seq{\action}{t - 1}}} \, d\state \\
    & = \int_{\traces{\mdp_{\policy}}\fun{t}} f\fun{\state_t, \action, \reward, \state'} \, d\Prob_{\policy}\fun{Cyl\fun{\seq{\state}{t}, \seq{\action}{t - 1}}}.
\end{align*}
Therefore, we have
\begin{align*}
    & \lim_{T \to \infty} \frac{1}{T} \expected{\trace \sim \mdp_{\policy}[T]}{\sum_{t = 0}^{T - 1} f\fun{\state_t, \action_t, \reward_t, \state_{t + 1}}} \\
    =& \lim_{T \to \infty} \frac{1}{T} \sum_{t = 0}^{T - 1} \expectedsymbol{\trace \sim \mdp_{\policy}[t + 1]} f\fun{\state_t, \action_t, \reward_t, \state_{t + 1}} \\
    =& \lim_{T \to \infty} \frac{1}{T} \sum_{t = 0}^{T - 1} \int_{\traces{\mdp_{\policy}}\fun{t + 1}} f\fun{\state_t, \action_t, \reward_t, \state_{t + 1}} \, d\Prob_\policy\fun{Cyl\fun{\seq{\state}{t + 1}, \seq{\action}{t}}} \\
    =& \lim_{T \to \infty} \frac{1}{T} \sum_{t = 0}^{T - 1} \int_{\states} \int_{\actions} \int_{\states} f\fun{\state, \action, \rewards\fun{\state, \action, \state'}, \state'} \, d\stationary{\policy}^t\fun{\state\mid\sinit} \, d\policy\fun{\action \mid \state} \, d\probtransitions\fun{\state' \mid \state, \action} \\
    =& \lim_{T \to \infty} \int_{\states}  \frac{1}{T} \sum_{t = 0}^{T - 1} \stationary{\policy}^t\fun{d\state \mid \sinit}  \int_{\actions} \int_{\states} f\fun{\state, \action, \rewards\fun{\state, \action, \state'}, \state'} \, d\policy\fun{\action \mid \state} \, d\probtransitions\fun{\state' \mid \state, \action}.
\end{align*}
By definition of $\stationary{\policy}$ and Assumption~\ref{assumption:ergodicity}, $\stationary{\policy}\fun{A} = \lim_{T \to \infty} \frac{1}{T} \sum_{t = 0}^{T - 1} \stationary{\policy}^{t}\fun{A \mid \state}$ for all $\state \in \states$, $A \in \borel{\states}$, i.e., $\frac{1}{T} \sum_{t = 0}^{T - 1} \stationary{\policy}^{t}\fun{A \mid \state}$ weakly converges to $\stationary{\policy}$ (e.g., \citealtAR{DBLP:books/daglib/0020348}).
By Assumption~\ref{assumption:rewards}, the set of images of $\rewards$ is a compact space.
Therefore, $f$ has compact support and by the Portmanteau's Theorem, 
\begin{align*}
    & \lim_{T \to \infty} \int_{\states}  \frac{1}{T} \sum_{t = 0}^{T - 1} \stationary{\policy}^t\fun{d\state \mid \sinit} \int_{\actions} \int_{\states} f\fun{\state, \action, \rewards\fun{\state, \action, \state'}, \state'} \, d\policy\fun{\action \mid \state} \, d\probtransitions\fun{\state' \mid \state, \action} \\
    =& \int_{\states} \stationary{\policy}\fun{d\state} \int_{\actions} \int_{\states} f\fun{\state, \action, \rewards\fun{\state, \action, \state'}, \state'} \, d\policy\fun{\action \mid \state} \, d\probtransitions\fun{\state' \mid \state, \action} \\
    =& \expectedsymbol{\state, \action, \reward, \state' \sim \stationary{\policy}} f\fun{\state, \action, \reward, \state'}.
\end{align*}
\end{proof}

\begin{corollary}\label{cor:elbo-local}
Let $ \embed\fun{\latentstate, \latentstate' \mid \state, \state'} = \embed\fun{\latentstate \mid \state} \embed\fun{\latentstate' \mid \state'}$, then
\begin{align*}
    & \lim_{T \to \infty} \frac{1}{T} \expectedsymbol{}_{\substack{\trace \sim \mdp_{\policy}[T] \\ \seq{\latentvariable}{T} \sim \encoder\fun{\sampledot \mid \trace}}} \left[\log \decoder\fun{\trace \mid \seq{\latentvariable}{T}} - \dkl{\encoder\fun{\sampledot \mid \trace}}{{\decoder\fun{ \sampledot \mid \trace}}} \right] \\
    =&
    \left\lbrace\begin{array}{@{}r@{\quad}l@{}l@{}}
    \displaystyle \expectedsymbol{}_{\substack{\state, \action, \reward, \state' \sim \stationary{\policy}\\ \latentstate, \latentstate' \sim \embed_{\encoderparameter}\fun{\sampledot \mid \state, \state'}}} \big[ \log \decoder^{\generative}\fun{\state' \mid \latentstate'} + \log \latentpolicy_{\decoderparameter}\fun{\action \mid \latentstate} + \log \decoder^{\rewards}\fun{\reward \mid \latentstate, \action}& \\[-12pt]
    \hfill - \dkl{\embed_{\encoderparameter}\fun{\sampledot \mid \state'}}{\latentprobtransitions_{\latentpolicy_{\decoderparameter}}\fun{\sampledot\mid \latentstate}} \big]
        & \text{if } \latentactions = \actions, \text{ and} \\[10pt]
    \displaystyle \expectedsymbol{}_{\substack{\state, \action, \reward, \state' \sim \stationary{\policy}\\ \latentstate, \latentstate' \sim \embed_{\encoderparameter}\fun{\sampledot \mid \state, \state'} \\ \latentaction \sim \latentembeda_{\encoderparameter}\fun{\sampledot \mid \latentstate, \action}}} \big[ \log \decoder^{\generative}\fun{\state' \mid \latentstate'} + \log\embeda_{\decoderparameter}\fun{\action \mid \latentstate, \latentaction} + \log \decoder^{\rewards}\fun{\reward \mid \latentstate, \latentaction} & \\[-25pt]
    \hfill - \dkl{\embed_{\encoderparameter}\fun{\sampledot \mid \state'}}{\latentprobtransitions_{\decoderparameter}\fun{\sampledot\mid \latentstate}} \quad \;\, & \\[5pt]
    \hfill - \dkl{\latentembeda_{\encoderparameter}\fun{\sampledot\mid\latentstate, \action}}{\latentpolicy_{\decoderparameter}\fun{\sampledot \mid \latentstate}} \big]
        & \text{otherwise},
    \end{array}
    \right.
\end{align*}
where $\trace = \defaulttrace$, $\seq{\latentvariable}{T} = \seq{\latentstate}{T}$ if $\latentactions = \actions$ and $\seq{\latentvariable}{T} = \tuple{\seq{\latentstate}{T}, \seq{\latentaction}{T - 1}}$ otherwise.
\end{corollary}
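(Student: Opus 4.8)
The plan is to reduce the statement to the ergodic-averaging result of Lemma~\ref{lemma:trace-to-stationary} by rewriting the (latent-averaged) ELBO of a length-$T$ trace as a time sum of a single per-step function $f$ of the tuple $\tuple{\state_t, \action_t, \reward_t, \state_{t+1}}$, up to boundary terms that vanish after the $\nicefrac{1}{T}$ prefactor. First I would recognize the bracketed quantity as the evidence lower bound of $\trace$ and rewrite it, via the ELBO identity recalled in the main text, in its tractable form $\expectedsymbol{\seq{\latentvariable}{T}\sim\encoder\fun{\sampledot\mid\trace}}\log\decoder\fun{\trace\mid\seq{\latentvariable}{T}} - \dkl{\encoder\fun{\sampledot\mid\trace}}{\latentprobtransitions_{\latentpolicy_{\decoderparameter}}}$, i.e. a reconstruction term minus the divergence to the latent transition prior. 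This step is essential because the posterior $\decoder\fun{\sampledot\mid\trace}$ is intractable and does not factorize, whereas the prior does.

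Next I would expand both remaining terms using the factorizations fixed in the paper: $\decoder\fun{\trace\mid\seq{\latentstate}{T}}$ splits over $t$ into generative, policy (or action-decoder), and reward factors; $\encoder\fun{\seq{\latentstate}{T}\mid\trace} = \prod_{t=0}^{T}\embed_{\encoderparameter}\fun{\latentstate_t\mid\state_t}$; and $\latentprobtransitions_{\latentpolicy_{\decoderparameter}}$ is a Markov-chain product over latent transitions. Since the encoder draws each $\latentstate_t$ independently from $\embed_{\encoderparameter}\fun{\sampledot\mid\state_t}$, linearity of expectation lets me group the reconstruction of $\state_{t+1}$, the policy and reward terms at step $t$, and the single-step divergence into one summand per $t$, which after marginalizing $\latentstate_t,\latentstate_{t+1}$ is exactly the continuous map
\[
f\fun{\state,\action,\reward,\state'} = \expectedsymbol{\substack{\latentstate\sim\embed_{\encoderparameter}\fun{\sampledot\mid\state}\\ \latentstate'\sim\embed_{\encoderparameter}\fun{\sampledot\mid\state'}}}\Big[\log\decoder^{\generative}\fun{\state'\mid\latentstate'} + \log\latentpolicy_{\decoderparameter}\fun{\action\mid\latentstate} + \log\decoder^{\rewards}\fun{\reward\mid\latentstate,\action} - \dkl{\embed_{\encoderparameter}\fun{\sampledot\mid\state'}}{\latentprobtransitions_{\latentpolicy_{\decoderparameter}}\fun{\sampledot\mid\latentstate}}\Big].
\]
The only leftover pieces are the $t=0$ state reconstruction $\log\decoder^{\generative}\fun{\state_0\mid\latentstate_0}$ and the initial latent KL term; both are $O(1)$ in $T$, hence negligible after dividing by $T$. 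This identifies the latent-averaged bracket with $\sum_{t=0}^{T-1} f\fun{\state_t,\action_t,\reward_t,\state_{t+1}}$ up to vanishing terms.

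It then remains to invoke Lemma~\ref{lemma:trace-to-stationary} on $f$. Its hypotheses hold: $\states$ and $\actions$ are compact, $\images{\rewards}$ is compact by Assumption~\ref{assumption:rewards}, and $f$ is continuous in $\tuple{\state,\action,\reward,\state'}$ because it is a finite $\embed_{\encoderparameter}$-weighted combination of the continuous log-densities $\log\decoder^{\generative}$, $\log\decoder^{\rewards}$, $\log\latentpolicy_{\decoderparameter}$ and of divergences between finite latent distributions, each depending continuously on its conditioning state/action. The Lemma then gives $\lim_{T\to\infty}\nicefrac{1}{T}\,\expectedsymbol{\trace\sim\mdp_{\policy}[T]}\sum_t f = \expectedsymbol{\state,\action,\reward,\state'\sim\stationary{\policy}} f$, which is precisely the claimed local expression with $\latentstate\sim\embed_{\encoderparameter}\fun{\sampledot\mid\state}$ and $\latentstate'\sim\embed_{\encoderparameter}\fun{\sampledot\mid\state'}$. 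The case $\latentactions\neq\actions$ is handled identically, the encoder additionally drawing $\latentaction\sim\latentembeda_{\encoderparameter}\fun{\sampledot\mid\latentstate,\action}$, so that $f$ gains the reconstruction term $\log\embeda_{\decoderparameter}\fun{\action\mid\latentstate,\latentaction}$ and the extra divergence $\dkl{\latentembeda_{\encoderparameter}\fun{\sampledot\mid\latentstate,\action}}{\latentpolicy_{\decoderparameter}\fun{\sampledot\mid\latentstate}}$.

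I expect the main obstacle to lie in the bookkeeping of the second step rather than in any deep idea: reconciling the $T+1$ state-reconstruction factors with the $T$ reward/action/transition factors, reindexing so every summand is a function of one consecutive pair $\tuple{\state_t,\state_{t+1}}$, and verifying that the isolated $t=0$ contributions are genuinely $O(1)$. The one analytic point needing care is confirming that the marginalized per-step integrand $f$ is continuous with compact support so that Lemma~\ref{lemma:trace-to-stationary} truly applies; this relies on the VAE components being continuous in their inputs and, for the Gaussian decoders, on their variances being bounded away from zero over the compact domain.
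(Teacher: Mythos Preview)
Your proposal is correct and follows essentially the same route as the paper: define the per-step function $f$ by marginalizing the latent variables in the single-transition ELBO integrand, then invoke Lemma~\ref{lemma:trace-to-stationary}. The paper's proof is extremely terse (it simply writes down $f$ and says ``yields the result''), whereas you additionally spell out the ELBO rewriting to the tractable prior-KL form, the bookkeeping of the $O(1)$ boundary terms, and the verification of the continuity and compactness hypotheses---all of which the paper leaves implicit.
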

\begin{proof}
Taking 
\[f\fun{\state, \action, \reward, \state'} = \expectedsymbol{\latentstate, \latentstate' \sim \embed_{\encoderparameter}\fun{\sampledot \mid \state, \state'}}\big[ g\fun{\tuple{\state, \action, \reward, \state'}, \tuple{\latentstate, \latentstate'}} \big]\]
with
\begin{align*}
    & g\fun{\tuple{\state, \action, \reward, \state'}, \tuple{\latentstate, \latentstate'}}\\
    =& 
    \left\lbrace\begin{array}{@{}r@{\quad}l@{}l@{}}
    \log \fun{\decoder^{\generative}\fun{\state' \mid \latentstate'} \cdot \decoder^{\rewards}\fun{\reward \mid \latentstate, \action} \cdot \latentpolicy_{\decoderparameter}\fun{\action \mid \latentstate}} - \dkl{\embed_{\encoderparameter}\fun{\sampledot \mid \state'}}{\latentprobtransitions_{\latentpolicy_{\decoderparameter}}\fun{\sampledot\mid \latentstate}} & \text{if } \latentactions = \actions, \text{ and} \\[5pt]
    \expectedsymbol{\latentaction \sim \encoder^\actions\fun{\sampledot \mid \latentstate, \action}} \Big[ \log \fun{\decoder^{\generative}\fun{\state' \mid \latentstate'} \cdot \embeda_{\decoderparameter}\fun{\action \mid \latentstate, \latentaction} \cdot \decoder^{\rewards}\fun{\reward \mid \latentstate, \latentaction}} \hfill & \\[3pt]
    - \dkl{\embed_{\encoderparameter}\fun{\sampledot \mid \state'}}{\latentprobtransitions_{\decoderparameter}\fun{\sampledot\mid \latentstate, \latentaction}} \Big] - \dkl{\latentembeda_{\encoderparameter}\fun{\sampledot\mid\latentstate, \action}}{\latentpolicy_{\decoderparameter}\fun{\sampledot \mid \latentstate}} & \text{otherwise}
    \end{array}
    \right.
\end{align*}
yields the result.
\end{proof}

\subsection{Latent Distributions}
\smallparagraph{Discrete latent state space.} W.l.o.g., we assume latent states to have a binary representation, we further see each bit as a Bernoulli random variable.
Let $\logit \in \R$ and $\temperature \in \mathopen]0, 1 \mathclose]$, $\latentstate_\temperature \in \mathopen[0, 1\mathclose]$ has a \emph{relaxed Bernoulli distribution} $\latentstate_\temperature \sim \relaxedbernoulli{\logit}{\temperature}$ with logit $\logit$ and temperature parameter $\temperature$ iff
(a) if $l$ is a logistic sample with location parameter $\logit\temperature^{-1}$ and scale parameter $\temperature^{-1}$, i.e., $l \sim \logistic{\logit\temperature^{-1}}{\temperature^{-1}}$, then $\latentstate_\temperature = \sigmoid\fun{l}$, $\sigmoid$ being the \emph{sigmoid} function,
(b) $\lim_{\temperature \to 0} \relaxedbernoulli{\logit}{\temperature} = \bernoulli{\logit}$, meaning $\Prob(\textstyle\lim_{\temperature \to 0} \latentstate_\temperature = 1) = \sigmoid\fun{\logit}$, and
(c) let $p_{\logit, \temperature}$ be the density of $\relaxedbernoulli{\logit}{\temperature}$, then $p_{\logit, \temperature}\fun{\latentstate_\temperature}$ is log-convex in $\latentstate_\temperature$.
Since the logistic distribution belongs to the location-scale family, it is fully reparameterizable, i.e., $l \sim \logistic{\logit\temperature^{-1}}{\temperature^{-1}} \equiv x \sim \logistic{0}{1} \text{ and } l = \frac{\logit + x}{\temperature}$.
In practice, we train $\embed_{\encoderparameter}$ and $\latentprobtransitions_{\decoderparameter}$ to infer and generate $\log_2 |\latentstates|$ logits, and we anneal $\temperature$ to $0$ during training while using Bernoulli distributions for evaluation.
As suggested by \citet{DBLP:conf/iclr/MaddisonMT17}, we use two different temperature parameters and we fix their initial values to $\temperature_{\embed_{\encoderparameter}} = \nicefrac{2}{3}$ for the encoder distribution and $\temperature_{\probtransitions_{\decoderparameter}} = \nicefrac{1}{2}$ for the latent transition function.

\smallparagraph{Discrete latent action space.} We learn discrete latent distributions for $\encoder^{\actions}$ and $\latentpolicy_{\decoderparameter}$ by learning the logits of continuous relaxation of discrete distributions, through the \emph{Gumbel softmax trick}.
Let $n = |\latentactions|$, and $\latentactions = \set{\latentaction_1, \dots, \latentaction_n}$.
Drawing $\latentaction_i \sim \categorical{\logits}$  from a discrete (categorical) distribution in $\distributions{\latentactions}$ with logits parameter $\logits \in \R^n$ is equivalent to applying an $\argmax$ operator to the sum of \emph{Gumble samples} and these logits, i.e., $i = \argmax_{k} \left[ \logits_k - \log\fun{- \log \vect{\epsilon}_k} \right]$, where $\vect{\epsilon} \in \mathopen[0, 1\mathclose]^n$ is a uniform noise.
The sampling operator being fully reparameterizable here, $\argmax$ is however not derivable, preventing the operation to be optimized through gradient descent.
The proposed solution is to replace the $\argmax$ operator by a softmax function with temperature parameter $\temperature \in \mathopen]0, \left(n - 1\right)^{-1}\mathclose]$.

Concretely, $\vect{x} \in \mathopen[0, 1\mathclose]^n$ has a \emph{relaxed discrete distribution} $\vect{x} \sim \relaxedcategorical{\logits}{\temperature}$ with logits $\logits$, temperature parameter $\temperature$ iff
(a) let $\vect{\epsilon} \in [0, 1]^n$ be a uniform noise and $\vect{G}_k = {- \log \fun{- \log \vect{\epsilon}_k}}$, then $\vect{x}_k = {\frac{\exp{\fun{\fun{\logits_k + \vect{G}_k} \temperature^{-1}}}}{\sum_{i = 1}^{n} \exp{\fun{\fun{\logits_i + \vect{G}_i} \lambda^{-1}}}}}$ for each $k \in [n]$,
(b) $\lim_{\temperature \to 0} \relaxedcategorical{\logits}{\temperature} = \categorical{\logits}$, meaning $\Prob(\textstyle\lim_{\temperature \to 0} \vect{x}_k = 1) = \frac{\exp{\logits_k}}{\sum_{i = 1}^{n}\exp{\logits_i}}$,
and (c) let $p_{\logits, \temperature}$ be the density of $\relaxedcategorical{\logits}{\temperature}$, then $p_{\logits, \temperature}\fun{\vx}$ is log-convex in $\vx$.
In practice, we train $\encoder^{\actions}$ and $\latentpolicy_{\decoderparameter}$ to infer and generate logits parameter $\logits$, and we anneal $\temperature$ from $(n - 1)^{-1}$ to $0$ during training while using discrete distributions for evaluation.
Again, as prescribed by \citet{DBLP:conf/iclr/MaddisonMT17}, we use two different temperature parameters, one for $\encoder^{\actions}$ and another one for $\latentpolicy_{\decoderparameter}$.

\subsection{Prioritized Experience Replay}
Modern RL techniques allow to learn the parameters $\decoderparameter$ of a policy, model, or value function, by processing encountered experiences $\experience$ 
via a loss function $\loss\fun{\experience, \decoderparameter}$.
If these
are processed sequentially and discarded after all parameter updates, the latter are strongly correlated and rare events are quickly forgotten.
\emph{Prioritized experience replay buffers} \citep{DBLP:journals/corr/SchaulQAS15} are finite data structures $\replaybuffer$ that allow to overcome both issues by
(i) storing experiences $\experience$ with some assigned priority $p_{\experience}$, and
(ii) producing $\experience \sim \replaybuffer$ according to $P\fun{\experience} = p_\experience^\varsigma / \sum_{\experience' \in \replaybuffer} p_{\experience'}^\varsigma$, $\varsigma \in \mathopen[0, 1\mathclose]$, which allows minimizing $\expected{\experience \sim \replaybuffer}{w_{\experience} \cdot \loss\fun{\experience, \decoderparameter}}$ for \emph{importance sampling weight} $w_\experience > 0$.
The replay buffer $\replaybuffer$ is \emph{uniform} if $\varsigma = 0$.
Otherwise, when $\varsigma > 0$, a bias is introduced because priorities change the real distribution of experiences and consequently the estimates of the expectation of $\loss$.
This bias is alleviated via $w_{\experience} = \fun{|\replaybuffer| \cdot P\fun{\experience}}^{-\omega}$ for each $\experience \in \replaybuffer$, $\omega \in \mathopen[0, 1\mathclose]$, where $\omega = 1$ fully corrects it.

We use prioritized replay buffers to store transitions and sample them when optimizing our loss function, i.e., $\elbo$.
This allows to alleviate the posterior collapse problem, as shown in Figure~\ref{appendix:fig:prioritized-replay}.
We introduced two priority functions, yielding a bucket-based and a loss-based prioritized replay buffer.
In the following, we elaborate how precisely we assign priorities via the loss-based priority function.

\smallparagraph{Loss-based priorities.}~Draw $\tuple{\state, \action, \reward, \state'} \sim \stationary{\policy}$, we aim at assigning $p_{\tuple{\state, \action, \reward, \state'}}$ to its \emph{individual transition loss}, i.e.,
$\loss_{\elbo}$:
for $\latentstate, \latentstate' \sim \embed_{\encoderparameter}\fun{\sampledot \mid \state, \state'}$, $\latentaction \sim \encoder^\actions\fun{\sampledot \mid \latentstate, \action}$ if $\latentactions \neq \actions$ and $\latentaction = \action$ else,
    \begin{align*}
        & \loss_{\elbo}\fun{\tuple{\state, \action, \reward, \state'}, \tuple{\latentstate, \latentaction, \latentstate'}} \\
        =&  - \log \decoder^\generative\fun{\state' \mid \latentstate'} - \log \decoder^{\rewards}\fun{\reward \mid \latentstate, \latentaction} - \log \latentpolicy_{\decoderparameter}\fun{\latentaction \mid \latentstate} + \log \embed_{\encoderparameter}\fun{\latentstate' \mid \state'} \\
        & - \condition{=}\fun{\latentactions, \actions} \cdot \log \latentprobtransitions_{\latentpolicy_{\decoderparameter}}\fun{\latentstate' \mid \latentstate} - \condition{\neq}\fun{\latentactions, \actions} \cdot \fun{\log \embeda_{\decoderparameter}\fun{\action \mid \latentstate, \latentaction} + \log \latentprobtransitions_{\decoderparameter}\fun{\latentstate' \mid \latentstate, \latentaction} - \log \encoder^{\actions}\fun{\latentaction \mid \latentstate, \action}}.
    \end{align*}
Notice that we cannot assign directly $p_{\tuple{\state, \action, \reward, \state'}}$ to $\loss_{\elbo}$ since priorities require to be strictly positive to compute the replay buffer distribution.
A solution is to pass $\loss_{\elbo}\fun{\tuple{\state, \action, \reward, \state'}, \tuple{\latentstate, \latentaction, \latentstate'}}$ to the logistic function, i.e.,
\[
    p_{\tuple{\state, \action, \reward, \state'}} = x^{\star} \cdot \sigma\fun{k \cdot \fun{\loss_{\elbo}\fun{\tuple{\state, \action, \reward, \state'}, \tuple{\latentstate, \latentaction, \latentstate'}} - x_0}}
\]
where $x^{\star} > 0$ is the scale, $k > 0$ is the growth rate, and $x_0$ is the location of the logistic.
In practice, we set $x^{\star}$ to the desired maximum priority and we maintain upper ($L^{\max}$) and lower bounds ($L^{\min}$) on the loss during learning.
Along training steps, we tune these values according to the current loss and we set $x_0 = \frac{L^{\max} - L^{\min}}{2}$ and $k = \frac{x^{\star}}{L^{\max} - L^{\min}}$.

\begin{figure}
    \begin{subfigure}{0.495\textwidth}
        \centering
        \includegraphics[width=\textwidth]{ressources/cartpole_histogram.pdf} \\
        \caption{CartPole}
    \end{subfigure}
    \begin{subfigure}{0.495\textwidth}
        \centering
        \includegraphics[width=\textwidth]{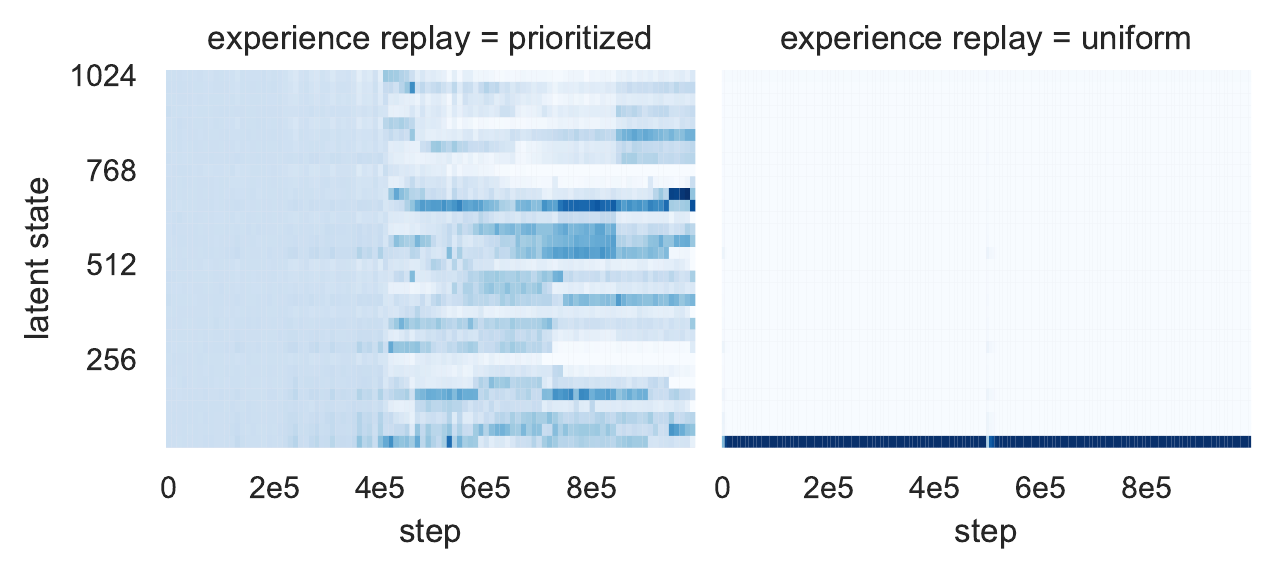} \\
        \caption{MountainCar}
    \end{subfigure}\\
    \begin{subfigure}{0.495\textwidth}
        \centering
        \includegraphics[width=\textwidth]{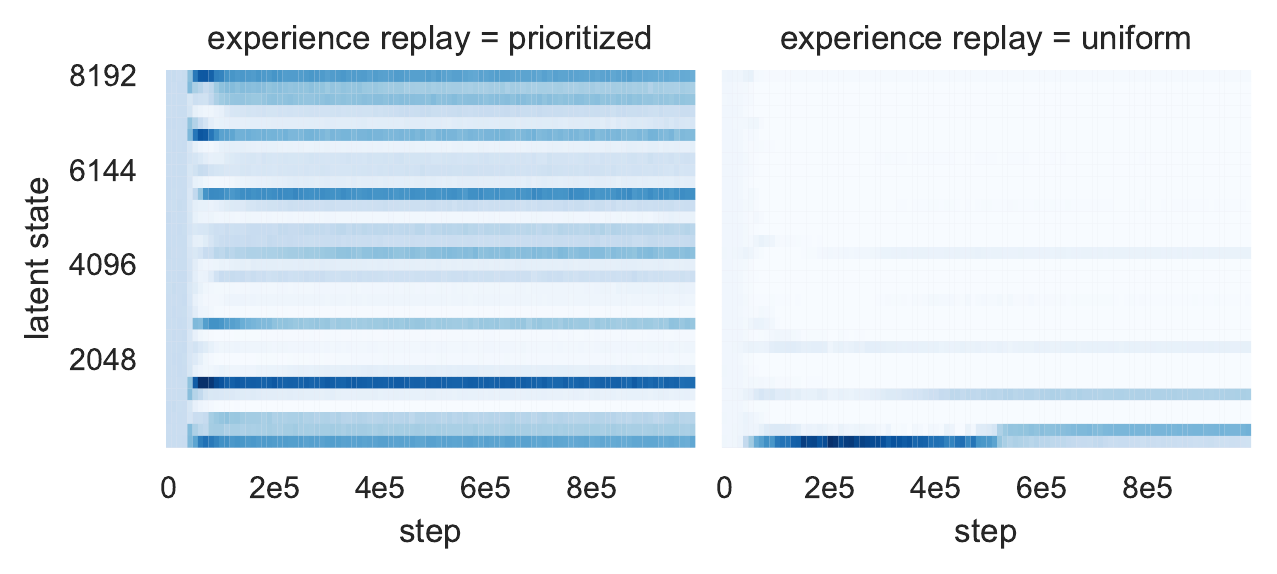} \\
        \caption{Acrobot}
    \end{subfigure}
    \begin{subfigure}{0.495\textwidth}
        \centering
        \includegraphics[width=\textwidth]{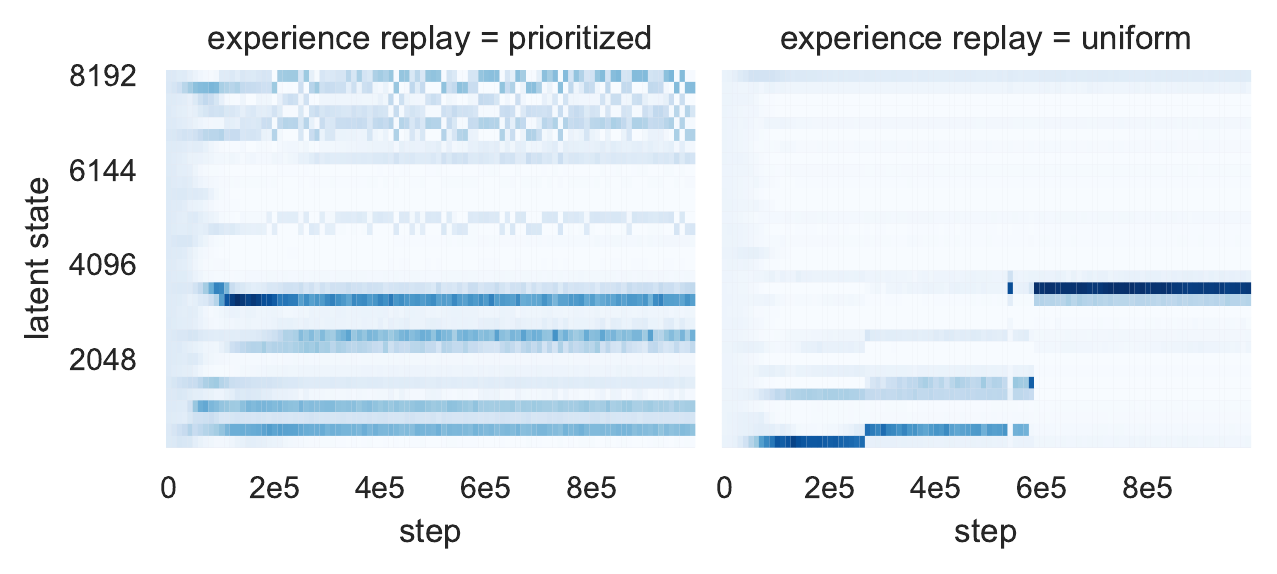} \\
        \caption{Pendulum}
    \end{subfigure}
    \caption{Latent space distribution along training steps. The intensity of the blue hue corresponds to the frequency of latent states produced by $\embed$ during training. We compare a bucket-based prioritized against a simple uniform experience replay. The latent space learned with transitions sampled from the uniform replay buffer collapses to (a) two, (b) a single, (c)~and~(d) few latent state(s).}
    \label{appendix:fig:prioritized-replay}
\end{figure}

\section{Details on RL Policy Distillation}
Concretely, the goal of $\latentpolicy_{\decoderparameter}$ is to mimic the behavior of $\policy$ through the latent spaces.
However, executing $\latentpolicy_{\decoderparameter}$ in $\mdp$ via $\embed_{\encoderparameter}$ and $\embeda_{\encoderparameter, \decoderparameter}$ as described in Sect.~\ref{section:deepmdp} to evaluate our latent space model (e.g., Lem.~\ref{lemma:pac-loss}) may
result in performance loss compared to that offered under $\policy$.
This is due to the abstraction learned via the ELBO: we do not expect a zero-distortion by encoding the state-action space through the VAE (and $\mdp_{\latentpolicy_{\decoderparameter}}$ to behave exactly like $\mdp_{\policy}$), but to minimize it.
Due to this remaining distortion, decisions taken according to $\latentpolicy_{\decoderparameter}$ may result in unreachable states under $\policy$, causing $\latentpolicy_{\decoderparameter}$ to behave poorly in that particular states since the learning process only allows to learn from transitions produced via $\policy$.
We propose two approaches that can be used alone or together to alleviate this problem.

\smallparagraph{Globally-robust RL policy.}~%
In general, policies $\policy$ learned through deep-RL provide \emph{local} performance, meaning they provide good performance in states $\state \in \states$ that are likely to be reached from $\sinit$ by executing $\policy$.
It is, however, often suitable to distil and verify \emph{globally-robust} RL policies, i.e., policies trained to provide good \emph{global} performance, in a wider range of states, in particular those that are likely to be reached through the distillation.
Training globally robust RL policies can be achieved by allowing the agent to learn (or pursuing its training) in a modified version of $\mdp$, where $\sinit$ is picked at random from $\states$.
Training $\policy$ this way will usually take longer since this may require more exploration.
The degree of randomness can be decided according to the exploration/exploitation trade-off and the additional training time allowed. 
Moreover, we argue that this additional time is acceptable in our context since enabling and applying model checking are time-consuming tasks by nature.

\smallparagraph{$\varepsilon$-mimic.}~Assuming $\policy$ is sufficiently robust for the distillation, we allow the encoder to process states reachable under both $\latentpolicy_{\decoderparameter}$ and $\policy$ by learning from a mixture $\hat{\policy}^{\varepsilon}_{\decoderparameter}$ of the two, defined as $\hat{\policy}^{\varepsilon}_{\decoderparameter}\fun{\action \mid \state} = \policy\fun{\action \mid \state} \cdot (1 - \varepsilon) + \latentpolicy_{\decoderparameter}\fun{\action \mid \state} \cdot \varepsilon$ for all $\state \in \states$, $\action \in \actions$, given $\varepsilon \in \mathopen[0, 1 \mathclose]$.
Given the robustness assumption, states entered according to a decision $\action \sim \latentpolicy_{\decoderparameter}\fun{\sampledot \mid \state}$ (produced with probability $\varepsilon$) should be efficiently processed
by $\policy$.
In practice, we start with a high value of $\varepsilon$ (e.g., $\varepsilon=\nicefrac{1}{2}$) to encourage exploration of states possibly reachable under $\latentpolicy_{\decoderparameter}$ and we anneal it to $0$ during training.

\section{Experiments}\label{appendix:experiments}
\subsection{Setup}
We used \textsc{TensorFlow} \texttt{2.4.1} \citepAR{tensorflow2015-whitepaper} to implement the neural networks modeling our parameterized distributions constituting our variational model, optimize the loss function ($\elbo$), and running our PAC approximation schemes.
Precisely, we used \textsc{TensorFlow Probability} \texttt{0.12.2} \citepAR{dillon2017tensorflow} to handle the probabilistic components of the VAE (e.g., distributions, reparameterization tricks, etc.), as well as \textsc{TF-Agents} \texttt{0.7.1} \citepAR{TFAgents} to handle the RL parts of the framework, coupled with \textsc{Reverb} replay buffers \citepAR{cassirer2021reverb}.

The experiments have been driven on a cluster running under \texttt{CentOS Linux 7 (Core)} composed of a mix of nodes containing Intel processors with the following CPU microarchitectures:
(i) \texttt{10-core INTEL E5-2680v2}, (ii) \texttt{14-core INTEL E5-2680v4}, and (iii) \texttt{20-core INTEL Xeon Gold 6148}.
We used $8$ cores and $128$ GB of memory for each of our experiments.

\subsection{Hyper-parameters}
Each distribution parameters (locations, scales, logits) are inferred by neural networks (multilayer perceptrons) composed of two dense hidden layers and $256$ cells by layer.
For all our experiments, we used the \textsc{TensorFlow}'s implementation of the Adam optimizer \citepAR{DBLP:journals/corr/KingmaB14} to learn the neural networks weights.
We fixed the size of the prioritized experience replay to $10^{6}$.
The training starts when the agent has collected $10^{4}$ transitions in $\mdp$.
Every time a transition is added to the replay buffer, its priority is set to the maximum value.
We used minibatches of size $128$ to optimize the loss function (i.e., $\elbo$) and we applied a minibatch update every time the agent executing $\policy$ performed $16$ steps in $\mdp$.
This allows the VAE to process at least $\nicefrac{1}{8}$ of the new transitions collected by sampling from the replay buffer (more details in \citealt{DBLP:journals/corr/SchaulQAS15}).
We fixed $\omega$ to $0.4$ as suggested by \citet{DBLP:journals/corr/SchaulQAS15}.

\smallparagraph{Annealing schemes.}~During training, we anneal different hyper-parameters to $0$ or $1$ according to the following annealing schemes:
(i) we start annealing the parameters at step $t_0 = 10^{4}$, 
(ii) let $\zeta \in \mathopen[ 0, \infty \mathclose]$, we anneal $\zeta$ to $0$ via $\zeta_t = \zeta \cdot (1 - \tau_{\zeta})^{t - t_0}$, or
(iii) let $\zeta \in \mathopen[ 0, 1]$, we anneal $\zeta$ to $1$ via $\zeta_{t} = \zeta + (1 - \zeta)\cdot (1 - (1 - \tau_{\zeta})^{t - t_0})$
for step $t > t_0$ and annealing term $\tau_{\zeta}$.

\smallparagraph{Unshared parameters.}~The list of single hyper-parameters used for the different environments is given in Table~\ref{table:hyperparameters}.
The activation function of all cells of the neural networks was chosen between relu and leaky relu, the learning rate of the optimizer in $\mathopen[10^{-4}, 10^{-3}\mathclose]$, the regularizer scale factor $\alpha$ from $\set{0.1, 1, 10, 100}$, $\varsigma$ in $\mathopen[0, 1\mathclose]$, the values of each annealing term in $\mathopen[ 10^{-6}, 10^{-4} \mathclose]$, and we tested both loss- and bucket-based experience replay buffers.

\begin{table}
    \centering
    \begin{tabular}{llllll}
\toprule
{} &           CartPole &        MountainCar &            Acrobot &           Pendulum &        LunarLander \\
\midrule
activation                                             &  leaky relu &  leaky relu &  relu &  relu &  relu \\
learning rate                                          & $10^{-3}$ & $10^{-3}$ & $10^{-4}$ & $10^{-4}$ & $10^{-4}$ \\
RL policy                                              &  DQN &  DQN &  DQN (robust) &  SAC (robust) &  SAC \\
$\log_2 \left | \latentstates \right|$                 &  9 &  10 &  13 &  13 &  16 \\
$\left| \latentactions \right|$                        &  / &  / &  / &  3 &  5 \\
$\temperature_{1}^{\scriptscriptstyle \latentstates}$  & $\nicefrac{2}{3}$ & $\nicefrac{2}{3}$ & $\nicefrac{2}{3}$ & $\nicefrac{2}{3}$ & $\nicefrac{2}{3}$ \\
$\temperature_{2}^{\scriptscriptstyle \latentstates}$  & $\nicefrac{1}{2}$ & $\nicefrac{1}{2}$ & $\nicefrac{1}{2}$ & $\nicefrac{1}{2}$ & $\nicefrac{1}{2}$ \\
$\temperature_{1}^{\scriptscriptstyle \latentactions}$ &  / &  / &  / & $\nicefrac{1}{2}$ & $\nicefrac{1}{4}$ \\
$\temperature_{2}^{\scriptscriptstyle \latentactions}$ &  / &  / &  / & $\nicefrac{1}{3}$ & $\nicefrac{1}{6}$ \\
$\tau_{\temperature_{1}}$                              & $10^{-6}$ & $10^{-6}$ & $10^{-6}$ & $10^{-6}$ & $10^{-6}$ \\
$\tau_{\temperature_{2}}$                              & $2\cdot 10^{-6}$ & $2\cdot 10^{-6}$ & $2\cdot 10^{-6}$ & $2\cdot 10^{-6}$ & $2\cdot 10^{-6}$ \\
$\alpha$                                               & $10^{1}$ & $10^{1}$ & $10^{1}$ & $10^{1}$ & $10^{1}$ \\
$\alpha_{\scriptscriptstyle \actions}$                                             &  / &  / &  / & $1$ & $10^{-1}$ \\
$\tau_{\alpha}$                                        & $10^{-5}$ & $10^{-5}$ & $7.5\cdot 10^{-5}$ & $7.5\cdot 10^{-5}$ & $10^{-5}$ \\
$\tau_{\beta}$                                         & $5\cdot 10^{-5}$ & $5\cdot 10^{-5}$ & $7.5\cdot 10^{-5}$ & $7.5\cdot 10^{-5}$ & $5\cdot 10^{-5}$ \\
prioritized experience replay                          &  bucket &  bucket &  bucket &  loss &  loss \\
$\varsigma$                                      & $\nicefrac{1}{3}$ & $\nicefrac{1}{3}$ & $3\cdot 10^{-1}$ & $3\cdot 10^{-1}$ & $3\cdot 10^{-1}$ \\
$\tau_{\omega}$                                        & $7\cdot 10^{-5}$ & $7.5\cdot 10^{-5}$ & $7\cdot 10^{-5}$ & $10^{-5}$ & $10^{-5}$ \\
$\varepsilon$                                          & $0$ & $0$ & $\nicefrac{1}{2}$ & $\nicefrac{1}{2}$ & $0$ \\
$\tau_{\varepsilon}$                                   &  / &  / & $10^{-5}$ & $10^{-5}$ &  / \\
\bottomrule
\end{tabular}

    \caption{Hyper-parameter choices for the different environments.
    We write
    $\temperature^X_Y$ for the temperature parameter used for the concrete relaxation of Bernoulli distributions when $X = \latentstates$ (resp. Categorical distributions when $X = \latentactions$).
    This is the encoder distribution if $Y=1$ and the one of the latent transition function (resp. distilled policy) else, if $Y=2$. 
    Moreover, we write $\alpha_{\scriptscriptstyle \actions}$ for the scale factor of the action entropy regularizer where the final entropy regularization term is $\alpha \cdot \fun{\entropy{\encoder} + \alpha_{\scriptscriptstyle \actions} \cdot \entropy{\encoder^{\actions}}}$.}
    \label{table:hyperparameters}
\end{table}

\subsection{Discrete Latent Dynamics}
As pointed out by \citet{DBLP:conf/icml/CorneilGB18}, the latent transition function learned via the continuous relaxation takes a simple form when discretized (i.e., when $\temperature_{\probtransitions_{\decoderparameter}} \to 0$):
the latent variables produced this way via $\latentprobtransitions_{\decoderparameter}$ are independent Bernoullis, meaning each bit of $\latentstate' \sim \latentprobtransitions_{\decoderparameter}\fun{\sampledot \mid \latentstate, \latentaction}$ are independent, for any $\latentstate \in \latentstates, \action \in \latentactions$.
To remedy this, we adopt the approach of \citet{DBLP:conf/icml/CorneilGB18} and we reconstruct the latent dynamics via \emph{frequency estimation} (e.g., \citealtAR{DBLP:conf/cav/BazilleGJS20}) to check the abstraction quality in a PAC manner (cf. Fig.~\ref{fig:plots}): for each environment, we evaluated the latent space model quality based on frequency-estimated latent dynamics from transitions collected by (repeatedly) simulating the environment for $2\cdot 10^5$ steps.

\subsection{Labeling functions}
The labeling functions $\labels$ we used for each environment are detailed in Table~\ref{appendix:table:labels}.
\begin{table}
\centering
\small
\begin{tabular}{l|l|l|l|l}
\hline
Environment &
  $\states \subseteq$ &
  Description, for $\vect{\state} \in \states$ &
  Labeling function $\labels\fun{\vect{\state}}$ &
  $\atomicprops = \set{p_1, \dots, p_n}$ \\ \hline
CartPole &
  $\R^4$ &
  \begin{tabular}[c]{@{}l@{}}\tabitem $\vect{\state}_1$: cart position\\ \tabitem $\vect{\state}_2$: cart velocity\\ \tabitem $\vect{\state}_3$: pole angle (rad)\\ \tabitem $\vect{\state}_4$: pole velocity at tip\end{tabular} &
  $\begin{aligned}\langle &\condition{<}\fun{\vect{\state}_1, 1.5},\\& \condition{<}\fun{\vect{\state}_3, 0.15} \rangle \end{aligned}$ &
  \begin{tabular}[c]{@{}l@{}}\tabitem $p_1$: safe cart position\\ \tabitem $p_2$: safe pole angle\end{tabular} \\ \hline
MountainCar &
  $\R^2$ &
  \begin{tabular}[c]{@{}l@{}}\tabitem $\vect{\state}_1$: position\\ \tabitem $\vect{\state}_2$: velocity\end{tabular} &
  $\begin{aligned}\langle &\condition{\geq}\fun{\vect{\state}_1, \nicefrac{1}{2}}, \\ & \condition{\geq}\fun{\vect{\state}_1, \nicefrac{-1}{2}}, \\ & \condition{\geq}\fun{\vect{\state}_2, 0}\rangle \end{aligned}$ &
  \begin{tabular}[c]{@{}l@{}}\tabitem $p_1$: target position\\ \tabitem $p_2$: right-hand side of the mountain\\ \tabitem $p_3$: car going forward\end{tabular} \\ \hline
Acrobot &
  $\R^6$ &
  \begin{tabular}[c]{@{}l@{}}Let $\theta_1, \theta_2 \in \mathopen[0, 2\pi \mathclose]$ be the angles\\ of the two rotational joints,\\ \tabitem $\vect{\state}_1 = \cos\fun{\theta_1}$\\ \tabitem $\vect{\state}_2 = \sin\fun{\theta_1}$\\ \tabitem $\vect{\state}_3 = \cos\fun{\theta_2}$\\ \tabitem $\vect{\state}_4 = \sin\fun{\theta_2}$\\ \tabitem $\vect{\state}_5$: angular velocity 1\\ \tabitem $\vect{\state}_6$: angular velocity 2\end{tabular} &
  $\begin{aligned}\langle &\condition{>}\fun{-\vect{\state}_1 -\vect{\state}_3 \cdot \vect{\state}_1 + \vect{\state}_4 \cdot \vect{\state}_2, 1}, \\ & \condition{\geq}\fun{\vect{\state}_1, 0}, \\ & \condition{\geq}\fun{\vect{\state}_2, 0} \\ & \condition{\geq}\fun{\vect{\state}_3, 0} \\ & \condition{\geq}\fun{\vect{\state}_4, 0}\\ & \condition{\geq}\fun{\vect{\state}_5, 0}\\ & \condition{\geq}\fun{\vect{\state}_6, 0}\rangle \end{aligned}$ &
  \begin{tabular}[c]{@{}l@{}}\tabitem $p_1$: RL agent target \\\tabitem $p_2$: $\theta_1 \in [0, \nicefrac{\pi}{2}] \cup [\nicefrac{3\pi}{2}, 2\pi]$ \\
  \tabitem $p_3$: $\theta_1 \in [0, \pi]$\\
  \tabitem $p_4$: $\theta_2 \in [0, \nicefrac{\pi}{2}] \cup [\nicefrac{3\pi}{2}, 2\pi]$\\
  \tabitem $p_5$: $\theta_2 \in [0, \pi]$\\
  \tabitem $p_6$: positive angular velocity (1)\\
  \tabitem $p_7$: positive angular velocity (2)\\
  \end{tabular} \\\hline
Pendulum &
  $\R^3$ &
  \begin{tabular}[c]{@{}l@{}}Let $\theta \in \mathopen[0, 2\pi \mathclose]$ be the joint angle\\ \tabitem $\vect{\state}_1 = \cos\fun{\theta}$\\ \tabitem $\vect{\state}_2 = \sin\fun{\theta}$\\ \tabitem $\vect{\state}_3$: angular velocity\end{tabular} &
  $\begin{aligned}\langle &\condition{\geq}\fun{\vect{\state}_1, \cos\fun{\nicefrac{\pi}{3}}}, \\ & \condition{\geq}\fun{\vect{\state}_1, 0}, \\ & \condition{\geq}\fun{\vect{\state}_2, 0} \\ & \condition{\geq}\fun{\vect{\state}_3, 0} \rangle \end{aligned}$ &
    \begin{tabular}[c]{@{}l@{}}
    \tabitem $p_1$: safe joint angle \\
    \tabitem $p_2$: $\theta \in [0, \nicefrac{\pi}{2}] \cup [\nicefrac{3\pi}{2}, 2\pi]$ \\
  \tabitem $p_3$: $\theta \in [0, \pi]$\\
  \tabitem $p_4$: positive angular velocity\\
  \end{tabular} \\\hline
LunarLander &
  $\R^8$ &
  \begin{tabular}[c]{@{}l@{}}\tabitem $\vect{\state}_1$: horizontal coordinates\\ \tabitem $\vect{\state}_2$: vertical coordinates\\ \tabitem $\vect{\state}_3$: horizontal speed\\ \tabitem $\vect{\state}_4$: vertical speed\\ \tabitem $\vect{\state}_5$: ship angle\\ \tabitem $\vect{\state}_6$: angular speed\\ \tabitem $\vect{\state}_7$: left leg contact\\ \tabitem $\vect{\state}_8$: right leg contact\end{tabular} &
  based on the heuristic function &
  \begin{tabular}[c]{@{}l@{}}
  \tabitem $p_1$: unsafe angle\\
  \tabitem $p_2$: leg ground contact\\
  \tabitem $p_3$: lands too rapidly\\
  \tabitem $p_4$: left inclination\\
  \tabitem $p_5$: right inclination\\
  \tabitem $p_6$: motors shut down
  \end{tabular}
\end{tabular}
\caption{Labeling functions used in our experiments.
We provide a short description of the state space and the meaning of the atomic propositions forming the labels.
Recall that labels and $\atomicprops$ are respectively binary and one-hot encoded.
Let $n = |\atomicprops|$ with $\atomicprops = \set{p_1, \dots, p_n}$, we write $p_{i}$ for the one-hot vector of size $n$ with the $i^{\text{th}}$ entry set to $1$, with $i \in [n] \setminus \set{0}$.
The LunarLander labeling function is based on the heuristic function of the OpenAI's implementation \citep{DBLP:journals/corr/BrockmanCPSSTZ16}.}
\label{appendix:table:labels}
\end{table}

\section{Reproducibility of the Results}
\smallparagraph{Seeding.}~The code we provide allows for setting random seeds to run the experiments, making the overall shape of the Figures we present in this paper reproducible.
However, the exact values obtained in each particular instance are not: our prioritized experience replays (and those of \textsc{TF-Agents}) rely on \textsc{Reverb} replay buffers, handling randomness via the \textsc{Abseil} \texttt{C++} library{\footnote{\url{https://abseil.io}}} which does not allow for manual seeding.
All other libraries using randomness are seeded properly.

\bibliographystyleAR{aaai22}
\bibliographyAR{references}
\fi

\end{document}